\documentclass[accepted]{uai2025} % after acceptance, for a revised version; 
% also before submission to see how the non-anonymous paper would look like 
                        
%% There is a class option to choose the math font
% \documentclass[mathfont=ptmx]{uai2025} % ptmx math instead of Computer
                                         % Modern (has noticeable issues)
% \documentclass[mathfont=newtx]{uai2025} % newtx fonts (improves upon
                                          % ptmx; less tested, no support)
% NOTE: Only keep *one* line above as appropriate, as it will be replaced
%       automatically for papers to be published. Do not make any other
%       change above this note for an accepted version.

%% Choose your variant of English; be consistent
\usepackage[american]{babel}
% \usepackage[british]{babel}

%% Some suggested packages, as needed:
\usepackage{natbib} % has a nice set of citation styles and commands
    \bibliographystyle{plainnat}
    
\usepackage{mathtools} % amsmath with fixes and additions
\usepackage{booktabs} % commands to create good-looking tables
\usepackage{tikz} % nice language for creating drawings and diagrams

\usepackage{amsmath}
\usepackage{amsfonts}
\usepackage{amssymb}
\usepackage{amsthm}
\usepackage[capitalise]{cleveref}
\usepackage[makeroom]{cancel}
\usepackage[thinc]{esdiff}
\usepackage{subcaption}
\usepackage{caption}
\usepackage{multirow}
\usepackage{adjustbox}
\usepackage{tabularx}
\usepackage{algorithm}
\usepackage{algpseudocode}

\newtheorem{proposition}{Proposition}[section]

\newcommand{\x}{\mathbf{x}}

\renewcommand{\u}{u_{\theta}}
\newcommand{\udet}{u_{\theta_{\text{det}}}}
\newcommand{\udetopt}{u_{\theta_{\text{det}}^*}}
\newcommand{\mse}[1]{\lVert#1\rVert^2_2}
\newcommand{\slike}{\sigma_{\text{Like}}}

\newcolumntype{Y}{>{\centering\arraybackslash}X}

%% Provided macros
% \smaller: Because the class footnote size is essentially LaTeX's \small,
%           redefining \footnotesize, we provide the original \footnotesize
%           using this macro.
%           (Use only sparingly, e.g., in drawings, as it is quite small.)

%% Self-defined macros
 % just an example

\title{Improved Uncertainty Quantification in Physics-Informed Neural Networks Using Error Bounds and Solution Bundles}

% The standard author block has changed for UAI 2025 to provide
% more space for long author lists and allow for complex affiliations
%
% All author information is authomatically removed by the class for the
% anonymous submission version of your paper, so you can already add your
% information below.
%
% Add authors
\author[1]{\href{mailto:ptflores1@uc.cl?Subject=Improved Uncertainty Quantification in Physics-Informed Neural Networks Using Error Bounds and Solution Bundles}{Pablo Flores}{}}
\author[2]{Olga Graf}
\author[3]{Pavlos Protopapas}
\author[1]{Karim Pichara}
% Add affiliations after the authors
\affil[1]{%
    Departamento de Ciencia de la Computación, Pontificia Universidad Católica de Chile, Santiago, Chile
}
\affil[2]{%
    Department of Computer Science, University of Tübingen, Germany
}
\affil[3]{%
    John A. Paulson School of Engineering and Applied Sciences, Harvard University, Cambridge, Massachusetts 02138, USA
  }
  
  \begin{document}
\maketitle

\begin{abstract}
  Physics-Informed Neural Networks (PINNs) have been widely used to obtain solutions to various physical phenomena modeled as Differential Equations. As PINNs are not naturally equipped with mechanisms for Uncertainty Quantification, some work has been done to quantify the different uncertainties that arise when dealing with PINNs. In this paper, we use a two-step procedure to train Bayesian Neural Networks that provide uncertainties over the solutions to differential equation systems provided by PINNs. We use available error bounds over PINNs to formulate a heteroscedastic variance that improves the uncertainty estimation. Furthermore, we solve forward problems and utilize the obtained uncertainties when doing parameter estimation in inverse problems in cosmology.
\end{abstract}

\section{Introduction}

\textit{Physics-Informed Neural Networks} (PINNs), first proposed by \cite{lagaris_artificial_1997}, solve differential equations (DEs) by embedding the physics of the problem into the network, eliminating the need for extra data. PINNs offer advantages over traditional solvers: they are continuous, differentiable, and parallelizable, allowing them to bypass the need for previous time steps. However, challenges remain in computing solution errors or error bounds, with ongoing research \citep{liu_evaluating_2022, liu_residual-based_2023, de_ryck_generic_2022}. Specific equations like Navier-Stokes and Elasticity are also under study \citep{de_ryck_error_2022, de_ryck_error_2022-1, guo_energy-based_2022}.

Since their introduction, PINNs have rapidly gained attention, being applied to heat transfer \citep{cai_physics-informed_2021}, wave equations \citep{rasht-behesht_physics-informed_2022}, and fluid mechanics \citep{jin_nsfnets_2021, cai_physics-informed_2021-1, mao_physics-informed_2020}. Efforts to address failure modes and optimization issues are ongoing \citep{nabian_efficient_2021, steger_how_2022, krishnapriyan_characterizing_2021, daw_mitigating_2022}, with interest in Bayesian PINNs for uncertainty quantification (UQ) \citep{yang_b-pinns_2020, linka_bayesian_2022, graf_uncertainty_2021, psaros_uncertainty_2023}.

The use of NNs for solving ODEs and PDEs was pioneered by \cite{lagaris_artificial_1997} and later advanced by \cite{raissi_physics-informed_2019}, who introduced PINNs for forward and inverse problems. Forward problems solve for the solution given the equation and boundary conditions, while inverse problems estimate unknown parameters. Inverse problems were implemented by \cite{raissi_physics-informed_2019} by treating these parameters as trainable variables.

Although effective, this approach lacks a robust mechanism for uncertainty quantification, which Bayesian Neural Networks (BNN) address by introducing distributions over network weights. However, simply applying Bayesian methods does not fully leverage the available information about solution accuracy. This is where Solution Bundles \citep{flamant_solving_2020} come into play. Solution Bundles enable statistical analysis over multiple possible solutions, providing a more comprehensive view of uncertainty in both the solution and the equation parameters.

In cosmology, DEs aim to explain the universe's expansion. Testing new models against observations typically involves statistical analysis to determine parameter bounds. By combining Solution Bundles with BNNs, we enhance uncertainty quantification, offering a more reliable method for parameter estimation. Our contributions include:

\begin{itemize}
    \item Introducing error-bound-based heteroscedastic variance for better uncertainty quality.
    \item Solving forward problems for cosmological equations while quantifying uncertainties.
    \item Applying Solution Bundles to solve inverse problems for parameter estimation.
\end{itemize}
%%%%% BACKGROUND

\section{Background}
\subsection{Problem Formulation}\label{sec:prob_form}
We adopt a slightly different formulation from \cite{psaros_uncertainty_2022}. The DEs we will work with can be defined as follows:
\begin{align}
    \mathcal{F}_{\lambda}[u(\x)] &= f(\x), \quad \x\in \Omega, \label{de1} \\
    \mathcal{B}_{\lambda}[u(\x)] &= b(\x), \quad \x\in \Gamma,
\end{align}
where $\x$ is the space-time coordinate, $\Omega$ is a bounded domain with boundary $\Gamma$, $f(\x)$ is the source term, $u$ is the solution of the system, $\mathcal{F}_{\lambda}$ is a differential operator, $\mathcal{B}_{\lambda}$ and $b(\x)$ are the boundary conditions (BCs) operator and term, respectively, and $\lambda$ denotes the parameters of the system.

In this paper, we will focus on problems where the operators $\mathcal{F}_{\lambda}$, $\mathcal{B}_{\lambda}$ and terms $f(\x)$, $b(\x)$ are known. If $\lambda$ is assumed to be known, the goal is to find the solution $u$, referred to as the \textit{forward problem}. Conversely, if $u$ is known and the aim is to estimate $\lambda$, then this is known as the \textit{inverse problem}.

\subsection{Physics-Informed Neural Networks}\label{pinns}
A \textit{Physics-Informed Neural Network} uses a neural network, $u_{\theta}(\x)$, to approximate the true solution $u(\x)$ of a differential equations system. As discussed by \cite{psaros_uncertainty_2023}, PINNs can be trained by minimizing a fitting dataset's Mean Squared Error (MSE). A fitting dataset $\mathcal{D} = \{\mathcal{D}_f, \mathcal{D}_b\}$ is composed of noisy observations of $f$, $\mathcal{D}_f = \{\x_i, f_i\}_{i=0}^{N_f}$, and noisy BCs data $\mathcal{D}_b = \{\x_i, b_i\}_{i=0}^{N_b}$
\begin{equation}
    \label{eq:loss}
    \mathcal{L}(\theta)=\frac{w_f}{N_f}\sum_{i=0}^{N_f}\mse{f_{\theta}(\x_i) - f_i} + \frac{w_b}{N_b}\sum_{i=0}^{N_b}\mse{b_{\theta}(\x_i) - b_i}
\end{equation}

Where $w_f$ and $w_b$ are weighting constants. \cite{psaros_uncertainty_2023} call this setup the \textit{Forward Deterministic PDE Problem}, they also describe \textit{Mixed Deterministic PDE Problem} where the objective is to obtain solutions for $u$ and $\lambda$, and the \textit{Mixed Stochastic PDE Problem} which deals with stochastic PDEs.

In this work, we adopt a different optimization problem to solve the forward deterministic problem and address the inverse problem separately rather than solving both forward and inverse problems simultaneously.

\subsection{Solving Forward Problems} \label{sec:forward-problems}
We begin by defining the \textit{residual} of a differential equation as
\begin{equation}\label{eq:res_deff}
    r_{\theta}(\x) = \mathcal{F}_{\lambda}[u_{\theta}(\x)] - f(\x)
\end{equation}

The computation of $\mathcal{F}_{\lambda}[u_{\theta}(\x)]$ is easy to implement thanks to automatic differentiation provided by Deep Learning frameworks such as PyTorch \citep{paszke_pytorch_2019}. This formulation allows PINNs to be trained as a self-supervised network.
%by minimizing the module of residuals.
Since any solution $u^*$ to the differential equation satisfies $\mathcal{F}_{\lambda}[u^*(\x)] - f(\x) = 0$, we train the network to minimize the square of the residual
\begin{equation}
    \label{eq:res-loss}
    \min_{\theta} \frac{1}{N_r}\sum_i^{N_r}r_{\theta}^2(\x_i).
\end{equation}

Usually, $\x_i$ are sampled from $\Omega$ with a uniform distribution or by taking an equally-spaced subset. It is important to note that we do not deal with noisy data, unlike the work by \cite{raissi_physics-informed_2019}. 

\subsection{Enforcing Boundary Conditions}\label{sec:enforcing}
While adding a term for BCs in the loss (see \cref{eq:loss}) when dealing with data is a good way to incorporate such knowledge, there is no guarantee that the conditions will be satisfied. When BCs are known rather than observed in the data, we can use the transformation introduced by \cite{lagaris_artificial_1997} that enforces the BCs to be always satisfied. This is achieved by writing the approximate solution as a sum of two terms: 
\begin{equation}
    \label{eq:reparam}
    \tilde u_{\theta}(\x):= A(\x) + F(\x, \u(\x))
\end{equation}
where $A$ does not depend on the network parameters $\theta$ and it satisfies the BCs. Since we need $\tilde u_{\theta}$ to satisfy BCs, $F$ is constructed so it does not contribute to them. This transformation is also used by \cite{graf_uncertainty_2021, chen_neurodiffeq_2020}.

\paragraph{One-dimensional Initial Value Problem}
Given an initial condition $u_0=u(t_0)$, we consider a transformation
\begin{equation}
    \tilde u_{\theta}(t):=u_0 + (1-e^{-(t-t_0)}) \u(t)
\end{equation}
In \cref{sup:enforce} of the Supplementary Material we show the enforcing of two-dimensional Dirichlet BCs. Similar transformations can be defined for Neumann and mixed BCs. 
%The Python package Neurodiffeq \cite{chen_neurodiffeq_2020} implements various of these.

\subsection{Solution Bundles}\label{sec:bundles}

Solution Bundles \citep{flamant_solving_2020} extend PINNs by allowing the network to take equation parameters $\lambda \in \Lambda$ as inputs. This modification allows the network to approximate a variety of solutions to a parameterized differential equation without the need to retrain for each value of $\lambda$.

In \cref{sec:prob_form} we considered a unique value for $\lambda$ and one BC term $b(\x)$. When working with Solution Bundles instead, we have subsets for the equation parameters and BCs.
\begin{gather*}
    \lambda \in \Lambda \subset \mathbb{R}^p\\
    b(\x) \in B(\x) \subset \mathbb{R}^n
\end{gather*}
$\Lambda$, $B(\x)$ are such subsets that will be used to train the NN. $p$ and $n$ are the dimensionality of the equation parameters and the system's state variable, respectively. The transformation described in \cref{sec:enforcing} can also be used for Solution Bundles. \cref{eq:reparam} turns into:
\begin{equation}
    \tilde u_{\theta}(\x,\lambda):= A(\x,\lambda) + F(\x, \u(\x,\lambda))
\end{equation}

\subsubsection{Training Solution Bundles}
\cite{flamant_solving_2020} proposed a weighting function for the residual loss (\cref{eq:res-loss}) to influence how the approximation error is distributed across the training region. 
%They show a bound for the error $|\u(\x) - u(\x)|$ that depends on the Lipschitz constant of the forcing function $f$. This bound grows exponentially as the training points move away from the initial conditions. Because of this, they propose a weighting function that captures this behavior and that also grows exponentially. 
However, in this work, for simplicity, we stick to the unweighted residual loss
\begin{align}
&\tilde{r}_{\theta} := \mathcal{F}_{\lambda}[u_{\theta}(\x, \lambda)] - f(\x)\\
\label{eq:bundle_loss}&\min_{\theta} \sum_i^N\sum_j^M\tilde{r}_{\theta}^2(\x_i, \lambda_j).
\end{align}
where we have redefined $r_{\theta}$ from \cref{eq:res_deff} for the Solution Bundle case as $\tilde{r}_{\theta}$.

\subsubsection{Solution Bundles for Solving Inverse Problems}
So far we have explained how to solve forward problems, that is, finding a solution to a differential equations system. As we described, for the case of PINNs, the solving step is an optimization problem.

On the other hand, inverse problems aim to find a differential system that best describes some collected data. For this, we assume the differential system can explain the phenomena observed and we seek to estimate the system's parameters for a given dataset.

An effective approach to addressing the parameter estimation problem involves statistical analysis, specifically through the application of Bayesian methods. This approach necessitates multiple computations of the system's solution, corresponding to each parameter value. Traditional numerical methods require the discretization and integration process to be performed for each of these solutions. In contrast, Solution Bundles eliminate the need for retraining for each parameter value, thereby expediting the computational process. A comprehensive explanation of the probabilistic setup for Bayesian parameter estimation is provided in \cref{sec:prob_inverse}.

\subsection{Error Bounds for PINNs}\label{sec:eb}
Good quality uncertainties should correlate with the true error of a solution. Since the true error is not accessible, we use error bounds in \cref{sec:interpretation} to improve the uncertainty in the Bayesian NNs.

In \citep{liu_evaluating_2022, liu_residual-based_2023}, the authors present algorithms for computing error bounds on PINNs. These bounds apply to linear ODEs, systems of linear ODEs, non-linear ODEs in the form \( \epsilon v^k \)\footnote{Here, \( v \) is a variable and \( |\epsilon| \ll 1. \)}, as well as certain types of PDEs. These algorithms are independent of the NN architecture and depend solely on the structure of the equation as defined in \cref{de1} and the residuals of the DE.

For the Solution Bundle setup, the network error is denoted as $\eta(\x, \lambda) := u(\x, \lambda) - \tilde{u}_{\theta}(\x, \lambda) $, and the error bound is represented by a scalar function $\mathbb{B}$ such that
\begin{equation}
    \lVert\eta(\x, \lambda)\rVert \leq \mathbb{B}(\x, \lambda)
\end{equation}

In this work we use the error bounds developed for first-order linear ODEs with nonconstant coefficient. In \cref{sup:errorbounds} of the Supplementary Material provide the algorithm for its computation and for first-order linear ODEs with constant coefficient. \cref{sup:tight_bounds} and \cref{alg:eb} describe how obtain tight bounds.

\subsection{Bayesian Neural Networks} \label{sec:bnn_for_uq}
To quantify uncertainty, we adopt a Bayesian perspective on the neural networks. We do this by viewing the neural network as a probabilistic model $p(y | x, \theta)$ and placing a prior distribution over its parameters $p(\theta)$. Using Bayes' theorem, we can get the posterior distribution of $\theta$:
\begin{equation}
    p(\theta|\mathcal{D})=\frac{p(\mathcal{D}|\theta) p(\theta)}{p(\mathcal{D})}
\end{equation}
$p(\mathcal{D}|\theta)$ is the likelihood distribution over a dataset $\mathcal{D}=\{(\mathbf{x}_i, y_i)\}_{i=1}^{N}$.

The posterior distribution allows us to make predictions about unseen data by taking expectations. Consider a new data point $\hat{x}$, we can obtain the probability of the output $y$ being $\hat{y}$ as:
\begin{equation}
   \label{posterior_predictive}
    p(\hat{y}|\hat{x}, \mathcal{D})=\mathbb{E}_{p(\theta|\mathcal{D})}[p(\hat{y}|\hat{x}, \theta)]=\int_{\Theta}p(\hat{y}|\hat{x}, \theta)p(\theta|\mathcal{D})d\theta
\end{equation}
Usually, the integral in \cref{posterior_predictive} is analytically intractable, and we have to resort to Monte Carlo (MC) approximations that can be computed as:
\begin{equation}
    \label{mcposterior}
    p(\hat{y}|\hat{x}, \mathcal{D})\approx \sum_{i=1}^M p(\hat{y}|\hat{x}, \theta_i), \text{ where }\theta_i\sim p(\theta|\mathcal{D})
\end{equation}

\section{Shortcomings of Residual-Based UQ Methods in PINNs}

A direct application of Bayesian Neural Networks to PINNs is straightforward by placing the likelihood over the residuals $r$ of the PINN solution given a coordinate point $\x$ i.e. $p(r | \x, \theta)$. For more details see \cref{sec:baseline}.

However, evaluating a PINN solution based only on the residual loss \cref{eq:res-loss} can be misleading. A low residual at $\x$ does not guarantee a low solution error at $\x$. In Control Theory, for instance, an Integral Controller drives steady-state error to zero, yet errors may still exist:
\begin{equation}
    u(t) = k_i\int_{0}^te(\tau)d\tau
\end{equation}
where $u$ is the control input, $e(\tau):=r(\tau) - y(\tau)$ the error, $r$ the reference, $y$ the system output, and $k_i$ the integral gain \citep{astrom_feedback_2008}. For a DE $\diff{u}{t} = f(t)$ with IC $u(t_0) = u_0$, the approximation error $\hat u(\tau) - u(\tau)$ can be expressed as:
\begin{equation}
\label{eq:mybound}
    \hat u(\tau) - u(\tau) = \int_{t_0}^{\tau}R(t)dt
\end{equation}
where $R(t) := \diff{\hat u}{t} - f(t)$, given the approximation satisfies the ICs. Proof of \cref{eq:mybound} is in \cref{prop:bounds} of the Supplementary Material. \cref{eq:mybound} suggests that the residual at some point $\x$ is not enough to characterize the solution error at the same point $\x$.

These challenges, along with other \textit{Failure Modes} of PINNs, remain an active area of research \citep{krishnapriyan_characterizing_2021, wang_respecting_2022, penwarden_unified_2023}. Notably, \cite{wang_respecting_2022} identify an implicit bias in the PINN framework that can severely violate the temporal causality of dynamical systems. As a result, residual minimization at a given time $t_i$ may occur even when predictions at earlier times are inaccurate.

In \cref{fig:shortcomings} we see how a PINN solution has near maximum and minimum error for the same residual value. This illustrates the decoupling of residuals from solution errors, which can be attributed to a violation of temporal causality. Furthermore, \cref{fig:err_vs_res_lcdm,fig:err_vs_res_cpl,fig:err_vs_res_quint,fig:err_vs_res_hs} show how there is no clear correlation between residuals and solution errors. These figures were obtained by training a deterministic PINN on a cosmological model, the construction details are provided in \cref{sm:sol_dist_exp}.

Our experiments demonstrate that the baseline method produces uncalibrated predictive distributions and, in some cases, fails to approximate the true solutions. To address these limitations, we propose a two-step approach that incorporates error bounds into the solutions, leading to predictive distributions with improved calibration.

\begin{figure}
    \centering
    \includegraphics[width=\linewidth]{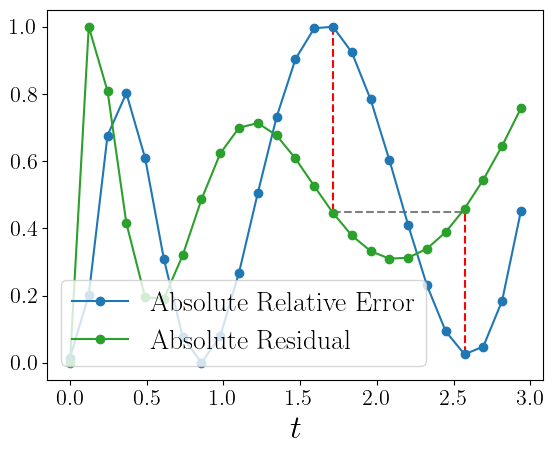}
    \caption{Normalized Absolute Residual vs Normalized Absolute Relative Error of a Deterministic NN Solution for CPL Model.}
    \label{fig:shortcomings}
\end{figure}

%\paragraph{Residuals Scale}
%By placing the likelihood over the residuals, we associate uncertainty with the variance of a random variable representing these residuals. This variance characterizes the dispersion in the magnitude of residuals rather than the solution to the DE. While this approach to uncertainty quantification may prove beneficial in certain applications, its utility may not extend universally. Notably, it precludes the construction of confidence intervals directly over the solution. 

%\paragraph{Error Bounds Leverage}
%In \cref{sec:eb}, we introduced the work done by \cite{liu_evaluating_2022, liu_residual-based_2023}, which provides algorithms to compute error bounds on PINN solutions. In addition to the flaws described, we can leverage these error bounds in our two-step approach to capture this information and enhance the uncertainty quality. In \cref{sec:improve-uq} we explain how we use the error bounds.

\section{Two-Step Bayesian PINNs}
We use a two-step approach to obtain uncertainties in the solutions of equations. In the first step, we train a PINN as a Solution Bundle that we refer to as the \textit{deterministic network}. We denote this network as $u_{\theta_{\text{det}}}: \Omega, \Lambda \rightarrow \mathbb{R}$ and $\theta_{\text{det}}^*$ as the parameters resulting after training.

Note that as shown in \cref{eq:bundle_loss}, this training step is carried out without the use of any data, but rather sampling from the network domain ($\Omega \times \Lambda$) and minimizing the network's residuals. \cref{fig:method} shows a diagram of the entire training process.

\begin{figure}
    \centering
    \includegraphics[width=\linewidth]{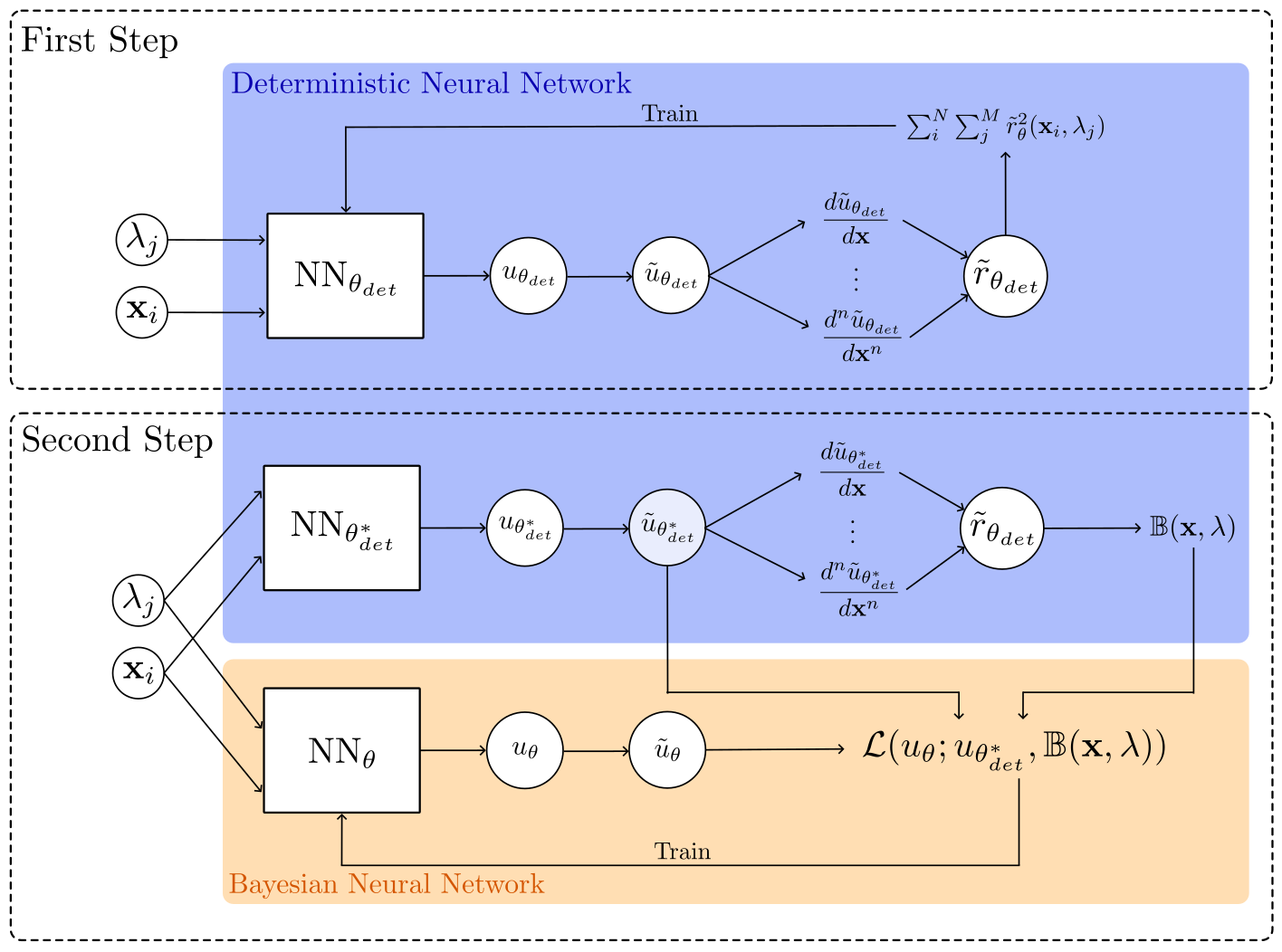}
    \caption{Diagram of Both Training Steps. In the Second Step $\theta_{\text{det}}^*$ are The Resulting Network Parameters From the First Step and $\mathcal{L}$ is The Likelihood Function.}
    \label{fig:method}
\end{figure}

\subsection{Bayesian Neural Network Training}\label{sec:bayes_training}
In the second step, we use the outputs of $u_{\theta_{\text{det}}^*}$ as targets to train a Bayesian Neural Network.

We construct a dataset by taking the space-time coordinates and equation parameters as independent variables, and the outputs of the deterministic net as dependent variables:\scriptsize
\begin{equation}
    \mathcal{D} = \{(\mathbf{x}_i, \lambda_i, u_{\theta_{\text{det}}^*}(\mathbf{x}_i, \lambda_i))\;|\; \mathbf{x}_i\in \Omega, \lambda_i \in \Lambda, u_{\theta_{\text{det}}^*}(\mathbf{x}_i, \lambda_i)\in\mathbb{R}\}_{i=1}^{N'}\nonumber
\end{equation}
\normalsize

%\subsubsection{Distributions}
We define a BNN $u_{\theta}$ by assigning distributions to the network parameters and dataset as it is done in \citep{graf_uncertainty_2021}. For all three methods described in \cref{sec:bnn_for_uq} we use a Gaussian prior:
\begin{equation*}
    \theta \sim \mathcal{N}(0, \sigma_{\text{prior}})
\end{equation*}
To formulate the likelihood, we assume a Gaussian additive noise on the \textit{observations}\footnote{Here we call $\udetopt(\x, \lambda)$ \textit{observations} even though they are not actual experimental observations.}
\begin{align}
    \label{eq:lik_noise}
    \udetopt(\x, \lambda) &= \u(\x, \lambda) + \eta(\x,\lambda),\\
    &\text{ where } \eta(\x,\lambda) \sim \mathcal{N}(0, \slike(\x,\lambda))\nonumber
\end{align}
This setup is often used in machine learning, but we employ it with a different interpretation; see \cref{sec:interpretation}.
In this paper, we model the standard deviation $\slike(\x, \lambda)$ as known, but it can also be modeled as unknown. Given \cref{eq:lik_noise}, the resulting likelihood is:
% \begin{align}
%     &p(\udetopt(\x, \lambda)\;|\;\x,\lambda, \theta) = \mathcal{N}(\u(\x, \lambda),\slike(\x,\lambda))\label{eq:sample_dist}\nonumber\\
%     &p(\mathcal{D}|\theta) = \prod_{i=1}^{N'} p(\udetopt(\mathbf{x}_i, \lambda_i)\;|\; \mathbf{x}_i, \lambda_i, \theta)
% \end{align}
\begin{equation}
    p(\mathcal{D}|\theta) = \prod_{i=1}^{N'} \mathcal{N}(\u(\x, \lambda),\slike(\x,\lambda))\label{eq:sample_dist}
\end{equation}

We assume Gaussian noise in \cref{eq:lik_noise} due to its simplicity, tractability (see \cref{sec:post_aprox}), and suitability as a starting point for evaluating our methodology. To assess this assumption, we analyze the distributional behavior of $\udet$, with details provided in \cref{sm:sol_dist_exp}. \cref{fig:lcdm_sol_dist,fig:cpl_sol_dist,fig:quint_sol_dist_1,fig:quint_sol_dist_2,fig:hs_sol_dist_1,fig:hs_sol_dist_2,fig:hs_sol_dist_3,fig:hs_sol_dist_4,fig:hs_sol_dist_5} illustrate the solution distributions obtained by a PINN for various cosmological models. Although these distributions deviate from a perfect Gaussian, they are not significantly distant, suggesting that a Gaussian approximation is a reasonable initial choice.

\subsubsection{Posterior Distribution Approximation Methods}\label{sec:post_aprox}
To apply the MC approximation in \cref{posterior_predictive}, we must generate samples from the posterior distribution \( p(\theta|\mathcal{D}) \). Given the complexity of neural networks and the need for computational efficiency, we compare three methods: Neural Linear Models (NLMs), Bayes By Backpropagation (BBB), and Hamiltonian Monte Carlo (HMC). NLMs offer a lightweight approach by approximating the posterior in a linearized feature space, making them suitable for speed-critical applications. BBB uses variational inference to achieve greater accuracy while maintaining tractability for larger networks. HMC is the most advanced method, producing highly accurate samples at the expense of increased computational complexity. Comparing these techniques allows for an evaluation of the trade-offs between accuracy and computational feasibility.

\paragraph{Neural Linear Models (NLMs)} NLM represent Bayesian linear regression using a neural network for the feature basis, with only the final layer parameters treated as stochastic. Using Gaussian prior and likelihood allows for tractable inference, leading to the posterior and predictive distributions given by:
\begin{align}
    &p(u \;|\; \mathbf{x}, \lambda, \mathcal{D}) = \mathcal{N}(\mu_{\text{NLM}}(\x, \lambda), \sigma_{\text{NLM}}(\x, \lambda))\label{eq:nlm}\\
    &\mu_{\text{NLM}}(\x, \lambda)=\Phi_{\theta}(\mathbf{x},\lambda) \mu_{\text{post}}\nonumber\\
    &\sigma_{\text{NLM}}(\x, \lambda) = \slike^2(\mathbf{x},\lambda)+\Phi_{\theta}(\mathbf{x},\lambda)\Sigma_{\text{post}}\Phi_{\theta}^T(\mathbf{x},\lambda)\nonumber
\end{align}
where \( \mu_{\text{post}} \) and \( \Sigma_{\text{post}} \) are posterior parameters, and \( \Phi_{\theta}(\mathbf{x}, \lambda) \) is the learned feature map. The details of $\mu_{\text{post}}$ and $\Sigma_{\text{post}}$ computation can be found in \cref{sup:nlm_details} of the Supplementary Material.

\paragraph{Bayes By Backpropagation (BBB)} BBB approximates the posterior of network parameters using variational inference (VI) and minimizes the KL divergence between the true posterior and a variational distribution \( q(\theta | \rho) \). This results in an approximation \( q(\theta |\rho) \) that is used to generate Monte Carlo samples for \( p(\theta | \mathcal{D}) \). We assume independent parameters under a mean-field approximation\citep{bishop_pattern_2006}, defining \( q(\theta | \rho) = \prod_{i=1}^N q(\theta_i|\rho_i) \), using a Gaussian distribution for the variational posterior \citep{blundell_weight_2015}.

\paragraph{Hamiltonian Monte Carlo (HMC)} HMC employs a Metropolis-Hastings algorithm \citep{metropolis_equation_2004, hastings_monte_1970} to draw samples from \( p(\theta | \mathcal{D}) \) by simulating particle movement through Hamiltonian dynamics. To enhance sampling efficiency, we utilize the No-U-Turn Sampler \citep{homan_no-u-turn_nodate}.

\begin{figure*}[t]
    \begin{subfigure}{\linewidth}
         \centering
         \includegraphics[width=.63\linewidth]{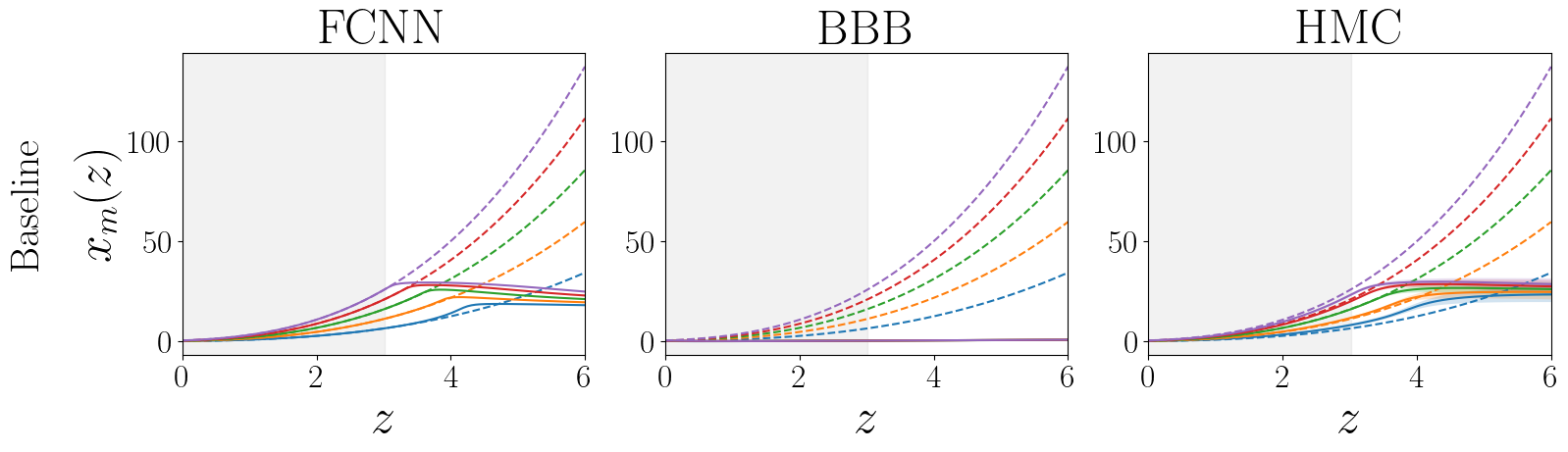}
         \label{fig:lambda_res}
     \end{subfigure}
    \begin{subfigure}{\linewidth}
         \centering
         \includegraphics[width=.8\linewidth]{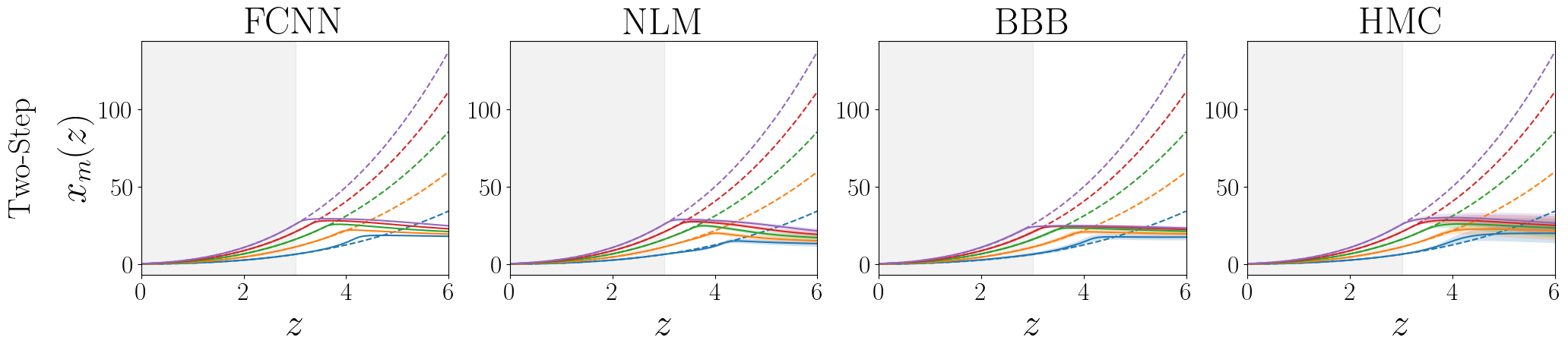}
         \label{fig:lambda}
     \end{subfigure}
     \begin{subfigure}{\linewidth}
         \centering
         \hspace{1.5em}
         \includegraphics[width=.93\linewidth]{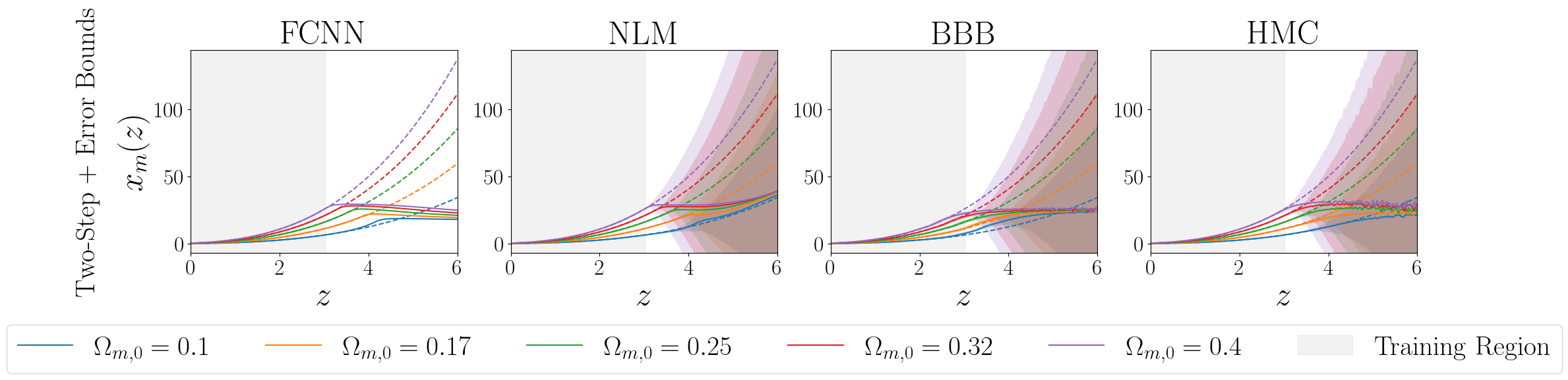}
         \label{fig:lambda_eb}
     \end{subfigure}
    %\vspace{1em}
    \caption{Examples of $\Lambda$CDM Bayesian Solutions Obtained Using the Bundle Solver. Analytic Solutions are Presented in Dotted Lines.}
    \label{fig:lambda}
\end{figure*}

\subsection{Improving Predictive Uncertainty with Error Bounds}
\label{sec:interpretation}
\label{sec:improve-uq}
When learning from observations \(\hat{y}\), the standard approach assumes these observations have an error \(\epsilon\):
\begin{equation}
\label{eq:obs_error}
    \hat{y} = y_w(\mathbf{x}) + \epsilon
\end{equation}

where \(y_w(\mathbf{x})\) is a deterministic function parameterized by \(w\). This formulation attributes all errors to the observations. However, we can introduce an additional error term \(\eta\) for model error:
\begin{equation}
    \label{eq:model_error}
    \hat{y} = y_w(\mathbf{x}) + \eta + \epsilon
\end{equation}

Separating model error \(\eta\) from observational error \(\epsilon\) is typically challenging due to their additive nature. If we know there is no observational error, we can use \cref{eq:model_error} similarly to \cref{eq:obs_error}. This is the case in \cref{sec:bayes_training}, where the dataset has \textit{observations} \(\udetopt(\mathbf{x}, \lambda)\) free of noise:
\begin{equation}
    \udetopt(\mathbf{x}, \lambda) = \u(\mathbf{x}, \lambda) + \cancel{\epsilon} + \eta
\end{equation}

However, the approximate solution \(\udetopt\) may still have prediction errors. We model this error with a Gaussian distribution with standard deviation \(\slike(\mathbf{x}, \lambda)\), which we treat as known in two ways:

\begin{enumerate}
    \item \textbf{Homoscedastic:} \\
    \begin{equation*}
        \sigma_{\text{Like}}(\mathbf{x}, \lambda)=\begin{cases}
              0  & \mathbf{x} \in \Gamma, \\
              \text{const} & \mathbf{x} \not\in \Gamma
                \end{cases} 
    \end{equation*}
    We set $\sigma_{\text{Like}}$ to be zero on the boundaries because we enforce BC always to be met.
    \item \textbf{Error Bounds Based Heteroscedastic:} \\
    \begin{equation*}
        \slike(\x, \lambda) = \mathbb{B}(\x, \lambda)
    \end{equation*}
    This choice ensures we are taking into account the error made by $\udetopt$ in the predictive uncertainty.
\end{enumerate}

\subsection{Baseline}\label{sec:baseline}
In the literature, the standard approach to formulating the \textit{forward} problem with Bayesian PINNs relies on noisy observations of $u(\x)$, $f(\x)$ and $b(\x)$ \citep{psaros_uncertainty_2022,zou_uncertainty_2023,yang_b-pinns_2020}. However, since our setup does not incorporate observations, we adopt as a baseline a Bayesian neural network with the likelihood defined over the residuals:
\begin{equation}
    p(\mathcal{D}_{r}|\theta) = \prod_{i=1}^{N^r} \mathcal{N}(r_{\theta}(\x, \lambda),\sigma_{r})
\end{equation}

The training procedure follows the approach described in \cref{sec:bayes_training}, with the exception that we could not implement NLM in this setting. This limitation arises because the residuals depend on the solution, thereby breaking the analytical tractability required for NLM.

\section{Probabilistic Formulation of Inverse Problems}
\label{sec:prob_inverse}
The Bayesian framework can be used to estimate the parameters $\lambda$ of a DE system. For a given set of observations $\mathcal{O} = \{(\mathbf{x}_i, \mu_i, \sigma_i) \mid \mathbf{x}_i \in \Omega, \mu_i \in \mathbb{R}, \sigma_i \in \mathbb{R}^+ \}_{i=1}^O$, we seek to find the posterior distribution $p(\lambda| \mathcal{O})$ of the parameters. Here $\mu_i$ and $\sigma_i$ are the mean and standard deviation of the observations at some point $\x_i$, respectively.

Assume we have a probability distribution over the solutions $p(u_{\lambda}(\x)|\x, \lambda)$, then using Bayes' Theorem, the posterior can be computed as $p(\lambda|\mathcal{O})=p(O|\lambda)/ p(\lambda)$.
The likelihood $p(\mathcal{O}|\lambda)$ is obtained by marginalizing over the solutions as:
\begin{align}
        \label{eq:marginalization}
        p(\x_i, \mu_i, \sigma_i|\lambda)
        &=\int_{\mathcal{U}}p(\x_i, \mu_i, \sigma_i|u_{\lambda}(\x_i))\cdot p(u_{\lambda}(\x_i)|\x_i, \lambda)du\\
        &\approx \frac{1}{M}\sum_{j=1}^Mp(\mu_i, \sigma_i|u_{\lambda}^{(j)}(\x_i))\\
        p(\mathcal{O}|\lambda)&=\prod_{i=1}^O p(\x_i, \mu_i, \sigma_i|\lambda)
\end{align}
where $u_{\lambda}^{(j)}\sim p(u_{\lambda}(\x_i)|\x_i, \lambda)$.

The distribution induced by FCNNs can be interpreted as a Delta distribution, where the function learned by the network outputs a specific value \( \udet(\mathbf{x}) \) for a given input $\x$. In contrast, BNNs naturally provide a distribution over solutions through their posterior predictive, thus \cref{eq:marginalization} results in a marginalization over the posterior predictive distribution \( p(u(\mathbf{x}_i, \lambda)|\mathbf{x}_i, \lambda, \mathcal{D}) \).

%\subsection{Choosing Distributions}
We use a uniform distribution to aim for an uninformative prior for $\lambda$. In the same way \cite{chantada_cosmological_2022} did, to define the likelihood $p(\mathcal{O}|\lambda)$, we assume the observations are normally distributed around the true solution, i.e., $\mu_i \sim \mathcal{N}(u(\mathbf{x}_i), \sigma_i)$.
% Thus, \cref{eq:bayes_marginalization} becomes
% \begin{equation*}
%     p(x_i, \mu_i, \sigma_i|\lambda)=\int_{\mathcal{U}}\mathcal{N}(\mu_i;u(x_i, \lambda), \sigma_i)\cdot p(u(x_i,\lambda)|x_i, \lambda, \mathcal{D})du
% \end{equation*}

Having defined the prior and likelihood, we can apply a sampling algorithm to approximate $p(\lambda|\mathcal{O})$. We use the \texttt{emcee} Python package \citep{foreman-mackey_emcee_2013} which implements the samplers introduced by \cite{goodman_ensemble_2010}.

\begin{table}
    \centering
    \caption{Cosmology Equations Variables and Parameters. Parameters Marked With * are Inputs in the Bundle Solution, The Remainder are Needed to Compute the Hubble Parameter.}
    \begin{adjustbox}{max width=\columnwidth}
    \begin{tabular}{c c c c}
         \hline \hline
         Equation & Type & Variables & Parameters \\ \hline
         $\Lambda$CDM & Linear Equation & $x_m$ & $\Omega_{m,0}^*, H_0$ \\
         CPL & Linear Equation & $x_{DE}$ & $w_0^*,w_1^*,\Omega_{m,0}, H_0$ \\
         Quintessence & Non-linear System & $x,y$ & $\lambda^*, \Omega_{m,0}^*, H_0$ \\
         HS & Non-linear System & $x,y,v,\Omega,r$ & $b^*, \Omega_{m,0}^*, H_0$ \\
         \hline \hline
    \end{tabular}
    \end{adjustbox}
    \label{tab:cosmo_eqs}
\end{table}

%%%%%% EXPERIMENTS

% \begin{figure}[t]
%     \begin{subfigure}{\linewidth}
%          \centering
%          \includegraphics[width=\linewidth]{figures/lcdm/calibration_lcdm_bundle_all.png}
%      \end{subfigure}
%      \begin{subfigure}{\linewidth}
%          \centering
%          \includegraphics[width=\linewidth]{figures/lcdm/calibration_lcdm_bundle_eb_all.png}
%      \end{subfigure}
%     %\vspace{1em}
%     \caption{Calibration Curves Of Bundle Solutions For $\Lambda$CDM.}
%     \label{fig:lambda_calibration}
% \end{figure}

\section{Experiments on Cosmology Models}\label{sec:experiments}
This section specifies the cosmological equations we used to test our methodology. These equations were solved with PINNs by \cite{chantada_cosmological_2022}. For brevity, we describe the $\Lambda$CDM and Parametric Dark Energy models. The details of Quintessence and $f(R)$ gravity (or HS for the name of the authors Hu and Sawicki) are provided in \cref{sup:cosmology}. However, \cref{tab:cosmo_eqs} lists the variables and parameters of each cosmological model.

\paragraph{$\Lambda$CDM}
The equation and initial conditions are
\begin{align*}
    \frac{dx_m}{dz}&=\frac{3x_m}{1+z} \\
    x_m(z=0)&=\Omega_{m,0}
\end{align*}
where $\Omega_{m,0}$ is a parameter. Having a solution to this equation, the Hubble parameter can be obtained as
\begin{equation}
    H(z)=H_0\sqrt{x_m(z)+1-\Omega_{m,0}}
\end{equation}
here $H_0$ is also a parameter.

\paragraph{Parametric Dark Energy} 
We refer to this model as CPL for the names of the authors Chevallier, Polarski and Linder \citep{linder_exploring_2003,chevallier_accelerating_2001}. The equation and initial conditions are
\begin{align*}
    \frac{dx_{DE}}{dz}&=\frac{3x_{DE}}{1+z}\left(1+w_0+\frac{w_1z}{1+z}\right)\\
    x_{DE}(z=0) &= 1-\Omega_{m,0}
\end{align*}
where $w_0, w_1$ and $\Omega_{m,0}$ are parameters. Having a solution to this equation, the Hubble parameter can be obtained as
\begin{equation}
    H(z) = H_0\sqrt{\Omega_{m,0}(1+z)^3 +x_{DE}(z)}
\end{equation}
here $H_0$ is also a parameter.

For brevity, we provide the error bounds computation details in \cref{sup:errorbounds} of the Supplementary Material.

\paragraph{Forward Problems}\label{sec:exp_forward} 
We trained \(\udetopt\) using methods from \citep{chantada_cosmological_2022}, then built a dataset \(\mathcal{D}\) to train the BNNs. We compared homoscedastic and error-bounds-based heteroscedastic variance for \(\Lambda\)CDM and CPL equations, while providing homoscedastic variance results for Quintessence and HS. The networks were trained using regular PINNs and Solution Bundles, referred to as \textit{Forward} and \textit{Bundle}, respectively, with implementation details in \cref{sec:imp_details} of the Supplementary Material. We provide the complete implementation in a code repository.\footnote{\url{https://github.com/ptflores1/improved-pinn-uq}}

We evaluated the NN solutions by calculating the Median Relative Error (MRE) against analytical solutions for \(\Lambda\)CDM and CPL, and numerical solutions for Quintessence and HS, using a Runge-Kutta method. We also computed the median residual, and miscalibration area (MA) \citep{chung_uncertainty_2021} to assess uncertainty quality. Results are summarized in \cref{tab:results}, with additional metrics in \cref{tab:test_quantiles,tab:metrics_forward,tab:metrics_bundle}.

For the CPL model, we utilized a reparameterization from \citep{chantada_cosmological_2022} that separates equation parameters from Bundle Network training, although this method is incompatible with Neural Linear Models (NLM).

\begin{table}
    \centering
    \caption{Evaluation Metrics of All Bundle Networks and Equations in the Testing Region. Here we Use 2S for Two-step and EB for Error Bounds.}
    \label{tab:results}
    \begin{adjustbox}{width=\linewidth}
        \begin{tabular}{c c | c c c }
    \hline
    \hline
    
    Equation & Method & Median RE & Median Residual & Miscal. Area\\
     \hline
     \multirow{9}{*}{\rotatebox[origin=c]{90}{$\Lambda$CDM}} & FCNN & $\mathbf{0.001}$ & 0.0 & -  \\
     \cline{2-5}
     & BBB & 0.978 & 0.147 & 0.491 \\
     & HMC & 0.084 & 0.198 & 0.4 \\
     \cline{2-5}
     & NLM + 2S & 0.018 & 8.157 & 0.183 \\
     & BBB + 2S & 0.035 & 0.963 & 0.143 \\
     & HMC + 2S & 0.004 & 0.197 & 0.123 \\
     \cline{2-5}
     & NLM + 2S + EB & 0.002 & 10.783 & 0.063 \\
     & BBB + 2S + EB & 0.047 & 2.476 & $\mathbf{0.05}$ \\
     & HMC + 2S + EB & 0.003 & 0.272 & 0.098 \\
     \hline
    
     \multirow{7}{*}{\rotatebox[origin=c]{90}{CPL}} & FCNN & $\mathbf{0.0}$ & 0.001 & - \\
     \cline{2-5}
     & BBB & 0.128 & 0.014 & 0.199 \\
     & HMC & $\mathbf{0.0}$ & 0.01 & 0.317 \\
     \cline{2-5}
     & BBB + 2S & 0.063 & 0.006 & 0.15 \\
     & HMC + 2S & 0.004 & 0.005 & 0.255 \\
     \cline{2-5}
     & BBB + 2S + EB & 0.033 & 0.003 & 0.177 \\
     & HMC + 2S + EB & 0.011 & 0.004 & $\mathbf{0.145}$ \\
     \hline
    
     \multirow{6}{*}{\rotatebox[origin=c]{90}{Quint.}} & FCNN & 0.007 & 0.0 & -  \\
     \cline{2-5}
     & BBB & 0.094 & 0.012 & 0.151 \\
     & HMC & $\mathbf{0.002}$ & 0.005 & 0.147 \\
     \cline{2-5}
     & NLM + 2S & 0.096 & 0.175 & 0.136 \\
     & BBB + 2S & 0.12 & 0.011 & 0.119 \\
     & HMC + 2S & 0.016 & 0.002 & $\mathbf{0.048}$ \\
     \hline
    
     \multirow{6}{*}{\rotatebox[origin=c]{90}{HS}} & FCNN & $\mathbf{0.001}$ & 0.0 & -  \\
     \cline{2-5}
     & BBB & 0.393 & 0.083 & 0.449 \\
     & HMC & 0.226 & 0.01 & 0.455 \\
     \cline{2-5}
     & NLM + 2S & 0.259 & 1.666 & $\mathbf{0.315}$ \\
     & BBB + 2S & 0.296 & 0.13 & 0.396 \\
     & HMC + 2S & 0.286 & 0.345 & 0.486 \\
    \end{tabular}
    \end{adjustbox}
\end{table}

\paragraph{Inverse Problems}
For the inverse problem, we used 30 measurements of the Hubble parameter \(H\) from the Cosmic Chronometers (CC) method \citep{simon_constraints_2005, stern_cosmic_2010, moresco_improved_2012, zhang_cong_four_2014, moresco_raising_2015, moresco_6_2016}. Each measurement includes a tuple \((z_i, H^{obs}, \sigma_{H^{obs}})\), indicating redshift, observed mean Hubble parameters, and their standard deviation. The CC dataset is available in \cref{tab:z_H_values} of the Supplementary Material, with \cref{eq:inverse2} used as the likelihood of observations.

We estimated equation parameters using the Solution Bundles from the forward step and performed inference with the \texttt{emcee} package \citep{foreman-mackey_emcee_2013}, running 32 chains for 10,000 steps, resulting in 320,000 samples per parameter. \cref{tab:inverse,tab:sigmas} show the results and their concordance with values found in the literature, respectively.

\section{Discussion}
The FCNN consistently achieves the lowest median RE and mean residual across all equations, outperforming Bayesian methods in terms of accuracy. This highlights a tradeoff between equipping PINNs with uncertainty quantification and maintaining accuracy. In this work, our primary focus is on uncertainty rather than pure accuracy, as a well-calibrated model with higher error is generally more desirable than an uncalibrated model with low error. Properly calibrated uncertainties enable a meaningful assessment of prediction reliability.

From \cref{tab:results}, we observe that the baseline Bayesian methods—BBB and HMC with a residual likelihood—exhibit significantly higher errors compared to the deterministic network, particularly for the CPL and HS equations. This effect is especially pronounced for BBB, which fails to approximate the $\Lambda$CDM solution. Additionally, both methods exhibit high miscalibration areas, indicating that their uncertainty estimates are poorly calibrated.

Introducing our two-step (2S) method substantially improves calibration, as evidenced by the reduction in miscalibration areas across all cases. For instance, applying 2S to BBB in the $\Lambda$CDM equation reduces the miscalibration area from 0.491 to 0.143.

Further incorporating error bounds into the two-step method enhances uncertainty calibration even further. Notably, BBB + 2S + EB achieves the lowest miscalibration area (0.05) for the $\Lambda$CDM equation, while HMC + 2S + EB provides the best calibration in CPL (0.145). However, certain cases exhibit excessively large mean residuals, such as in CPL, where BBB + 2S + EB results in a residual of $9.58 \times 10^8$. This suggests that while error bounds improve calibration, they may introduce numerical instability in the metrics due to some samples' extremely large errors.

For the Quintessence equation, HMC + 2S achieves the best calibration (0.048) while maintaining a low median RE (0.016), demonstrating an effective balance between accuracy and uncertainty quantification. In contrast, for the HS equation, while the two-step method improves calibration, overall performance remains suboptimal, with high median RE and residuals persisting across all Bayesian approaches.

\begin{table}
    \centering
    \caption{Evaluation Metrics of All Bundle Networks and Equations in the Training Region. Here we Use 2S for Two-step and EB for Error Bounds.}
    \label{tab:train}
    \begin{adjustbox}{width=\linewidth}
        \begin{tabular}{c c | c c c }
    \hline
    \hline
    
    Equation & Method & Median RE & Median Residual & Miscal. Area\\
     \hline
     \multirow{9}{*}{\rotatebox[origin=c]{90}{$\Lambda$CDM}} & FCNN & $\mathbf{0.0}$ & 0.0 & -  \\
     \cline{2-5}
     & BBB & 0.961 & 0.12 & 0.491 \\
     & HMC & 0.041 & 0.07 & 0.358 \\
     \cline{2-5}
     & NLM + 2S & 0.003 & 4.212 & 0.402 \\
     & BBB + 2S & 0.008 & 0.238 & 0.269 \\
     & HMC + 2S & 0.001 & 0.038 & 0.307 \\
     \cline{2-5}
     & NLM + 2S + EB & 0.001 & 4.211 & $\mathbf{0.016}$ \\
     & BBB + 2S + EB & 0.01 & 0.228 & 0.248 \\
     & HMC + 2S + EB & 0.001 & 0.014 & 0.251 \\
     \hline
    
     \multirow{7}{*}{\rotatebox[origin=c]{90}{CPL}} & FCNN & $\mathbf{0.0}$ & 0.001 & - \\
     \cline{2-5}
     & BBB & 0.052 & 0.016 & 0.161 \\
     & HMC & $\mathbf{0.0}$ & 0.009 & 0.482 \\
     \cline{2-5}
     & BBB + 2S & 0.024 & 0.009 & $\mathbf{0.085}$ \\
     & HMC + 2S & 0.001 & 0.008 & 0.436 \\
     \cline{2-5}
     & BBB + 2S + EB & 0.007 & 0.005 & 0.14 \\
     & HMC + 2S + EB & 0.001 & 0.008 & 0.159 \\
     \hline
    
     \multirow{6}{*}{\rotatebox[origin=c]{90}{Quint.}} & FCNN & 0.003 & 0.0 & -  \\
     \cline{2-5}
     & BBB & 0.059 & 0.005 & $\mathbf{0.09}$ \\
     & HMC & $\mathbf{0.001}$ & 0.003 & 0.279 \\
     \cline{2-5}
     & NLM + 2S & 0.063 & 0.097 & 0.326 \\
     & BBB + 2S & 0.047 & 0.002 & 0.165 \\
     & HMC + 2S & 0.006 & 0.0 & 0.179 \\
     \hline
    
     \multirow{6}{*}{\rotatebox[origin=c]{90}{HS}} & FCNN & $\mathbf{0.0}$ & 0.0 & -  \\
     \cline{2-5}
     & BBB & 0.373 & 0.162 & 0.426 \\
     & HMC & 0.257 & 0.121 & 0.43 \\
     \cline{2-5}
     & NLM + 2S & $\mathbf{0.244}$ & 0.201 & $\mathbf{0.252}$ \\
     & BBB + 2S & 0.284 & 0.094 & 0.387 \\
     & HMC + 2S & 0.279 & 0.111 & 0.427 \\
    \end{tabular}
    \end{adjustbox}
\end{table}

\begin{table}
    \centering
    \caption{Evaluation Metrics of All Bundle Networks and Equations in the OOD Region. Here we Use 2S for Two-step and EB for Error Bounds.}
    \label{tab:ood}
    \begin{adjustbox}{width=\linewidth}
        \begin{tabular}{c c | c c c }
    \hline
    \hline
    
    Equation & Method & Median RE & Median Residual & Miscal. Area\\
     \hline
     \multirow{9}{*}{\rotatebox[origin=c]{90}{$\Lambda$CDM}} & FCNN & 0.374 & 0.027 & -  \\
     \cline{2-5}
     & BBB & 0.986 & 0.168 & 0.495 \\
     & HMC & 0.316 & 11.979 & 0.444 \\
     \cline{2-5}
     & NLM + 2S & 0.449 & 10.569 & 0.43 \\
     & BBB + 2S & 0.363 & 12.672 & 0.495 \\
     & HMC + 2S & 0.31 & 12.054 & 0.424 \\
     \cline{2-5}
     & NLM + 2S + EB & $\mathbf{0.287}$ & 14.778 & $\mathbf{0.113}$ \\
     & BBB + 2S + EB & 0.375 & 10.695 & 0.155 \\
     & HMC + 2S + EB & 0.329 & 11.463 & 0.168 \\
     \hline
    
     \multirow{7}{*}{\rotatebox[origin=c]{90}{CPL}} & FCNN & $\mathbf{0.033}$ & 0.0 & - \\
     \cline{2-5}
     & BBB & 0.373 & 0.013 & 0.357 \\
     & HMC & 0.082 & 0.012 & 0.152 \\
     \cline{2-5}
     & BBB + 2S & 0.59 & 0.001 & 0.495 \\
     & HMC + 2S & 0.275 & 0.001 & $\mathbf{0.072}$ \\
     \cline{2-5}
     & BBB + 2S + EB & 0.958 & 0.001 & 0.216 \\
     & HMC + 2S + EB & 0.648 & 0.001 & 0.194 \\
     \hline
    
     \multirow{6}{*}{\rotatebox[origin=c]{90}{Quint.}} & FCNN & 0.022 & 0.016 & -  \\
     \cline{2-5}
     & BBB & 0.125 & 0.089 & 0.349 \\
     & HMC & $\mathbf{0.006}$ & 0.011 & $\mathbf{0.021}$ \\
     \cline{2-5}
     & NLM + 2S & 0.138 & 0.335 & 0.334 \\
     & BBB + 2S & 0.221 & 0.13 & 0.377 \\
     & HMC + 2S & 0.038 & 0.03 & 0.096 \\
     \hline
    
     \multirow{6}{*}{\rotatebox[origin=c]{90}{HS}} & FCNN & 0.374 & 0.027 & -  \\
     \cline{2-5}
     & BBB & 0.374 & 0.161 & 0.427 \\
     & HMC & 0.257 & 0.121 & 0.43 \\
     \cline{2-5}
     & NLM + 2S & 6.999 & 0.592 & $\mathbf{0.412}$ \\
     & BBB + 2S & 2.124 & 1.616 & 0.452 \\
     & HMC + 2S & 0.885 & 0.984 & 0.487 \\
    \end{tabular}
    \end{adjustbox}
\end{table}

In \cref{tab:train,tab:ood}, we present results separately for the training and OOD regions. In the training region, the two-step approach consistently improves both accuracy and miscalibration, and the addition of error bounds further enhances calibration across most equations. For example, in the $\Lambda$CDM case, combining two steps with error bounds yields the lowest miscalibration area among all methods, indicating effective uncertainty correction when the model operates within its training distribution.

In the OOD region, results are more mixed. For $\Lambda$CDM, the combination of two steps and error bounds significantly reduces miscalibration. However, in the CPL model, applying two steps increases miscalibration for BBB but reduces it for HMC; adding error bounds reverses this behavior—improving BBB but slightly worsening HMC. As shown in \cref{fig:cpl_examples_eb}, this is likely because FCNN closely tracks the ground truth in the OOD region, leading to narrow error bounds, while BBB and HMC tend to drift from the true function, which error bounds fail to capture accurately. For the more complex equations, Quintessence and HS, the benefits of the two-step method are less consistent, and no clear advantage is observed over the baseline approaches.

In \cref{tab:inverse} we can see how the Bayesian methods provide tighter intervals than deterministic nets. This is due to observations being more likely under the distributions generated by Bayesian methods. These uncertainties can be propagated through the approximated solution to obtain solution distributions that account for the uncertainties in the equation parameters.

\cref{tab:sigmas} shows the level of agreement of our results with the literature. Most of our results are in an agreement of $1\sigma$ or $2\sigma$. $\Lambda$CDM is an exception, where we observe higher disagreement due to the tight bounds obtained and discrepancy in the means.

Overall, our results demonstrate that leveraging error bounds within a two-step framework significantly enhances uncertainty quantification, particularly in calibration. However, the observed tradeoff between calibration and numerical stability suggests the need for further refinement, especially for highly complex equations such as HS.

Among Bayesian approaches, HMC generally achieves superior accuracy and calibration, but its computational cost is substantial. BBB, on the other hand, performs well when combined with our two-step procedure, offering a more computationally efficient and flexible alternative. There is considerable potential for improving BBB by adopting more expressive variational posterior distributions. Finally, NLM also demonstrates strong performance while being the least expensive method, though its applicability is limited by the underlying distributional assumptions.

% \begin{contributions} % will be removed in pdf for initial submission 
% 					  % (without ‘accepted’ option in \documentclass)
%                       % so you can already fill it to test with the
%                       % ‘accepted’ class option
%     Briefly list author contributions. 
%     This is a nice way of making clear who did what and to give proper credit.
%     This section is optional.

%     H.~Q.~Bovik conceived the idea and wrote the paper.
%     Coauthor One created the code.
%     Coauthor Two created the figures.
% \end{contributions}

% \begin{acknowledgements} % will be removed in pdf for initial submission,
% 						 % (without ‘accepted’ option in \documentclass)
%                          % so you can already fill it to test with the
%                          % ‘accepted’ class option
%     Briefly acknowledge people and organizations here.

%     \emph{All} acknowledgements go in this section.
% \end{acknowledgements}

% References
\bibliography{references}

\begin{thebibliography}{58}
\providecommand{\natexlab}[1]{#1}
\providecommand{\url}[1]{\texttt{#1}}
\expandafter\ifx\csname urlstyle\endcsname\relax
  \providecommand{\doi}[1]{doi: #1}\else
  \providecommand{\doi}{doi: \begingroup \urlstyle{rm}\Url}\fi

\bibitem[Akrami et~al.(2019)Akrami, Kallosh, Linde, and Vardanyan]{akrami_landscape_2019}
Yashar Akrami, Renata Kallosh, Andrei Linde, and Valeri Vardanyan.
\newblock The landscape, the swampland and the era of precision cosmology.
\newblock \emph{Fortschritte der Physik}, 67\penalty0 (1-2):\penalty0 1800075, January 2019.
\newblock ISSN 0015-8208, 1521-3978.
\newblock \doi{10.1002/prop.201800075}.
\newblock URL \url{http://arxiv.org/abs/1808.09440}.
\newblock arXiv:1808.09440 [astro-ph, physics:gr-qc, physics:hep-ph, physics:hep-th, physics:math-ph].

\bibitem[Armendariz-Picon et~al.(2000)Armendariz-Picon, Mukhanov, and Steinhardt]{armendariz-picon_dynamical_2000}
C.~Armendariz-Picon, V.~Mukhanov, and Paul~J. Steinhardt.
\newblock Dynamical {Solution} to the {Problem} of a {Small} {Cosmological} {Constant} and {Late}-{Time} {Cosmic} {Acceleration}.
\newblock \emph{Physical Review Letters}, 85\penalty0 (21):\penalty0 4438--4441, November 2000.
\newblock \doi{10.1103/PhysRevLett.85.4438}.
\newblock URL \url{https://link.aps.org/doi/10.1103/PhysRevLett.85.4438}.
\newblock Publisher: American Physical Society.

\bibitem[Bingham et~al.(2018)Bingham, Chen, Jankowiak, Obermeyer, Pradhan, Karaletsos, Singh, Szerlip, Horsfall, and Goodman]{bingham_pyro_2018}
Eli Bingham, Jonathan~P. Chen, Martin Jankowiak, Fritz Obermeyer, Neeraj Pradhan, Theofanis Karaletsos, Rohit Singh, Paul Szerlip, Paul Horsfall, and Noah~D. Goodman.
\newblock Pyro: {Deep} {Universal} {Probabilistic} {Programming}, October 2018.
\newblock URL \url{http://arxiv.org/abs/1810.09538}.
\newblock arXiv:1810.09538 [cs, stat].

\bibitem[Bishop(2006)]{bishop_pattern_2006}
Christopher~M. Bishop.
\newblock \emph{Pattern {Recognition} and {Machine} {Learning} ({Information} {Science} and {Statistics})}.
\newblock Springer-Verlag, Berlin, Heidelberg, July 2006.
\newblock ISBN 978-0-387-31073-2.

\bibitem[Blundell et~al.(2015)Blundell, Cornebise, Kavukcuoglu, and Wierstra]{blundell_weight_2015}
Charles Blundell, Julien Cornebise, Koray Kavukcuoglu, and Daan Wierstra.
\newblock Weight uncertainty in neural networks.
\newblock \emph{32nd International Conference on Machine Learning, ICML 2015}, 2:\penalty0 1613--1622, 2015.
\newblock arXiv: 1505.05424 ISBN: 9781510810587.

\bibitem[Boyce and DiPrima(2008)]{boyce_elementary_2008}
William~E Boyce and Richard~C DiPrima.
\newblock \emph{Elementary differential equations and boundary value problems}.
\newblock John Wiley \& Sons, Chichester, England, 9 edition, October 2008.

\bibitem[Buchdahl(1970)]{buchdahl_non-linear_1970}
H.~A. Buchdahl.
\newblock Non-{Linear} {Lagrangians} and {Cosmological} {Theory}.
\newblock \emph{Monthly Notices of the Royal Astronomical Society}, 150\penalty0 (1):\penalty0 1--8, September 1970.
\newblock ISSN 0035-8711.
\newblock \doi{10.1093/mnras/150.1.1}.
\newblock URL \url{https://doi.org/10.1093/mnras/150.1.1}.

\bibitem[Cai et~al.(2021{\natexlab{a}})Cai, Mao, Wang, Yin, and Karniadakis]{cai_physics-informed_2021-1}
Shengze Cai, Zhiping Mao, Zhicheng Wang, Minglang Yin, and George~Em Karniadakis.
\newblock Physics-informed neural networks ({PINNs}) for fluid mechanics: a review.
\newblock \emph{Acta Mechanica Sinica}, 37\penalty0 (12):\penalty0 1727--1738, December 2021{\natexlab{a}}.
\newblock ISSN 1614-3116.
\newblock \doi{10.1007/s10409-021-01148-1}.
\newblock URL \url{https://doi.org/10.1007/s10409-021-01148-1}.

\bibitem[Cai et~al.(2021{\natexlab{b}})Cai, Wang, Wang, Perdikaris, and Karniadakis]{cai_physics-informed_2021}
Shengze Cai, Zhicheng Wang, Sifan Wang, Paris Perdikaris, and George~Em Karniadakis.
\newblock Physics-{Informed} {Neural} {Networks} for {Heat} {Transfer} {Problems}.
\newblock \emph{Journal of Heat Transfer}, 143\penalty0 (6), April 2021{\natexlab{b}}.
\newblock ISSN 0022-1481.
\newblock \doi{10.1115/1.4050542}.
\newblock URL \url{https://doi.org/10.1115/1.4050542}.

\bibitem[Caldwell et~al.(1998)Caldwell, Dave, and Steinhardt]{caldwell_cosmological_1998}
R.~R. Caldwell, Rahul Dave, and Paul~J. Steinhardt.
\newblock Cosmological {Imprint} of an {Energy} {Component} with {General} {Equation} of {State}.
\newblock \emph{Physical Review Letters}, 80\penalty0 (8):\penalty0 1582--1585, February 1998.
\newblock \doi{10.1103/PhysRevLett.80.1582}.
\newblock URL \url{https://link.aps.org/doi/10.1103/PhysRevLett.80.1582}.
\newblock Publisher: American Physical Society.

\bibitem[Chantada et~al.(2022)Chantada, Landau, Protopapas, Scóccola, and Garraffo]{chantada_cosmological_2022}
Augusto~T. Chantada, Susana~J. Landau, Pavlos Protopapas, Claudia~G. Scóccola, and Cecilia Garraffo.
\newblock Cosmological informed neural networks to solve the background dynamics of the {Universe}, May 2022.
\newblock URL \url{http://arxiv.org/abs/2205.02945}.
\newblock arXiv:2205.02945 [astro-ph, physics:gr-qc, physics:hep-ph].

\bibitem[Chen et~al.(2020)Chen, Sondak, Protopapas, Mattheakis, Liu, Agarwal, and Di~Giovanni]{chen_neurodiffeq_2020}
Feiyu Chen, David Sondak, Pavlos Protopapas, Marios Mattheakis, Shuheng Liu, Devansh Agarwal, and Marco Di~Giovanni.
\newblock {NeuroDiffEq}: {A} {Python} package for solving differential equations with neural networks.
\newblock \emph{Journal of Open Source Software}, 5\penalty0 (46):\penalty0 1931, February 2020.
\newblock ISSN 2475-9066.
\newblock \doi{10.21105/joss.01931}.
\newblock URL \url{https://joss.theoj.org/papers/10.21105/joss.01931}.

\bibitem[Chevallier and Polarski(2001)]{chevallier_accelerating_2001}
Michel Chevallier and David Polarski.
\newblock Accelerating universes with scaling dark matter.
\newblock \emph{International Journal of Modern Physics D}, 10\penalty0 (02):\penalty0 213--223, April 2001.
\newblock ISSN 0218-2718.
\newblock \doi{10.1142/S0218271801000822}.
\newblock URL \url{https://www.worldscientific.com/doi/abs/10.1142/S0218271801000822}.
\newblock Publisher: World Scientific Publishing Co.

\bibitem[Chung et~al.(2021)Chung, Char, Guo, Schneider, and Neiswanger]{chung_uncertainty_2021}
Youngseog Chung, Ian Char, Han Guo, Jeff Schneider, and Willie Neiswanger.
\newblock Uncertainty {Toolbox}: an {Open}-{Source} {Library} for {Assessing}, {Visualizing}, and {Improving} {Uncertainty} {Quantification}, September 2021.
\newblock URL \url{http://arxiv.org/abs/2109.10254}.
\newblock arXiv:2109.10254 [cs, stat].

\bibitem[Clifton et~al.(2012)Clifton, Ferreira, Padilla, and Skordis]{clifton_modified_2012}
Timothy Clifton, Pedro~G. Ferreira, Antonio Padilla, and Constantinos Skordis.
\newblock Modified gravity and cosmology.
\newblock \emph{Physics Reports}, 513\penalty0 (1):\penalty0 1--189, March 2012.
\newblock ISSN 0370-1573.
\newblock \doi{10.1016/j.physrep.2012.01.001}.
\newblock URL \url{https://www.sciencedirect.com/science/article/pii/S0370157312000105}.

\bibitem[Cong et~al.(2014)Cong, Han, Shuo, Siqi, Tong-Jie, and Yan-Chun]{zhang_cong_four_2014}
Zhang Cong, Zhang Han, Yuan Shuo, Liu Siqi, Zhang Tong-Jie, and Sun Yan-Chun.
\newblock Four new observational {H}(z) data from luminous red galaxies in the {Sloan} {Digital} {Sky} {Survey} data release seven.
\newblock \emph{Research in Astronomy and Astrophysics}, 14\penalty0 (10):\penalty0 1221, October 2014.
\newblock ISSN 1674-4527.
\newblock \doi{10.1088/1674-4527/14/10/002}.
\newblock URL \url{https://dx.doi.org/10.1088/1674-4527/14/10/002}.

\bibitem[Copeland et~al.(1998)Copeland, Liddle, and Wands]{copeland_exponential_1998}
Edmund~J. Copeland, Andrew~R. Liddle, and David Wands.
\newblock Exponential potentials and cosmological scaling solutions.
\newblock \emph{Physical Review D}, 57\penalty0 (8):\penalty0 4686--4690, April 1998.
\newblock \doi{10.1103/PhysRevD.57.4686}.
\newblock URL \url{https://link.aps.org/doi/10.1103/PhysRevD.57.4686}.
\newblock Publisher: American Physical Society.

\bibitem[D'Agostino and Nunes(2020)]{dagostino_measurements_2020}
Rocco D'Agostino and Rafael~C. Nunes.
\newblock Measurements of \${H}\_0\$ in modified gravity theories: {The} role of lensed quasars in the late-time {Universe}.
\newblock \emph{Physical Review D}, 101\penalty0 (10):\penalty0 103505, May 2020.
\newblock ISSN 2470-0010, 2470-0029.
\newblock \doi{10.1103/PhysRevD.101.103505}.
\newblock URL \url{http://arxiv.org/abs/2002.06381}.
\newblock arXiv:2002.06381 [astro-ph, physics:gr-qc, physics:hep-ph].

\bibitem[Daw et~al.(2022)Daw, Bu, Wang, Perdikaris, and Karpatne]{daw_mitigating_2022}
Arka Daw, Jie Bu, Sifan Wang, Paris Perdikaris, and Anuj Karpatne.
\newblock Mitigating {Propagation} {Failures} in {PINNs} using {Evolutionary} {Sampling}, October 2022.
\newblock URL \url{http://arxiv.org/abs/2207.02338}.
\newblock arXiv:2207.02338 [cs].

\bibitem[De~Ryck and Mishra(2022{\natexlab{a}})]{de_ryck_error_2022}
Tim De~Ryck and Siddhartha Mishra.
\newblock Error analysis for physics-informed neural networks ({PINNs}) approximating {Kolmogorov} {PDEs}.
\newblock \emph{Advances in Computational Mathematics}, 48\penalty0 (6):\penalty0 79, November 2022{\natexlab{a}}.
\newblock ISSN 1572-9044.
\newblock \doi{10.1007/s10444-022-09985-9}.
\newblock URL \url{https://doi.org/10.1007/s10444-022-09985-9}.

\bibitem[De~Ryck and Mishra(2022{\natexlab{b}})]{de_ryck_generic_2022}
Tim De~Ryck and Siddhartha Mishra.
\newblock Generic bounds on the approximation error for physics-informed (and) operator learning.
\newblock May 2022{\natexlab{b}}.
\newblock URL \url{https://openreview.net/forum?id=bF4eYy3LTR9}.

\bibitem[De~Ryck et~al.(2022)De~Ryck, Jagtap, and Mishra]{de_ryck_error_2022-1}
Tim De~Ryck, Ameya~D. Jagtap, and Siddhartha Mishra.
\newblock Error estimates for physics informed neural networks approximating the {Navier}-{Stokes} equations, March 2022.
\newblock URL \url{http://arxiv.org/abs/2203.09346}.
\newblock arXiv:2203.09346 [cs, math].

\bibitem[Flamant et~al.(2020)Flamant, Protopapas, and Sondak]{flamant_solving_2020}
Cedric Flamant, Pavlos Protopapas, and David Sondak.
\newblock Solving {Differential} {Equations} {Using} {Neural} {Network} {Solution} {Bundles}, June 2020.
\newblock URL \url{https://arxiv.org/abs/2006.14372v1}.

\bibitem[Foreman-Mackey et~al.(2013)Foreman-Mackey, Hogg, Lang, and Goodman]{foreman-mackey_emcee_2013}
Daniel Foreman-Mackey, David~W. Hogg, Dustin Lang, and Jonathan Goodman.
\newblock emcee: {The} {MCMC} {Hammer}.
\newblock \emph{Publications of the Astronomical Society of the Pacific}, 125\penalty0 (925):\penalty0 306, February 2013.
\newblock ISSN 1538-3873.
\newblock \doi{10.1086/670067}.
\newblock URL \url{https://iopscience.iop.org/article/10.1086/670067/meta}.
\newblock Publisher: IOP Publishing.

\bibitem[Goodman and Weare(2010)]{goodman_ensemble_2010}
Jonathan Goodman and Jonathan Weare.
\newblock Ensemble samplers with affine invariance.
\newblock \emph{Communications in Applied Mathematics and Computational Science}, 5\penalty0 (1):\penalty0 65--80, January 2010.
\newblock ISSN 2157-5452.
\newblock \doi{10.2140/camcos.2010.5.65}.
\newblock URL \url{https://msp.org/camcos/2010/5-1/p04.xhtml}.
\newblock Publisher: Mathematical Sciences Publishers.

\bibitem[Graf et~al.(2021)Graf, Flores, Protopapas, and Pichara]{graf_uncertainty_2021}
Olga Graf, Pablo Flores, Pavlos Protopapas, and Karim Pichara.
\newblock Uncertainty {Quantification} in {Neural} {Differential} {Equations}, November 2021.
\newblock URL \url{http://arxiv.org/abs/2111.04207}.
\newblock arXiv:2111.04207 [cs].

\bibitem[Guo and Haghighat(2022)]{guo_energy-based_2022}
Mengwu Guo and Ehsan Haghighat.
\newblock Energy-{Based} {Error} {Bound} of {Physics}-{Informed} {Neural} {Network} {Solutions} in {Elasticity}.
\newblock \emph{Journal of Engineering Mechanics}, 148\penalty0 (8):\penalty0 04022038, August 2022.
\newblock ISSN 1943-7889.
\newblock \doi{10.1061/(ASCE)EM.1943-7889.0002121}.
\newblock URL \url{https://ascelibrary.org/doi/10.1061/%28ASCE%29EM.1943-7889.0002121}.
\newblock Publisher: American Society of Civil Engineers.

\bibitem[Hastings(1970)]{hastings_monte_1970}
W.~K. Hastings.
\newblock Monte {Carlo} {Sampling} {Methods} {Using} {Markov} {Chains} and {Their} {Applications}.
\newblock \emph{Biometrika}, 57\penalty0 (1):\penalty0 97--109, 1970.
\newblock ISSN 0006-3444.
\newblock \doi{10.2307/2334940}.
\newblock URL \url{https://www.jstor.org/stable/2334940}.
\newblock Publisher: [Oxford University Press, Biometrika Trust].

\bibitem[Hoﬀman and Gelman()]{homan_no-u-turn_nodate}
Matthew~D Hoﬀman and Andrew Gelman.
\newblock The {No}-{U}-{Turn} {Sampler}: {Adaptively} {Setting} {Path} {Lengths} in {Hamiltonian} {Monte} {Carlo}.

\bibitem[Hu and Sawicki(2007)]{hu_models_2007}
Wayne Hu and Ignacy Sawicki.
\newblock Models of \$f({R})\$ cosmic acceleration that evade solar system tests.
\newblock \emph{Physical Review D}, 76\penalty0 (6):\penalty0 064004, September 2007.
\newblock \doi{10.1103/PhysRevD.76.064004}.
\newblock URL \url{https://link.aps.org/doi/10.1103/PhysRevD.76.064004}.
\newblock Publisher: American Physical Society.

\bibitem[Jin et~al.(2021)Jin, Cai, Li, and Karniadakis]{jin_nsfnets_2021}
Xiaowei Jin, Shengze Cai, Hui Li, and George~Em Karniadakis.
\newblock {NSFnets} ({Navier}-{Stokes} flow nets): {Physics}-informed neural networks for the incompressible {Navier}-{Stokes} equations.
\newblock \emph{Journal of Computational Physics}, 426:\penalty0 109951, February 2021.
\newblock ISSN 0021-9991.
\newblock \doi{10.1016/j.jcp.2020.109951}.
\newblock URL \url{https://www.sciencedirect.com/science/article/pii/S0021999120307257}.

\bibitem[Krishnapriyan et~al.(2021)Krishnapriyan, Gholami, Zhe, Kirby, and Mahoney]{krishnapriyan_characterizing_2021}
Aditi~S. Krishnapriyan, Amir Gholami, Shandian Zhe, Robert~M. Kirby, and Michael~W. Mahoney.
\newblock Characterizing possible failure modes in physics-informed neural networks, November 2021.
\newblock URL \url{http://arxiv.org/abs/2109.01050}.
\newblock arXiv:2109.01050 [cs].

\bibitem[Lagaris et~al.(1997)Lagaris, Likas, and Fotiadis]{lagaris_artificial_1997}
I.~E. Lagaris, A.~Likas, and D.~I. Fotiadis.
\newblock Artificial {Neural} {Networks} for {Solving} {Ordinary} and {Partial} {Differential} {Equations}.
\newblock \emph{IEEE Transactions on Neural Networks}, 9\penalty0 (5):\penalty0 987--1000, May 1997.
\newblock \doi{10.1109/72.712178}.
\newblock URL \url{http://arxiv.org/abs/physics/9705023}.
\newblock arXiv: physics/9705023v1.

\bibitem[Linder(2003)]{linder_exploring_2003}
Eric~V. Linder.
\newblock Exploring the {Expansion} {History} of the {Universe}.
\newblock \emph{Physical Review Letters}, 90\penalty0 (9):\penalty0 091301, March 2003.
\newblock \doi{10.1103/PhysRevLett.90.091301}.
\newblock URL \url{https://link.aps.org/doi/10.1103/PhysRevLett.90.091301}.
\newblock Publisher: American Physical Society.

\bibitem[Linka et~al.(2022)Linka, Schafer, Meng, Zou, Karniadakis, and Kuhl]{linka_bayesian_2022}
Kevin Linka, Amelie Schafer, Xuhui Meng, Zongren Zou, George~Em Karniadakis, and Ellen Kuhl.
\newblock Bayesian {Physics}-{Informed} {Neural} {Networks} for real-world nonlinear dynamical systems, May 2022.
\newblock URL \url{http://arxiv.org/abs/2205.08304}.
\newblock arXiv:2205.08304 [nlin].

\bibitem[Liu et~al.(2022)Liu, Huang, and Protopapas]{liu_evaluating_2022}
Shuheng Liu, Xiyue Huang, and Pavlos Protopapas.
\newblock Evaluating {Error} {Bound} for {Physics}-{Informed} {Neural} {Networks} on {Linear} {Dynamical} {Systems}, July 2022.
\newblock URL \url{http://arxiv.org/abs/2207.01114}.
\newblock arXiv:2207.01114 [cs, math].

\bibitem[Liu et~al.(2023)Liu, Huang, and Protopapas]{liu_residual-based_2023}
Shuheng Liu, Xiyue Huang, and Pavlos Protopapas.
\newblock \emph{Residual-based error bound for physics-informed neural networks}.
\newblock June 2023.

\bibitem[Mao et~al.(2020)Mao, Jagtap, and Karniadakis]{mao_physics-informed_2020}
Zhiping Mao, Ameya~D. Jagtap, and George~Em Karniadakis.
\newblock Physics-informed neural networks for high-speed flows.
\newblock \emph{Computer Methods in Applied Mechanics and Engineering}, 360:\penalty0 112789, March 2020.
\newblock ISSN 0045-7825.
\newblock \doi{10.1016/j.cma.2019.112789}.
\newblock URL \url{https://www.sciencedirect.com/science/article/pii/S0045782519306814}.

\bibitem[Metropolis et~al.(2004)Metropolis, Rosenbluth, Rosenbluth, Teller, and Teller]{metropolis_equation_2004}
Nicholas Metropolis, Arianna~W. Rosenbluth, Marshall~N. Rosenbluth, Augusta~H. Teller, and Edward Teller.
\newblock Equation of {State} {Calculations} by {Fast} {Computing} {Machines}.
\newblock \emph{The Journal of Chemical Physics}, 21\penalty0 (6):\penalty0 1087--1092, December 2004.
\newblock ISSN 0021-9606.
\newblock \doi{10.1063/1.1699114}.
\newblock URL \url{https://doi.org/10.1063/1.1699114}.

\bibitem[Moresco et~al.(2012)Moresco, Cimatti, Jimenez, Pozzetti, Zamorani, Bolzonella, Dunlop, Lamareille, Mignoli, Pearce, Rosati, Stern, Verde, Zucca, Carollo, Contini, Kneib, Fèvre, Lilly, Mainieri, Renzini, Scodeggio, Balestra, Gobat, McLure, Bardelli, Bongiorno, Caputi, Cucciati, Torre, Ravel, Franzetti, Garilli, Iovino, Kampczyk, Knobel, Kovač, Borgne, Brun, Maier, Pelló, Peng, Perez-Montero, Presotto, Silverman, Tanaka, Tasca, Tresse, Vergani, Almaini, Barnes, Bordoloi, Bradshaw, Cappi, Chuter, Cirasuolo, Coppa, Diener, Foucaud, Hartley, Kamionkowski, Koekemoer, López-Sanjuan, McCracken, Nair, Oesch, Stanford, and Welikala]{moresco_improved_2012}
M.~Moresco, A.~Cimatti, R.~Jimenez, L.~Pozzetti, G.~Zamorani, M.~Bolzonella, J.~Dunlop, F.~Lamareille, M.~Mignoli, H.~Pearce, P.~Rosati, D.~Stern, L.~Verde, E.~Zucca, C.~M. Carollo, T.~Contini, J.-P. Kneib, O.~Le Fèvre, S.~J. Lilly, V.~Mainieri, A.~Renzini, M.~Scodeggio, I.~Balestra, R.~Gobat, R.~McLure, S.~Bardelli, A.~Bongiorno, K.~Caputi, O.~Cucciati, S.~de~la Torre, L.~de Ravel, P.~Franzetti, B.~Garilli, A.~Iovino, P.~Kampczyk, C.~Knobel, K.~Kovač, J.-F.~Le Borgne, V.~Le Brun, C.~Maier, R.~Pelló, Y.~Peng, E.~Perez-Montero, V.~Presotto, J.~D. Silverman, M.~Tanaka, L.~A.~M. Tasca, L.~Tresse, D.~Vergani, O.~Almaini, L.~Barnes, R.~Bordoloi, E.~Bradshaw, A.~Cappi, R.~Chuter, M.~Cirasuolo, G.~Coppa, C.~Diener, S.~Foucaud, W.~Hartley, M.~Kamionkowski, A.~M. Koekemoer, C.~López-Sanjuan, H.~J. McCracken, P.~Nair, P.~Oesch, A.~Stanford, and N.~Welikala.
\newblock Improved constraints on the expansion rate of the {Universe} up to z 1.1 from the spectroscopic evolution of cosmic chronometers.
\newblock \emph{Journal of Cosmology and Astroparticle Physics}, 2012\penalty0 (08):\penalty0 006, August 2012.
\newblock ISSN 1475-7516.
\newblock \doi{10.1088/1475-7516/2012/08/006}.
\newblock URL \url{https://dx.doi.org/10.1088/1475-7516/2012/08/006}.

\bibitem[Moresco(2015)]{moresco_raising_2015}
Michele Moresco.
\newblock Raising the bar: new constraints on the {Hubble} parameter with cosmic chronometers at z  2.
\newblock \emph{Monthly Notices of the Royal Astronomical Society: Letters}, 450\penalty0 (1):\penalty0 L16--L20, June 2015.
\newblock ISSN 1745-3925.
\newblock \doi{10.1093/mnrasl/slv037}.
\newblock URL \url{https://doi.org/10.1093/mnrasl/slv037}.

\bibitem[Moresco et~al.(2016)Moresco, Pozzetti, Cimatti, Jimenez, Maraston, Verde, Thomas, Citro, Tojeiro, and Wilkinson]{moresco_6_2016}
Michele Moresco, Lucia Pozzetti, Andrea Cimatti, Raul Jimenez, Claudia Maraston, Licia Verde, Daniel Thomas, Annalisa Citro, Rita Tojeiro, and David Wilkinson.
\newblock A 6\% measurement of the {Hubble} parameter at z 0.45: direct evidence of the epoch of cosmic re-acceleration.
\newblock \emph{Journal of Cosmology and Astroparticle Physics}, 2016\penalty0 (05):\penalty0 014, May 2016.
\newblock ISSN 1475-7516.
\newblock \doi{10.1088/1475-7516/2016/05/014}.
\newblock URL \url{https://dx.doi.org/10.1088/1475-7516/2016/05/014}.

\bibitem[Motta et~al.(2021)Motta, García-Aspeitia, Hernández-Almada, Magaña, and Verdugo]{motta_taxonomy_2021}
V.~Motta, Miguel~A. García-Aspeitia, A.~Hernández-Almada, J.~Magaña, and Tomás Verdugo.
\newblock Taxonomy of {Dark} {Energy} {Models}, April 2021.
\newblock URL \url{http://arxiv.org/abs/2104.04642}.
\newblock arXiv:2104.04642 [astro-ph, physics:gr-qc].

\bibitem[Nabian et~al.(2021)Nabian, Gladstone, and Meidani]{nabian_efficient_2021}
Mohammad~Amin Nabian, Rini~Jasmine Gladstone, and Hadi Meidani.
\newblock Efficient training of physics-informed neural networks via importance sampling.
\newblock \emph{Computer-Aided Civil and Infrastructure Engineering}, 36\penalty0 (8):\penalty0 962--977, 2021.
\newblock ISSN 1467-8667.
\newblock \doi{10.1111/mice.12685}.
\newblock URL \url{https://onlinelibrary.wiley.com/doi/abs/10.1111/mice.12685}.
\newblock \_eprint: https://onlinelibrary.wiley.com/doi/pdf/10.1111/mice.12685.

\bibitem[Paszke et~al.(2019)Paszke, Gross, Massa, Lerer, Bradbury, Chanan, Killeen, Lin, Gimelshein, Antiga, Desmaison, Köpf, Yang, DeVito, Raison, Tejani, Chilamkurthy, Steiner, Fang, Bai, and Chintala]{paszke_pytorch_2019}
Adam Paszke, Sam Gross, Francisco Massa, Adam Lerer, James Bradbury, Gregory Chanan, Trevor Killeen, Zeming Lin, Natalia Gimelshein, Luca Antiga, Alban Desmaison, Andreas Köpf, Edward Yang, Zach DeVito, Martin Raison, Alykhan Tejani, Sasank Chilamkurthy, Benoit Steiner, Lu~Fang, Junjie Bai, and Soumith Chintala.
\newblock {PyTorch}: {An} {Imperative} {Style}, {High}-{Performance} {Deep} {Learning} {Library}, December 2019.
\newblock URL \url{http://arxiv.org/abs/1912.01703}.
\newblock arXiv:1912.01703 [cs, stat].

\bibitem[Penwarden et~al.(2023)Penwarden, Jagtap, Zhe, Karniadakis, and Kirby]{penwarden_unified_2023}
Michael Penwarden, Ameya~D. Jagtap, Shandian Zhe, George~Em Karniadakis, and Robert~M. Kirby.
\newblock A unified scalable framework for causal sweeping strategies for {Physics}-{Informed} {Neural} {Networks} ({PINNs}) and their temporal decompositions.
\newblock \emph{Journal of Computational Physics}, 493:\penalty0 112464, November 2023.
\newblock ISSN 00219991.
\newblock \doi{10.1016/j.jcp.2023.112464}.
\newblock URL \url{http://arxiv.org/abs/2302.14227}.
\newblock arXiv:2302.14227 [physics].

\bibitem[Psaros et~al.(2022)Psaros, Meng, Zou, Guo, and Karniadakis]{psaros_uncertainty_2022}
Apostolos~F Psaros, Xuhui Meng, Zongren Zou, Ling Guo, and George~Em Karniadakis.
\newblock Uncertainty {Quantification} in {Scientific} {Machine} {Learning}: {Methods}, {Metrics}, and {Comparisons}.
\newblock 2022.
\newblock \doi{10.48550/arxiv.2201.07766}.
\newblock arXiv: 2201.07766.

\bibitem[Psaros et~al.(2023)Psaros, Meng, Zou, Guo, and Karniadakis]{psaros_uncertainty_2023}
Apostolos~F. Psaros, Xuhui Meng, Zongren Zou, Ling Guo, and George~Em Karniadakis.
\newblock Uncertainty quantification in scientific machine learning: {Methods}, metrics, and comparisons.
\newblock \emph{Journal of Computational Physics}, 477:\penalty0 111902, March 2023.
\newblock ISSN 0021-9991.
\newblock \doi{10.1016/j.jcp.2022.111902}.
\newblock URL \url{https://www.sciencedirect.com/science/article/pii/S0021999122009652}.

\bibitem[Raissi et~al.(2019)Raissi, Perdikaris, and Karniadakis]{raissi_physics-informed_2019}
M.~Raissi, P.~Perdikaris, and G.~E. Karniadakis.
\newblock Physics-informed neural networks: {A} deep learning framework for solving forward and inverse problems involving nonlinear partial differential equations.
\newblock \emph{Journal of Computational Physics}, 378:\penalty0 686--707, February 2019.
\newblock ISSN 0021-9991.
\newblock \doi{10.1016/j.jcp.2018.10.045}.
\newblock URL \url{https://www.sciencedirect.com/science/article/pii/S0021999118307125}.

\bibitem[Rasht-Behesht et~al.(2022)Rasht-Behesht, Huber, Shukla, and Karniadakis]{rasht-behesht_physics-informed_2022}
Majid Rasht-Behesht, Christian Huber, Khemraj Shukla, and George~Em Karniadakis.
\newblock Physics-{Informed} {Neural} {Networks} ({PINNs}) for {Wave} {Propagation} and {Full} {Waveform} {Inversions}.
\newblock \emph{Journal of Geophysical Research: Solid Earth}, 127\penalty0 (5):\penalty0 e2021JB023120, 2022.
\newblock ISSN 2169-9356.
\newblock \doi{10.1029/2021JB023120}.
\newblock URL \url{https://onlinelibrary.wiley.com/doi/abs/10.1029/2021JB023120}.
\newblock \_eprint: https://onlinelibrary.wiley.com/doi/pdf/10.1029/2021JB023120.

\bibitem[Simon et~al.(2005)Simon, Verde, and Jimenez]{simon_constraints_2005}
Joan Simon, Licia Verde, and Raul Jimenez.
\newblock Constraints on the redshift dependence of the dark energy potential.
\newblock \emph{Physical Review D}, 71\penalty0 (12):\penalty0 123001, June 2005.
\newblock \doi{10.1103/PhysRevD.71.123001}.
\newblock URL \url{https://link.aps.org/doi/10.1103/PhysRevD.71.123001}.
\newblock Publisher: American Physical Society.

\bibitem[Steger et~al.(2022)Steger, Rohrhofer, and Geiger]{steger_how_2022}
Sophie Steger, Franz~M. Rohrhofer, and Bernhard~C. Geiger.
\newblock How {PINNs} cheat: {Predicting} chaotic motion of a double pendulum.
\newblock October 2022.
\newblock URL \url{https://openreview.net/forum?id=shUbBca03f}.

\bibitem[Stern et~al.(2010)Stern, Jimenez, Verde, Kamionkowski, and Stanford]{stern_cosmic_2010}
Daniel Stern, Raul Jimenez, Licia Verde, Marc Kamionkowski, and S.~Adam Stanford.
\newblock Cosmic chronometers: constraining the equation of state of dark energy. {I}: {H}(z) measurements.
\newblock \emph{Journal of Cosmology and Astroparticle Physics}, 2010\penalty0 (02):\penalty0 008, February 2010.
\newblock ISSN 1475-7516.
\newblock \doi{10.1088/1475-7516/2010/02/008}.
\newblock URL \url{https://dx.doi.org/10.1088/1475-7516/2010/02/008}.

\bibitem[Wang et~al.(2022)Wang, Sankaran, and Perdikaris]{wang_respecting_2022}
Sifan Wang, Shyam Sankaran, and Paris Perdikaris.
\newblock Respecting causality is all you need for training physics-informed neural networks, March 2022.
\newblock URL \url{http://arxiv.org/abs/2203.07404}.
\newblock arXiv:2203.07404 [cs].

\bibitem[Yang et~al.(2020)Yang, Meng, and Karniadakis]{yang_b-pinns_2020}
Liu Yang, Xuhui Meng, and George~Em Karniadakis.
\newblock B-{PINNs}: {Bayesian} {Physics}-{Informed} {Neural} {Networks} for {Forward} and {Inverse} {PDE} {Problems} with {Noisy} {Data}.
\newblock March 2020.
\newblock \doi{10.1016/j.jcp.2020.109913}.
\newblock URL \url{http://arxiv.org/abs/2003.06097}.
\newblock arXiv: 2003.06097.

\bibitem[Zlatev et~al.(1999)Zlatev, Wang, and Steinhardt]{zlatev_quintessence_1999}
Ivaylo Zlatev, Limin Wang, and Paul~J. Steinhardt.
\newblock Quintessence, {Cosmic} {Coincidence}, and the {Cosmological} {Constant}.
\newblock \emph{Physical Review Letters}, 82\penalty0 (5):\penalty0 896--899, February 1999.
\newblock \doi{10.1103/PhysRevLett.82.896}.
\newblock URL \url{https://link.aps.org/doi/10.1103/PhysRevLett.82.896}.
\newblock Publisher: American Physical Society.

\bibitem[Zou et~al.(2023)Zou, Meng, and Karniadakis]{zou_uncertainty_2023}
Zongren Zou, Xuhui Meng, and George~Em Karniadakis.
\newblock Uncertainty quantification for noisy inputs-outputs in physics-informed neural networks and neural operators, November 2023.
\newblock URL \url{http://arxiv.org/abs/2311.11262}.
\newblock arXiv:2311.11262 [physics].

\bibitem[Åström and Murray(2008)]{astrom_feedback_2008}
Karl~J. Åström and Richard~M. Murray.
\newblock \emph{Feedback systems: an introduction for scientists and engineers}.
\newblock Princeton Univ. Press, Princeton, NJ, 2008.
\newblock ISBN 978-0-691-13576-2.

\end{thebibliography}

\newpage

\onecolumn

\title{Supplementary Material}
\maketitle

\appendix
\section{Enforcing IC/BC}\label{sup:enforce}
\subsection{Two-dimensional Dirichlet Boundary Value Problem}
In the Dirichlet problem, also known as \textit{first boundary value problem}, the values of the solution function at the boundaries $\Gamma$ are known \citep{boyce_elementary_2008}. In this case the re-parameterization is
\begin{equation*}
    \tilde u_{\theta}(x,y):=A(x,y)
        +\tilde{x}\big(1-\tilde{x}\big)\tilde{y}\big(1-\tilde{y}\big)\u(x,y)
\end{equation*}
where
\begin{align*}
        A(x,y)&=\big(1-\tilde{x}\big)u(x_0, y)+\tilde{x}u(x_1, y) \\
        &+\big(1-\tilde{y}\big)\Big(u(x, y_0)-\big(1-\tilde{x}\big)u(x_0, y_0)+\tilde{x}u(x_1, y_0)\Big) \\
        &+\tilde{y}\Big(u(x, y_1)-\big(1-\tilde{x}\big)u(x_0, y_1)+\tilde{x}u(x_1, y_1)\Big)\\
        \tilde{x}&=\frac{x-x_0}{x_1-x_0}\\
        \tilde{y}&=\frac{y-y_0}{y_1-y_0}\\
        (x_0, y), &(x_1, y), (x, y_0), (x, y_1) \in \Gamma
\end{align*}

\section{Error Bounds Computation}\label{sup:errorbounds}

\subsection{First Order Linear ODE with Constant Coefficient}
A first-order linear ODE with constant coefficient has the general form
\begin{equation}
    \label{eq:folcc}
    u'(t)+(\lambda+i\omega)u(t)=f(t)
\end{equation}

As shown by \cite{liu_evaluating_2022}, the error of a PINN solution $\u$ to \cref{eq:folcc} can be bounded as
\begin{equation}
    |\u(t) - u(t)|\leq\varepsilon e^{-\lambda t}\int_{t_0}^{t}e^{\lambda\tau}d\tau
\end{equation}
when the initial conditions are satisfied i.e $\u(t_0)=u(t_0)$. Here, $\varepsilon$ is an upper bound on the residuals
\begin{equation}
    \label{eq:eb_res_bound}
    |u'(t)+(\lambda+i\omega)u(t)-f(t)|\leq \varepsilon \quad \forall t \in I
\end{equation}
where $I$ can be any of the forms $(t_0, t), (t_0, t], (t_0, \infty)$.

\begin{algorithm}
    \caption{Tight Error Bounds}\label{alg:eb}
    \begin{algorithmic}
        \Require Domain $I=[T_{min}, T_{max}]$, residual network $r(t)$, an expression for $e^{P(t)}$ we will call EP$(t)$, an expression for $\int_{T_{min}}^te^{P(\tau)}d\tau$ we will call IntEP$(t)$, number of partitions $N$, number of points in each partition $K$.
        \Ensure A set of times and error bounds at those times $\{t_i, b_i\}_{i=1}^N$.
        \State $\{t_i\}_{i=0}^N \gets \text{Linspace}(T_{min}, T_{max}, N + 1)$
        \State Initialize $b_0 := 0$
        \For{$i \gets 1\dots N$}
        \State $I_i \gets \text{Linspace}(t_{i-1}, t_{i}, K)$
        \State $\varepsilon_i \gets \max_{\tau\in I_{i}} |r(\tau)|$
        \State $b_i \gets b_{i-1} + \varepsilon_{i} \frac{\text{IntEP}(t_{i}) - \text{IntEP}(t_{i-1})}{\text{EP}(t_{i})}$ \Comment{Implementation of \cref{eq:bound_computation}}
        \EndFor
        \State \hspace{-1.2em} \textbf{Note:} Linspace($a,b,n$) returns an array of $n$ points equally spaced in the range $[a, b]$.
    \end{algorithmic}
\end{algorithm}

\subsection{First Order Linear ODE with Nonconstant Coefficient}
A first-order linear ODE with nonconstant coefficient has the general form
\begin{equation}
    \label{eq:folncc}
    u'(t)+(p(t)+iq(t))u(t)=f(t)
\end{equation}
As shown by \cite{liu_evaluating_2022}, when the initial conditions are satisfied, the error of a PINN solution $\u$ to \cref{eq:folncc} can be bounded as
\begin{equation}
    |\u(t) - u(t)|\leq\varepsilon e^{-P(t)}\int_{t_0}^{t}e^{P(\tau)}d\tau
\end{equation}
Where $P(t)=\int_{t_0}^{t} p(\tau)d\tau$, and $\varepsilon$ is an upper bound on the residuals
\begin{equation}
    \label{eq:eb_res_bound}
    |u'(t)+(\lambda+i\omega)u(t)-f(t)|\leq \varepsilon \quad \forall t \in I
\end{equation}
where $I$ can be any of the forms $(t_0, t), (t_0, t], (t_0, \infty)$.

\subsection{Tight Bounds Computation}\label{sup:tight_bounds}
In \cref{sec:eb}, we described how to obtain a bound using the global maximum residual $\varepsilon$ in $I$. However, we can compute a tighter error bound by partitioning the domain $I=I_1 \uplus I_2 \uplus \dots \uplus I_n$.

%The tight bound for the first-order Linear ODE with constant coefficients turns out to be
%\begin{equation}
%    |\u(t) - u(t)|\leq e^{-\lambda t}\int_{t_0}^{t}|r_{\theta}(\tau)|e^{\lambda\tau}  d\tau
%\end{equation}
%Similarly, for the first-order ODE with nonconstant coefficients, the bound is
The tight bound for the first-order ODE with nonconstant coefficients turns out to be
\begin{equation}
    |\u(t) - u(t)|\leq e^{-P(t)}\int_{t_0}^{t}|r_{\theta}(\tau)|e^{P(\tau)}d\tau
\end{equation}
To use the partitions $I_k$, we define the maximum local residual
\begin{equation}
    \varepsilon_k := \max_{\tau \in I_k} |r_{\theta}(\tau)|
\end{equation}
and we compute the bounds as
\begin{equation}
\label{eq:bound_computation}
|\u(t) - u(t)| \leq \sum_{i=1}^n \varepsilon_i e^{-P(t)}\int_{\tau=t_{i-1}}^{\tau=t_i} e^{P(\tau)}d\tau
\end{equation}
where $t_k = \max I_k$ and $t_n = t$. \cref{alg:eb} in the Supplementary Material shows the implementation of tight bounds computation.

\section{NLM Details}\label{sup:nlm_details}
We now provide details on the NLM derivation presented in \cref{sec:post_aprox}.
\begin{align}
    \Sigma_{\text{post}} &= \left(\Phi_{\theta}^T\Sigma\Phi_{\theta} + \sigma_{\text{prior}}^{-2}I\right)^{-1}\label{eq:nlm_sigma_post} \\
    \mu_{\text{post}} &= \Sigma_{\text{post}}\left(\Sigma^{-1}\Phi_{\theta}\right)^Tu_{\theta_{\text{det}}}\label{eq:nlm_mean_post}
\end{align}
we have writen $\Phi_{\theta}^T$ and $\Sigma$ instead of  $\Phi_{\theta}^T(\x_{\mathcal{D}}, \lambda_{\mathcal{D}})$ and $\Sigma(\x_{\mathcal{D}}, \lambda_{\mathcal{D}})$ respectively, to simplify notation. In \cref{eq:nlm_sigma_post} and \cref{eq:nlm_mean_post}, 

$$\x_{\mathcal{D}}=[\x_1,...,\x_N]^T$$
$$\lambda_{\mathcal{D}}=[\lambda_1,...,\lambda_N]^T$$
$$\Sigma(\x_{\mathcal{D}}, \lambda_{\mathcal{D}})=\mathrm{diag}([\slike^2(\x_1, \lambda_1),...,\slike^2(\x_M, \lambda_M)])$$

\section{Computing Predictive Uncertainty}
The computation of predictive uncertainty is straightforward as it is the standard deviation of the posterior predictive distribution. We can obtain it by applying the law of total variance
\begin{align}
        \text{Var}(u | \x, \lambda, \mathcal{D}) &= \mathbb{E}_{\theta | \mathcal{D}}\left[\text{Var}(u|\x, \lambda, \theta)\right] + \text{Var}_{\theta | \mathcal{D}}\left[\mathbb{E}(u|\x, \lambda, \theta)\right]\\
        &= \mathbb{E}_{\theta | \mathcal{D}}\left[\slike^2(\x,\lambda)\right]+ \text{Var}_{\theta|\mathcal{D}}[u_{\theta}(\x,\lambda)]\nonumber\\
        &\approx \slike^2(\x,\lambda) + \frac{1}{M}\sum_{i=1}^M(u_{\theta_i}(\x,\lambda) - \overline{u(\x, \lambda)})^2\nonumber
\end{align}
where $\theta_i \sim p(\theta| \mathcal{D})$ and $\overline{u(\x, \lambda)} = \frac{1}{M}\sum_{i=1}^M u_{\theta_i}(\x,\lambda)$ is the sample mean of the network  outputs.

 The latter approximation applies to BBB and NUTS; for BBB, the samples are taken from the variational posterior, and for NUTS, we use the samples of the true posterior the method yields. We have the analytical expression for NLM as shown in \cref{eq:nlm_sigma_post}.

\section{Statistical Analysis}
It may also be the case that the observed values are a function $f$ of the solution. In that scenario, the dataset would become $\mathcal{O}^f = \{(\mathbf{x}_i, \mu_i^f, \sigma_i^f) \mid \mathbf{x}_i \in \Omega, \mu_i^f \in \mathbb{R}, \sigma_i^f \in \mathbb{R}^+ \}_{i=1}^{O^f}$, where $\mu_i^f$ and $\sigma_i^f$ are the mean and standard deviation of the observed function, respectively. Here $f$ is a function of the true solution and possibly other parameters $\lambda^f$, which can also be included in the set of parameters to be estimated. In a similar way to \cref{eq:marginalization}, we can compute the likelihood:
\begin{equation}
\label{eq:inverse2}
        p(\x_i, \mu_i, \sigma_i|\lambda, \lambda^f)=\int_{\mathcal{U}}p(\x_i, \mu_i^f, \sigma_i^f|f(u_{\lambda}(\x_i), \lambda^f))\cdot p(u_{\lambda}(\x_i)|\x_i, \lambda)du
\end{equation}

\section{Proofs}

\begin{proposition}
    \label{prop:bounds}
    Given a differential equation of the form $\diff{u}{t} = f(t)$ for some function $f$ and an initial condition $u(t_0) = u_0$. The error of an approximation $\hat u$ of $u$ is:
    \begin{equation}
        \hat u(\tau) - u(\tau) = \int_{t_0}^{\tau}R(t)dt
    \end{equation}
    if the approximation satisfies the ICs, where $R(t) := \diff{\hat u}{t} - f(t)$.
\end{proposition}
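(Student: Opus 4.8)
The plan is to reduce the claim entirely to the Fundamental Theorem of Calculus, exploiting the fact that the equation is in the directly integrable form $\diff{u}{t} = f(t)$. First I would integrate the defining equation of the true solution from $t_0$ to $\tau$, obtaining $u(\tau) - u(t_0) = \int_{t_0}^{\tau} f(t)\,dt$. This uses only that $u$ solves the differential equation on the interval of interest.

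Next I would rewrite the definition of the residual $R(t) := \diff{\hat u}{t} - f(t)$ as $\diff{\hat u}{t} = f(t) + R(t)$ and integrate this identity over the same interval, which gives $\hat u(\tau) - \hat u(t_0) = \int_{t_0}^{\tau} f(t)\,dt + \int_{t_0}^{\tau} R(t)\,dt$. Subtracting the true-solution identity from this one, the common term $\int_{t_0}^{\tau} f(t)\,dt$ cancels and I am left with $\bigl(\hat u(\tau) - u(\tau)\bigr) - \bigl(\hat u(t_0) - u(t_0)\bigr) = \int_{t_0}^{\tau} R(t)\,dt$. Finally I would invoke the standing hypothesis that the approximation satisfies the initial condition, so that $\hat u(t_0) = u_0 = u(t_0)$; this makes the boundary term $\hat u(t_0) - u(t_0)$ vanish and the asserted identity drops out immediately.

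There is no real obstacle here, as the argument is a two-line computation; the only point worth flagging is the regularity required to apply the Fundamental Theorem of Calculus. It suffices that $\hat u$ be absolutely continuous with integrable derivative and that $f$ be integrable on $[t_0, \tau]$, both of which hold automatically for the neural-network approximations and smooth source terms considered in this work. The genuinely conceptual content, rather than the mechanics, is that matching the initial conditions is precisely what annihilates the boundary contribution, thereby converting the accumulated residual into the pointwise solution error and making explicit why a small residual at $\tau$ need not imply a small error at $\tau$.
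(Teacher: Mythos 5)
Your proposal is correct and follows essentially the same route as the paper's proof: both arguments amount to observing that $R(t) = \diff{\hat u}{t} - \diff{u}{t}$ (since $\diff{u}{t} = f$), integrating from $t_0$ to $\tau$ via the Fundamental Theorem of Calculus, and using the matched initial condition to cancel the boundary term $\hat u(t_0) - u(t_0)$. The only differences are cosmetic — you integrate the two equations separately and subtract, while the paper cancels $f$ pointwise before integrating — plus your added (and reasonable) remark about the regularity needed for the FTC.
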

\begin{proof}
    Let us take some approximate solution to the DE
    \begin{equation*}
        \hat u(t) = u(t) + e_{u}(t)
    \end{equation*}
    where $\hat u(t)$ is the approximation, $u(t)$ is the true solution and $e_{u}(t)$ is an error term. We can do the same for the first derivative of the solution. Using Newton's notation:
    \begin{equation}
        \label{eq:p1_diff}
        \hat {\dot u}(t) = \dot u(t) + e_{\dot u}(t)
    \end{equation}
    Using the definition of $R(t)$
    \begin{equation*}
       R(t) = \dot u(t) + e_{\dot u}(t) - f(t)
    \end{equation*}
    Since $\dot u(t)$ is the derivative of the true solution, $\dot u(t)- f(t)=0$, thus $R(t) = e_{\dot u}(t)$. Substituting that in \cref{eq:p1_diff}, we get
    \begin{equation*}
        R(t) = \hat {\dot u}(t) - \dot u(t)
    \end{equation*}
    Integrating both sides with respect to $t$:
    \begin{align*}
        \int_{t_0}^{\tau} R(t)dt &= \int_{t_0}^{\tau} \hat {\dot u}(t) dt- \int_{t_0}^{\tau}\dot u(t)dt\\
        \int_{t_0}^{\tau} R(t)dt &= \hat u(\tau) - \hat u(t_0) +  u(t_0) - u(\tau)\\
        \int_{t_0}^{\tau} R(t)dt & + e_{u}(t_0) = \hat u(\tau) - u(\tau)
    \end{align*}
    If $u(t_0) = \hat u(t_0)$, which is the case for our PINNs because we enforce ICs, then
    \begin{equation*}
         \hat u(\tau) - u(\tau) = \int_{t_0}^{\tau} R(t)dt
    \end{equation*}    
\end{proof}

\section{Cosmology Models}\label{sup:cosmology}

\subsection{$\Lambda$CDM Error Bounds Computation}
\paragraph{Error Bounds Computation}
Since this model is a first-order linear system with non-constant coefficients, it is possible to compute its error bounds. To compute the error bounds with \cref{alg:eb} we need an expression for $e^{P(z)}$ and $\int_0^ze^{P(s)}ds$. In the $\Lambda$CDM model, we have $p(z) = 3/(1+z)$, thus
\begin{equation}
    P(z) = \int_0^z -\frac{3}{(1+s)}ds = -3\text{ln}(1+s)
\end{equation}
then
\begin{gather}
    e^{P(s)} = e^{-3\text{ln}(1+s)}\\
    \int_{0}^z e^{P(s)}ds = \int_{0}^z e^{-3\text{ln}(1+s)}ds = \frac{1}{2} - \frac{1}{2(1+z)^2}
\end{gather}
which are the inputs needed for \cref{alg:eb}.

\subsection{Parametric Dark Energy Error Bounds Computation} \label{sup:cpl_bounds}
Once again, we need to find an expression for the inputs to \cref{alg:eb}. CPL's equation is also a first-order linear with a non-constant coefficient equation, so the procedure is the same as for $\Lambda$CDM. First, we have
\begin{equation}
    p(z) = \frac{-3}{1+z}(1+w_0+\frac{w_1z}{1+z})
\end{equation}
\begin{align}
    P(z) &= \int_0^z \frac{-3}{1+s}(1+w_0+\frac{w_1s}{1+s})ds\\
    &= -3((w_0 + w_1+1)\text{ln}(z+1)+\frac{w_1}{z+1}-w_1)
\end{align}
we can then simplify the expression $e^{P(z)}$ as
\begin{align}
    e^{P(z)} &= e^{-3((w_0 + w_1+1)\text{ln}(z+1)+\frac{w_1}{z+1}-w_1)}\\
    &= -3((w_0 + w_1+1)\text{ln}(z+1)+\frac{w_1}{z+1}-w_1)\\
    &= (z+1)^{-3(w_0 + w_1 + 1)}e^{\frac{3w_1z}{z+1}}
\end{align}

Finally, the integral turns out to be
\begin{equation}
    \int_0^z e^{P(s)}ds = e^{3w_1}(s+1)^{1-3(w_0 +w_1+1)}E_{2-3(w_0+w_1+1)}\left(\frac{3w_1}{s+1}\right)\Bigg|_{s=0}^{s=z}
\end{equation}
Where $E_{n}(x)$ is the exponential integral defined as $E_{n}(x) = \int_1^{\infty}\frac{e^{-xt}}{t^n}dt$. We used computational tools to find the antiderivative of $e^{P(s)}$.

\subsection{Quintessence}
The Quintessence \citep{caldwell_cosmological_1998, armendariz-picon_dynamical_2000, copeland_exponential_1998, zlatev_quintessence_1999} system of equations is
\begin{align*}
    \frac{dx}{dN}&=-3x+\frac{\sqrt{6}}{2}\lambda y^2 + \frac{3}{2}x(1+x^2-y^2)\\
    \frac{dy}{dN}&=\frac{\sqrt{6}}{2}xy\lambda + \frac{3}{2}y(1+x^2-y^2)
\end{align*}
with the following initial conditions
\begin{align*}
    x_0&=0 \\
    y_0&=\sqrt{\frac{1-\Omega_{m,0}^{\Lambda}}{\Omega_{m,0}^{\Lambda}(1+z_0)^3+1-\Omega_{m,0}^{\Lambda}}}
\end{align*}

where $\lambda,\Omega_{m,0}^{\Lambda}$ are parameters. Given a solution to the system, the Hubble parameter is computed as    
\begin{align*} 
H(z)&=H_0^{\Lambda}\sqrt{\frac{\Omega_{m,0}^{\Lambda}(1+z)^3}{1-x^2-y^2}}
\end{align*}

Here $H_0^{\Lambda}$ is also a parameter.

\subsection{$f(R)$ Gravity}
The $f(R)$ gravity \citep{clifton_modified_2012, buchdahl_non-linear_1970} model, which we refer to as HS for the authors \cite{hu_models_2007} consists of a system of 5 equations.
\begin{align*}
    \frac{dx}{dz}&=\frac{1}{z+1}(-\Omega-2v+x+4y+xv+x^2) \\
    \frac{dy}{dz}&=\frac{-1}{z+1}(vx\Gamma-xy+4y-2yv) \\
    \frac{dv}{dz}&=\frac{-v}{z+1}(x\Gamma +4-2v) \\
    \frac{d\Omega}{dz}&=\frac{\Omega}{z+1}(-1+2v+x) \\
    \frac{dr}{dz}&=-\frac{r\Gamma x}{z+1}
\end{align*}

where 

\begin{align*}
    \Gamma(r) =\frac{(r+b)\left[(r+b)^2-2b\right]}{4br}
\end{align*}

The initial conditions are
\begin{align*}
    x_0 &=0 \\
    y_0 &= \frac{\Omega_{m,0}^{\Lambda}(1+z_0)^3+2(1-\Omega_{m,0}^{\Lambda})}{2\left[\Omega_{m,0}^{\Lambda}(1+z_0)^3+(1-\Omega_{m,0}^{\Lambda})\right]}\\
    v_0 &= \frac{\Omega_{m,0}^{\Lambda}(1+z_0)^3+4(1-\Omega_{m,0}^{\Lambda})}{2\left[\Omega_{m,0}^{\Lambda}(1+z_0)^3+(1-\Omega_{m,0}^{\Lambda})\right]}\\
    \Omega_0&=\frac{\Omega_{m,0}^{\Lambda}(1+z_0)^3}{\Omega_{m,0}^{\Lambda}(1+z_0)^3+(1-\Omega_{m,0}^{\Lambda})}\\
    r_0&=\frac{\Omega_{m,0}^{\Lambda}(1+z_0)^3+4(1-\Omega_{m,0}^{\Lambda})}{1-\Omega_{m,0}^{\Lambda}}
\end{align*}

where $b,\Omega_{m,0}^{\Lambda}$ are the system parameters. Given a solution to the system, the Hubble parameter is computed as

$$H(z)=H_0^{\Lambda}\sqrt{\frac{r}{2v}(1-\Omega_{m,0}^{\Lambda})}$$

Here $H_0^{\Lambda}$ is also a parameter.

\section{Implementation Details}\label{sec:imp_details}

As described in \cref{sec:experiments}, we followed the implementation from \citep{chantada_cosmological_2022} to train the FCNNs in the first step. Architecture and hyperparameter details are shown in \cref{tab:impl_details_fcnn,tab:impl_details_nlm,tab:impl_details_bbb,tab:impl_details_hmc}. NLM shares FCNN details since it builds on FCNN.

We used Pytorch \citep{paszke_pytorch_2019} for FCNNs, Pyro \citep{bingham_pyro_2018} for BNNs, and Neurodiffeq \citep{chen_neurodiffeq_2020} for aiding in PINNs training.

\begin{table}[htpb]
    \caption{Implementation Details For FCNN.}
    \label{tab:impl_details_fcnn}
    \centering
    \begin{tabularx}{\textwidth}{YYYYYYYY}
    \hline
    \hline
     Equation & Input Dim. & Output Dim. & Hidden Units & Activation & Iterations & Samples per Dim. in a Batch & Learning Rate \\
     \hline
    $\Lambda$CDM & 1 & 1 & (32, 32) & Tanh & 100,000 & 64 & 0.001 \\
    CPL & 1 & 2 & (32, 32) & Tanh & 100,000 & 128 & 0.001 \\
    Quintessence & 1 & 1 & (32, 32) & Tanh & 100,000 & 32 & 0.001 \\
    HS & 1 & 1 & (32, 32) & Tanh & 600,000 & 32 & 0.001 \\
    \hline
    \hline
    \end{tabularx}
\end{table}

\begin{table}[htpb]
    \caption{Implementation Details For NLM.}
    \label{tab:impl_details_nlm}
    \centering
    \begin{tabularx}{\textwidth}{YYY}
    \hline
    \hline
    Equation & Samples per Dim. & Likelihood Std. \\
    \hline
    $\Lambda$CDM & 100 & 0.1 \\
    Quintessence & 32 & 0.005 \\
    HS & 32 & 0.005 \\
    \hline
    \hline
    \end{tabularx}
\end{table}

\begin{table}
    \caption{Implementation Details For BBB.}
    \label{tab:impl_details_bbb}
    \centering
    \begin{tabularx}{\textwidth}{YYYYYYYYYY}
    \hline
    \hline
    Equation & Input Dim. & Output Dim. & Hidden Units & Activation & Iterations & Samples per Dim. & Prior Std. & Learning Rate & Likelihood Std. \\
    \hline
    $\Lambda$CDM & 1 & 1 & (32, 32) & Tanh & 20,000 & 64 & 1 & 0.001 & 0.1 \\
    CPL & 1 & 2 & (32, 32) & Tanh & 20,000 & 128 & 1 & 0.001 & 0.01 \\
    Quintessence & 1 & 1 & (32, 32) & Tanh & 20,000 & 32 & 1 & 0.001 & 0.005 \\
    HS & 1 & 1 & (32, 32) & Tanh & 20,000 & 32 & 1 & 0.001 & 0.005 \\
    \hline
    \hline
    \end{tabularx}
\end{table}

\begin{table}
    \centering
    \caption{Implementation Details For HMC.}
    \label{tab:impl_details_hmc}
    \begin{adjustbox}{max width=\linewidth}
    \begin{tabular}{ccccccccccc}
    \hline
    \hline
    Equation & Input Dim. & Output Dim. & Hidden Units & Activation & Posterior Samples & Tune Samples & Samples per Dim. & Prior Std. & Likelihood Std. \\
    \hline
$\Lambda$CDM & 1 & 1 & (32, 32) & Tanh & 10,000 & 1,000 & 32 & 1 & 0.1 \\
CPL & 1 & 2 & (32, 32) & Tanh & 10,000 & 1,000 & 128 & 1 & 0.01 \\
Quintessence & 1 & 1 & (32, 32) & Tanh & 10,000 & 1,000 & 32 & 1 & 0.005 \\
HS & 1 & 1 & (32, 32) & Tanh & 10,000 & 1,000 & 32 & 1 & 0.005 \\
    \hline
    \hline
    \end{tabular}
    \end{adjustbox}
\end{table}

\begin{table}[H]
\centering
\caption{Measurements Of The Hubble Parameter H Using The Cosmic Chronometers Technique}
\label{tab:z_H_values}
\begin{tabularx}{\columnwidth}{YYY}
$z$ & $H(z) \pm \sigma_H\left[\frac{\text{km/s}}{\text{Mpc}}\right]$ & Ref. \\
\hline
0.09 & 69 $\pm$ 12 & \multirow{9}{*}{\citep{simon_constraints_2005}} \\
0.17 & 83 $\pm$ 8 & \\
0.27 & 77 $\pm$ 14 & \\
0.4 & 95 $\pm$ 17 & \\
0.9 & 117 $\pm$ 23 & \\
1.3 & 168 $\pm$ 17 & \\
1.43 & 177 $\pm$ 18 & \\
1.53 & 140 $\pm$ 14 & \\
1.75 & 202 $\pm$ 40 & \\
\hline
0.48 & 97 $\pm$ 62 & \multirow{2}{*}{\citep{stern_cosmic_2010}} \\
0.88 & 90 $\pm$ 40 & \\
\hline
0.1791 & 75 $\pm$ 4 & \multirow{8}{*}{\citep{moresco_improved_2012}} \\
0.1993 & 75 $\pm$ 5 & \\
0.3519 & 83 $\pm$ 14 & \\
0.5929 & 104 $\pm$ 13 & \\
0.6797 & 92 $\pm$ 8 & \\
0.7812 & 105 $\pm$ 12 & \\
0.8754 & 125 $\pm$ 17 & \\
1.037 & 154 $\pm$ 20 & \\
\hline
0.07 & 69 $\pm$ 19.6 & \multirow{4}{*}{\citep{zhang_cong_four_2014}} \\
0.12 & 68.6 $\pm$ 26.2 & \\
0.2 & 72.9 $\pm$ 29.6 & \\
0.28 & 88.8 $\pm$ 36.6 & \\
\hline
1.363 & 160 $\pm$ 33.6 & \multirow{2}{*}{\citep{moresco_raising_2015}} \\
1.965 & 186.5 $\pm$ 50.4 & \\
\hline
0.3802 & 83 $\pm$ 13.5 & \multirow{5}{*}{\citep{moresco_6_2016}} \\
0.4004 & 77 $\pm$ 10.2 & \\
0.4247 & 87.1 $\pm$ 11.2 & \\
0.4497 & 92.8 $\pm$ 12.9 & \\
0.4783 & 80.9 $\pm$ 9 & \\
\bottomrule
\end{tabularx}
\end{table}

\section{Exploration of Solutions, Residuals and Errors}\label{sm:sol_dist_exp}
To analyze the behavior of residuals and errors in PINNs, we conducted a systematic data collection process. We trained N deterministic PINNs for I iterations to solve each of the cosmological models under study. Throughout training, we recorded solutions and residuals every 10 iterations, evaluating them on a fixed, equidistant set of T time steps. Detailed specifications for each cosmological model are provided in \cref{table:exploration_res_err}.

\begin{table}
    \centering
    \caption{Implementation Details of \cref{sm:sol_dist_exp}.}
    \label{table:exploration_res_err}
    \begin{adjustbox}{max width=\linewidth}
    \begin{tabular}{ccccc}
    \hline
    \hline
    Equation & Number of NN (N) & Iterations (I) & Time Steps (T) \\
    \hline
$\Lambda$CDM & 1000 & 1000 & 50 \\
CPL & 1000 & 1000 & 50 \\
Quintessence & 1500 & 1000 & 50 \\
HS & 200 & 5000 & 50 \\
    \hline
    \hline
    \end{tabular}
    \end{adjustbox}
\end{table}

\paragraph{Distribution of Solutions}
After collecting the data, we generated histograms of the solutions throughout training, as shown in \cref{fig:lcdm_sol_dist,fig:cpl_sol_dist,fig:quint_sol_dist_1,fig:quint_sol_dist_2,fig:hs_sol_dist_1,fig:hs_sol_dist_2,fig:hs_sol_dist_3,fig:hs_sol_dist_4,fig:hs_sol_dist_5}. For reference, we overlaid Gaussian distributions with the sample mean and variance in orange. The distribution of solutions varies across cosmological models, with $\Lambda$CDM and CPL being the closest to Gaussian, while HS exhibits noticeable bimodality.

\paragraph{Residuals and Errors Relationship}
Understanding the relationship between solution errors and residuals is crucial for assessing whether residuals can serve as a reliable source of calibrated uncertainties. Ideally, a perfect correlation between residuals and errors would indicate that residuals effectively capture uncertainty.

In \cref{fig:err_vs_res_lcdm,fig:err_vs_res_cpl,fig:err_vs_res_quint,fig:err_vs_res_hs}, we plot the absolute, normalized values of relative errors against their corresponding residuals for all cosmological models. The results indicate no clear correlation; in fact, the trend often shows higher errors associated with lower residuals. This behavior aligns with the implicit bias described by \cite{wang_respecting_2022}, where PINNs exhibit lower convergence rates near initial conditions.

\begin{figure}[H]
    \centering
 \includegraphics[width=\linewidth]{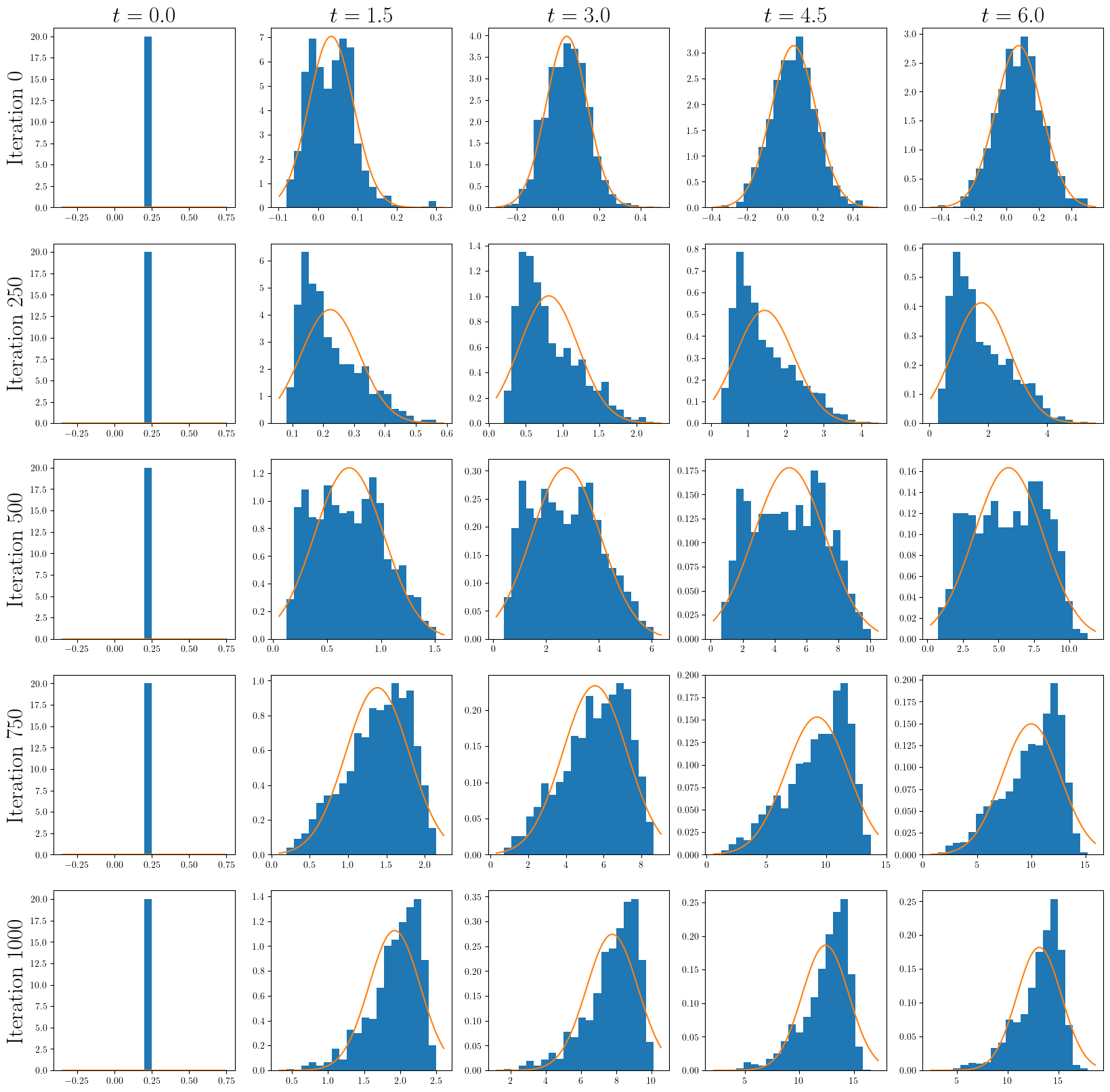}
 \caption{Distribution of $x_m(z)$ from $\Lambda$CDM. Samples were collected as described in \ref{sm:sol_dist_exp}. Orange lines show a Gaussian distribution with sample mean and variance.}
 \label{fig:lcdm_sol_dist}
\end{figure}

\begin{figure}[H]
    \centering
 \includegraphics[width=\linewidth]{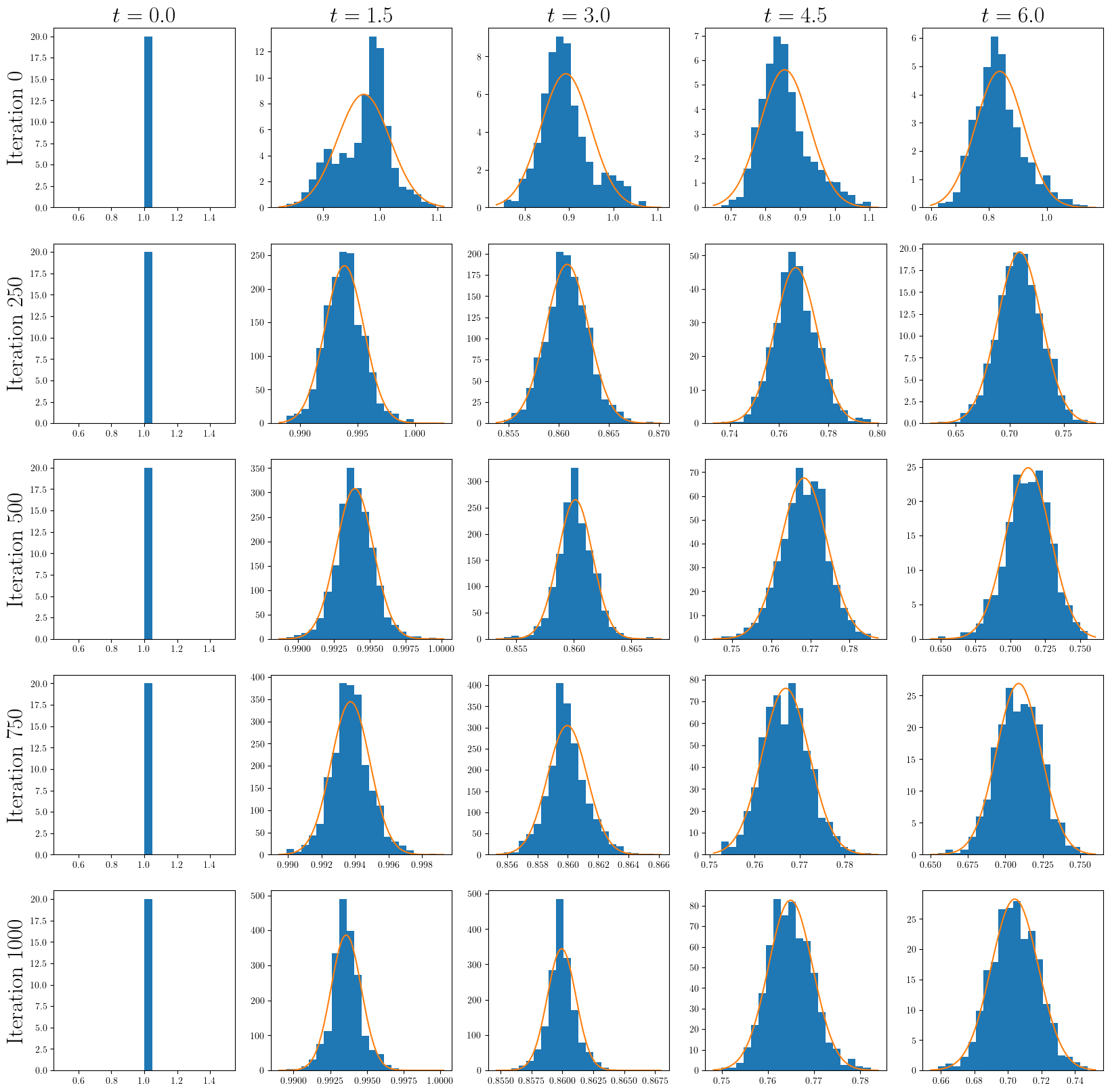}
 \caption{Distribution of $x_{\text{DE}}(z)$ from CPL. Samples were collected as described in \ref{sm:sol_dist_exp}. Orange lines show a Gaussian distribution with sample mean and variance.}
 \label{fig:cpl_sol_dist}
\end{figure}

\begin{figure}[H]
    \centering
 \includegraphics[width=\linewidth]{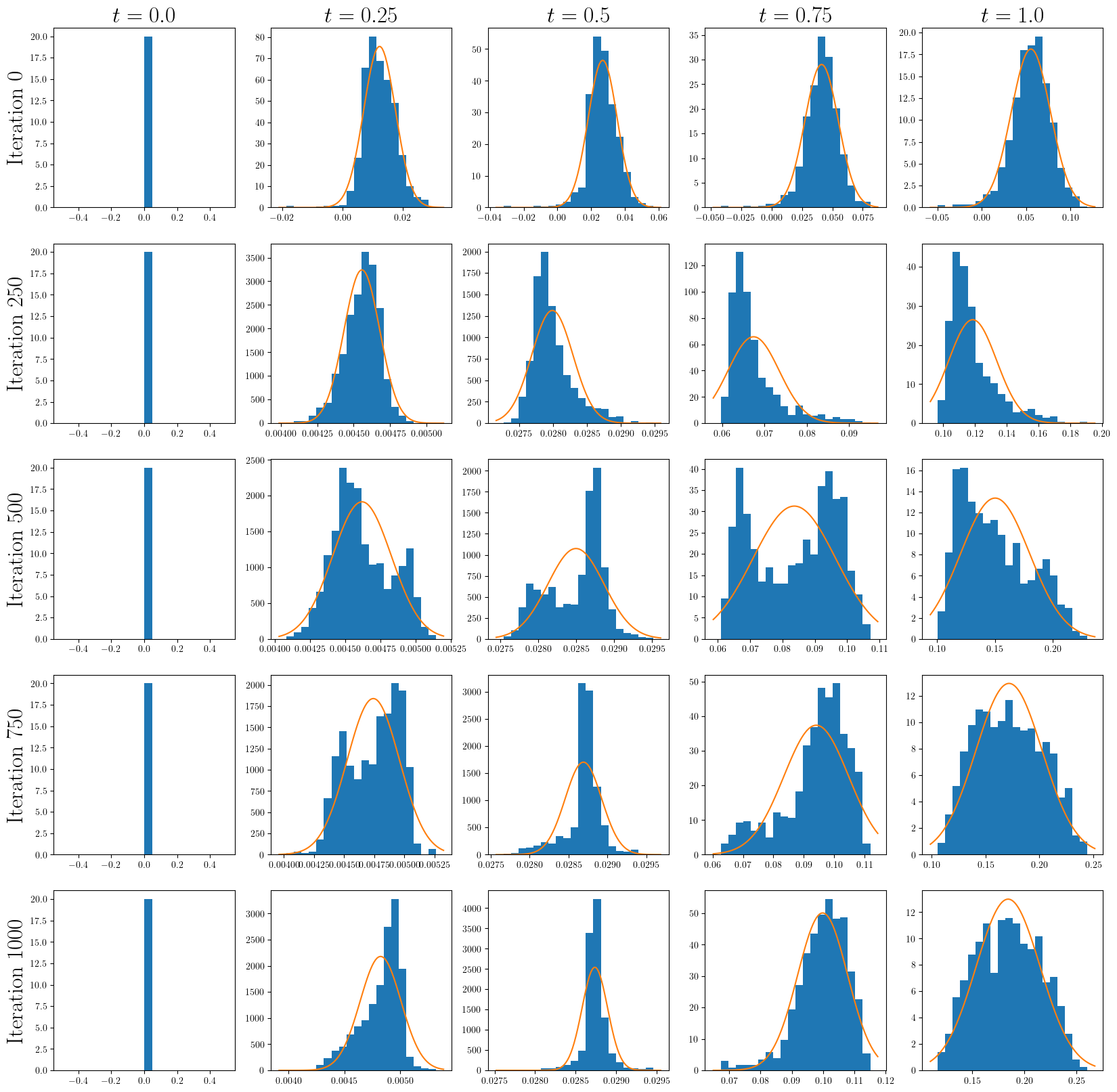}
 \caption{Distribution of $x(N)$ from Quintessence. Samples were collected as described in \ref{sm:sol_dist_exp}. Orange lines show a Gaussian distribution with sample mean and variance.}
 \label{fig:quint_sol_dist_1}
\end{figure}

\begin{figure}[H]
    \centering
 \includegraphics[width=\linewidth]{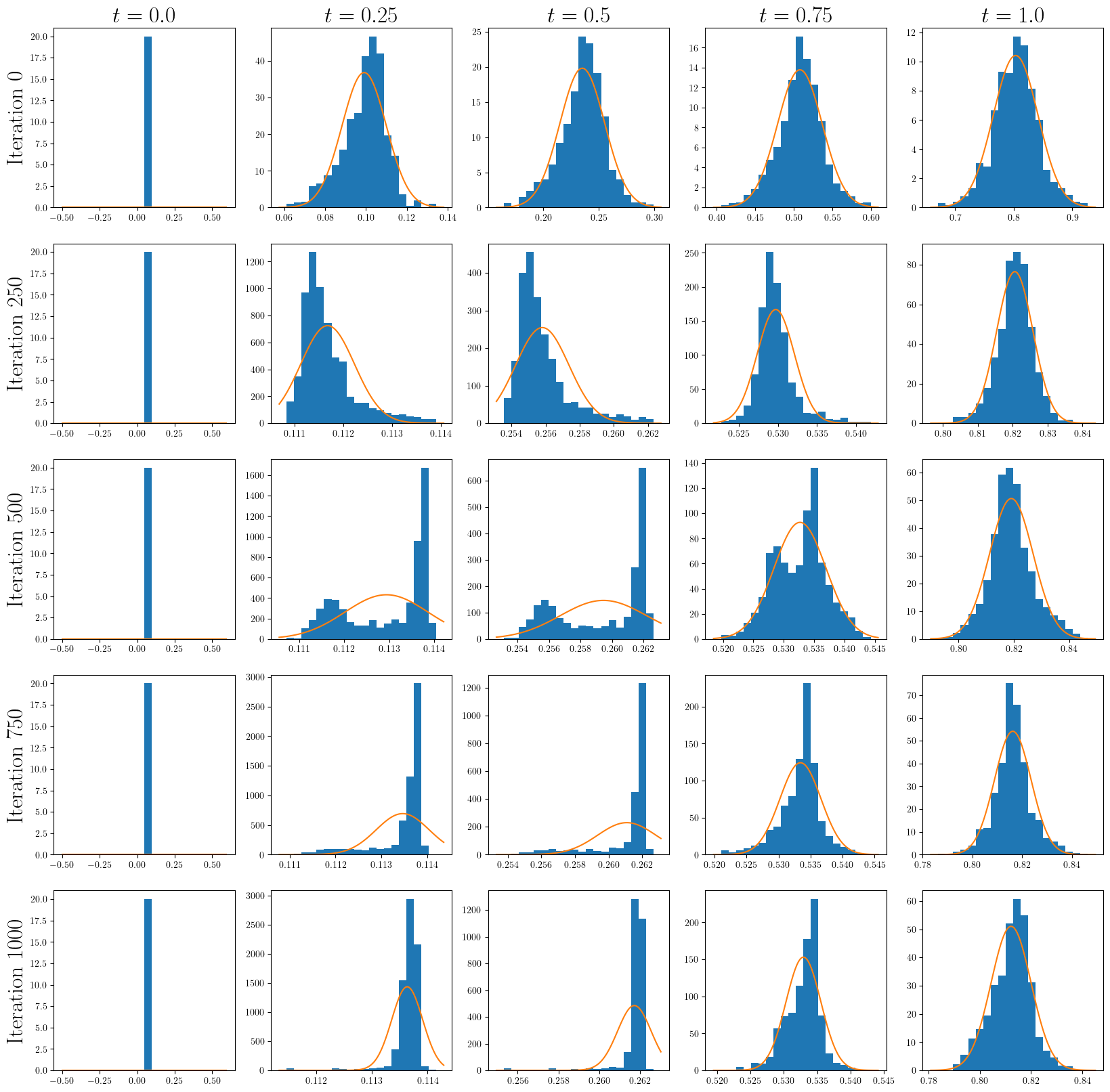}
 \caption{Distribution of $y(N)$ from Quintessence. Samples were collected as described in \ref{sm:sol_dist_exp}. Orange lines show a Gaussian distribution with sample mean and variance.}
 \label{fig:quint_sol_dist_2}
\end{figure}

\begin{figure}[H]
    \centering
 \includegraphics[width=\linewidth]{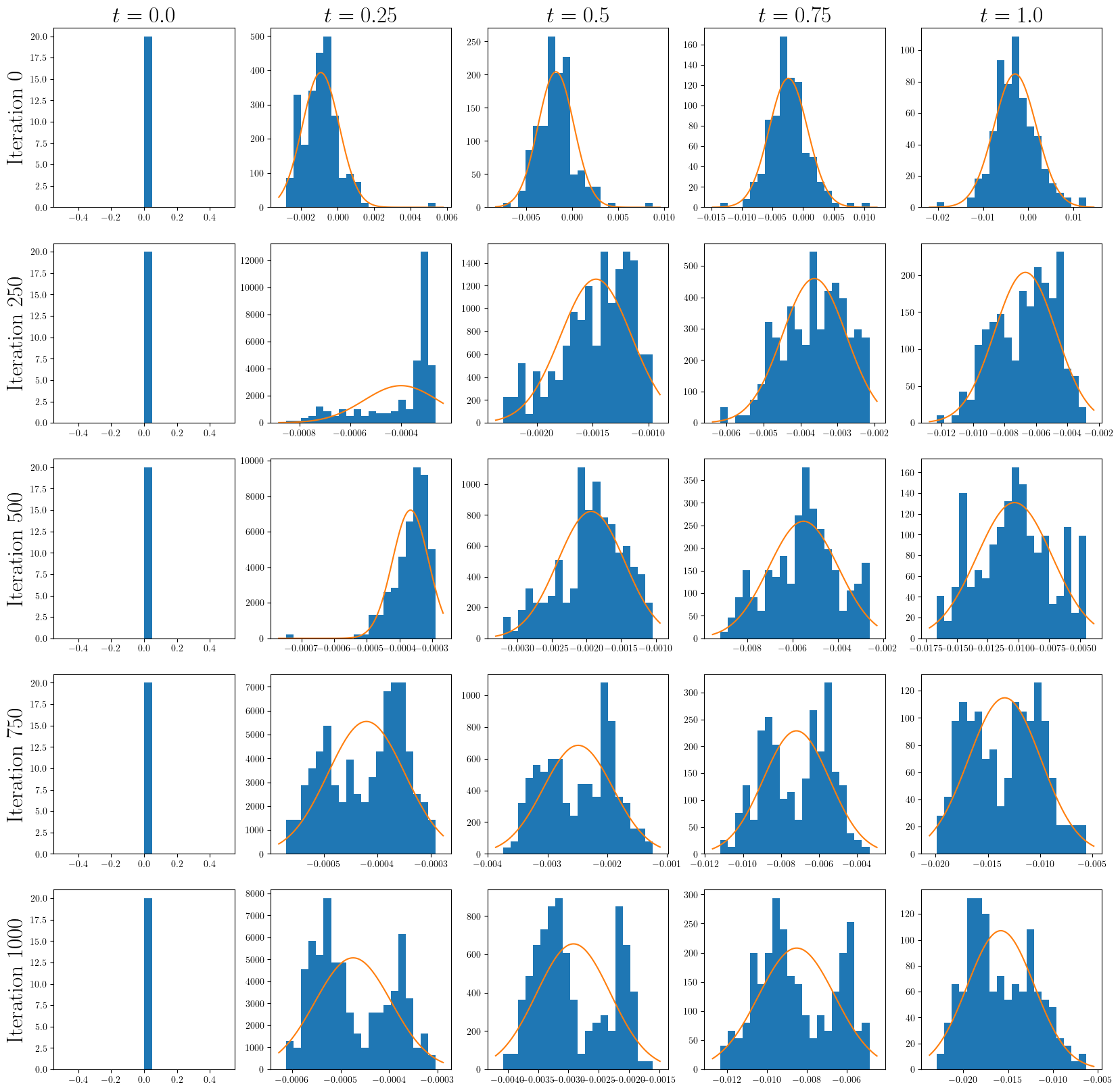}
 \caption{Distribution of $x(z)$ from HS. Samples were collected as described in \ref{sm:sol_dist_exp}. Orange lines show a Gaussian distribution with sample mean and variance.}
 \label{fig:hs_sol_dist_1}
\end{figure}

\begin{figure}[H]
    \centering
 \includegraphics[width=\linewidth]{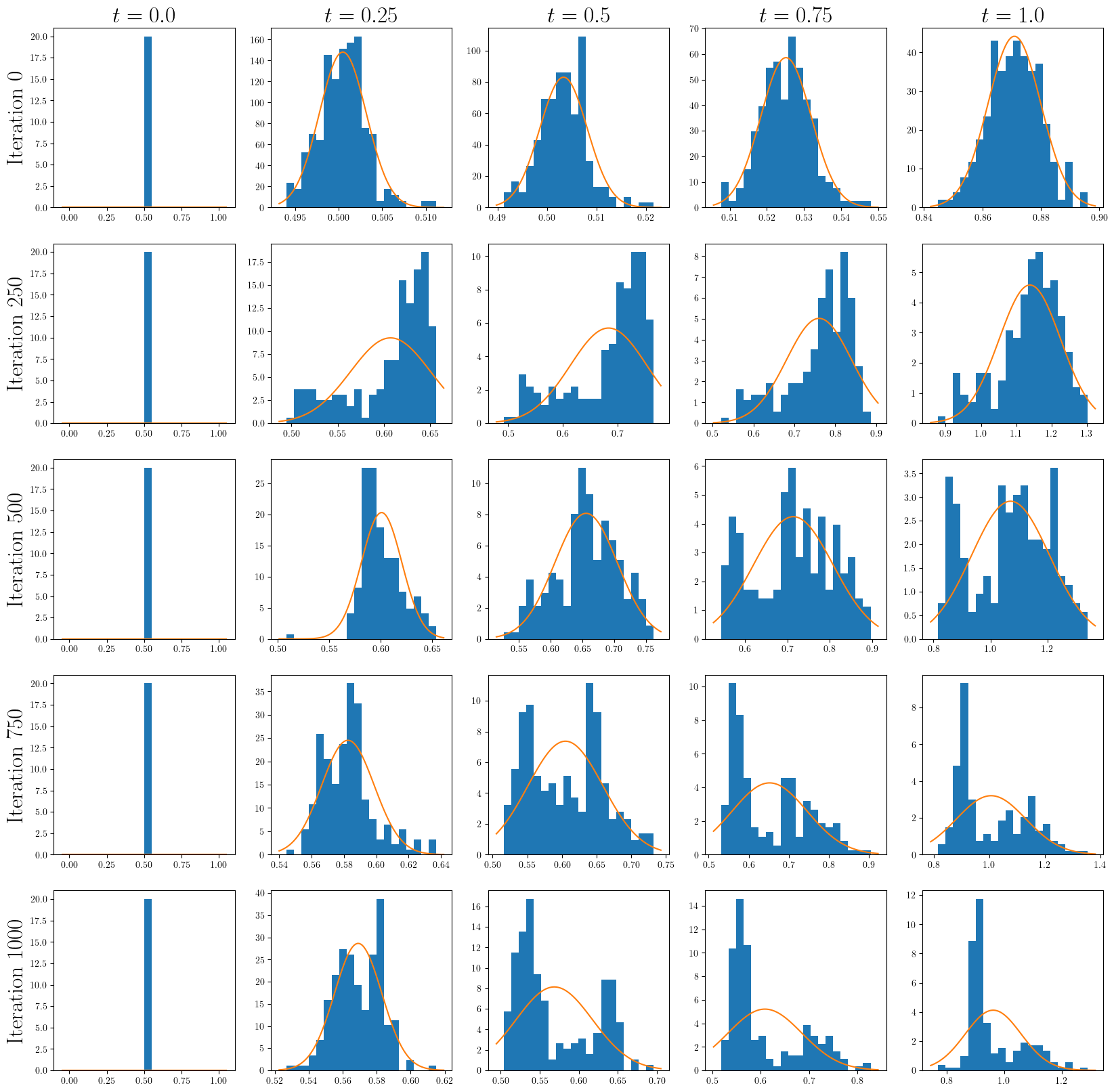}
 \caption{Distribution of $y(z)$ from HS. Samples were collected as described in \ref{sm:sol_dist_exp}. Orange lines show a Gaussian distribution with sample mean and variance.}
 \label{fig:hs_sol_dist_2}
\end{figure}

\begin{figure}[H]
    \centering
 \includegraphics[width=\linewidth]{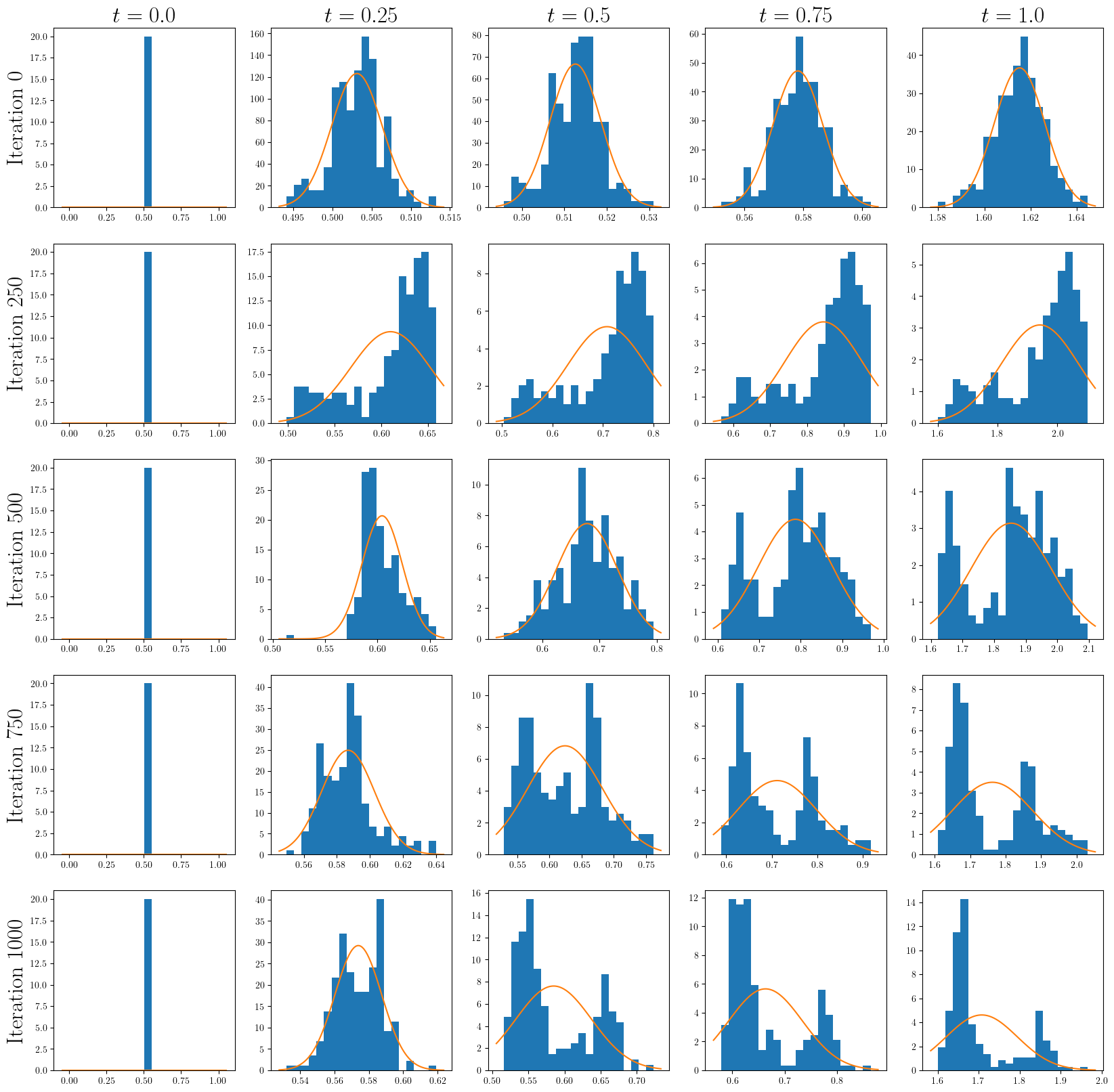}
 \caption{Distribution of $v(z)$ from HS. Samples were collected as described in \ref{sm:sol_dist_exp}. Orange lines show a Gaussian distribution with sample mean and variance.}
 \label{fig:hs_sol_dist_3}
\end{figure}

\begin{figure}[H]
    \centering
 \includegraphics[width=\linewidth]{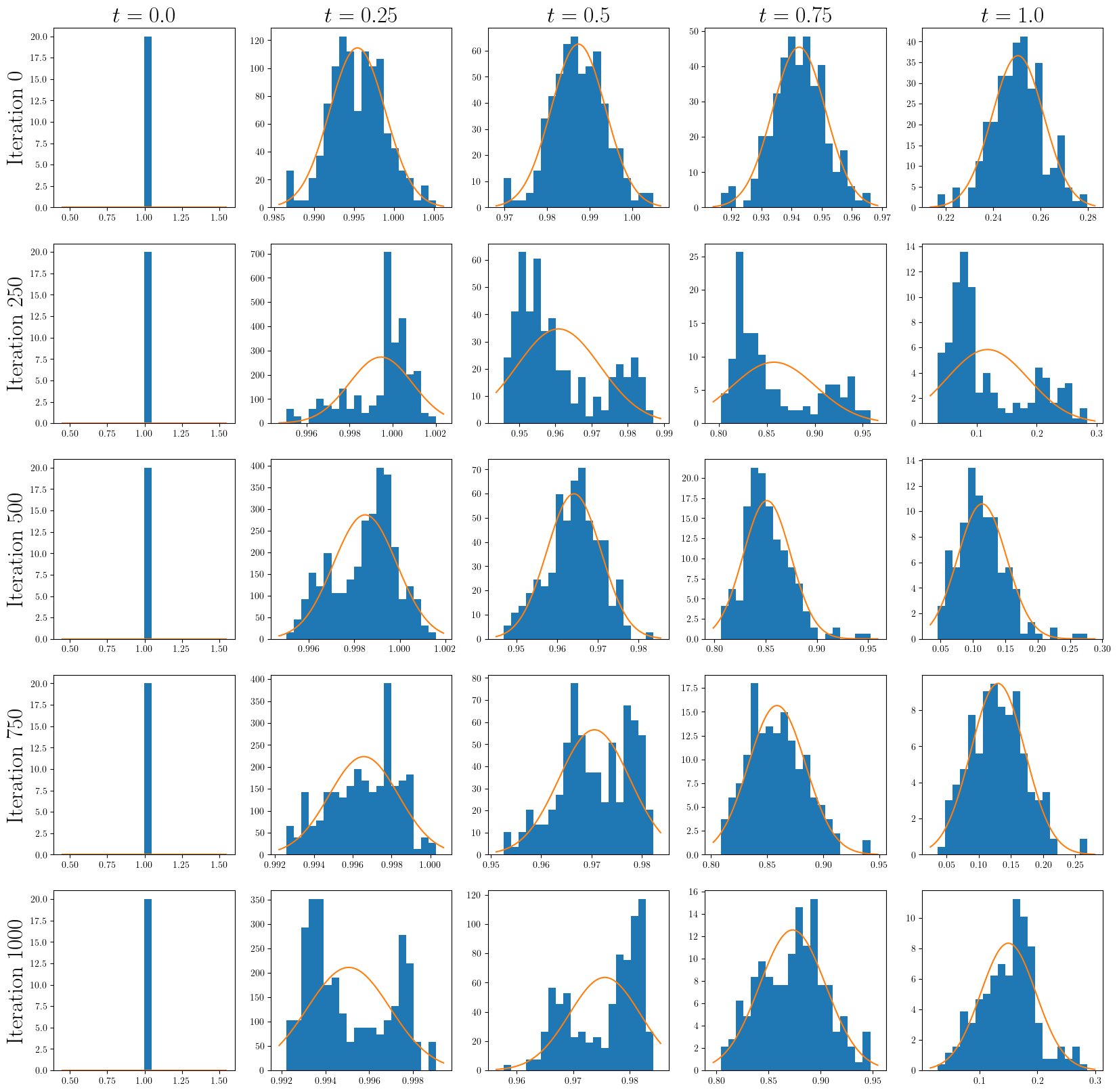}
 \caption{Distribution of $\Omega(z)$ from HS. Samples were collected as described in \ref{sm:sol_dist_exp}. Orange lines show a Gaussian distribution with sample mean and variance.}
 \label{fig:hs_sol_dist_4}
\end{figure}

\begin{figure}[H]
    \centering
 \includegraphics[width=\linewidth]{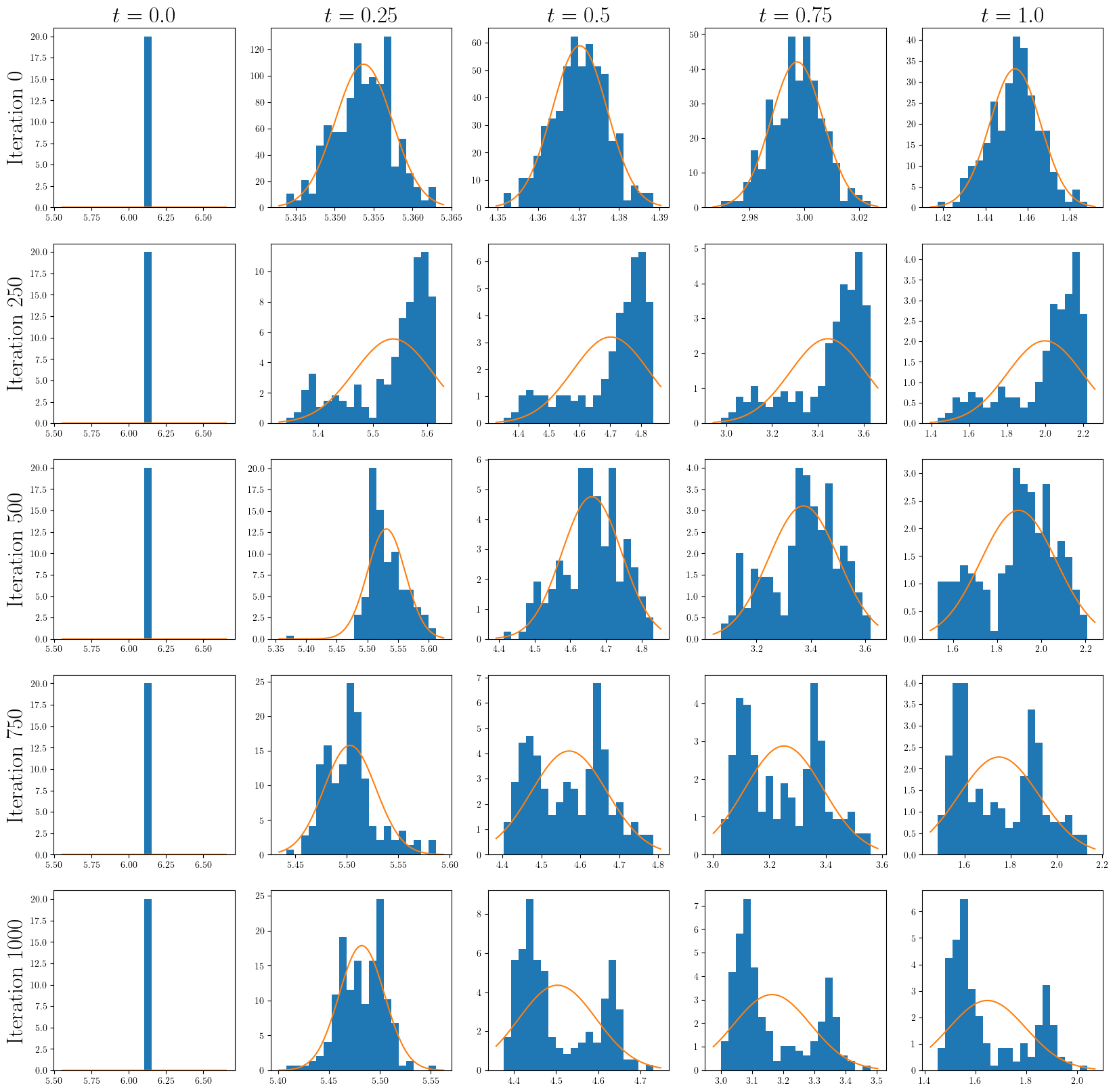}
 \caption{Distribution of $r(z)$ from HS. Samples were collected as described in \ref{sm:sol_dist_exp}. Orange lines show a Gaussian distribution with sample mean and variance.}
 \label{fig:hs_sol_dist_5}
\end{figure}

\begin{figure}[H]
    \centering
 \includegraphics[width=\linewidth]{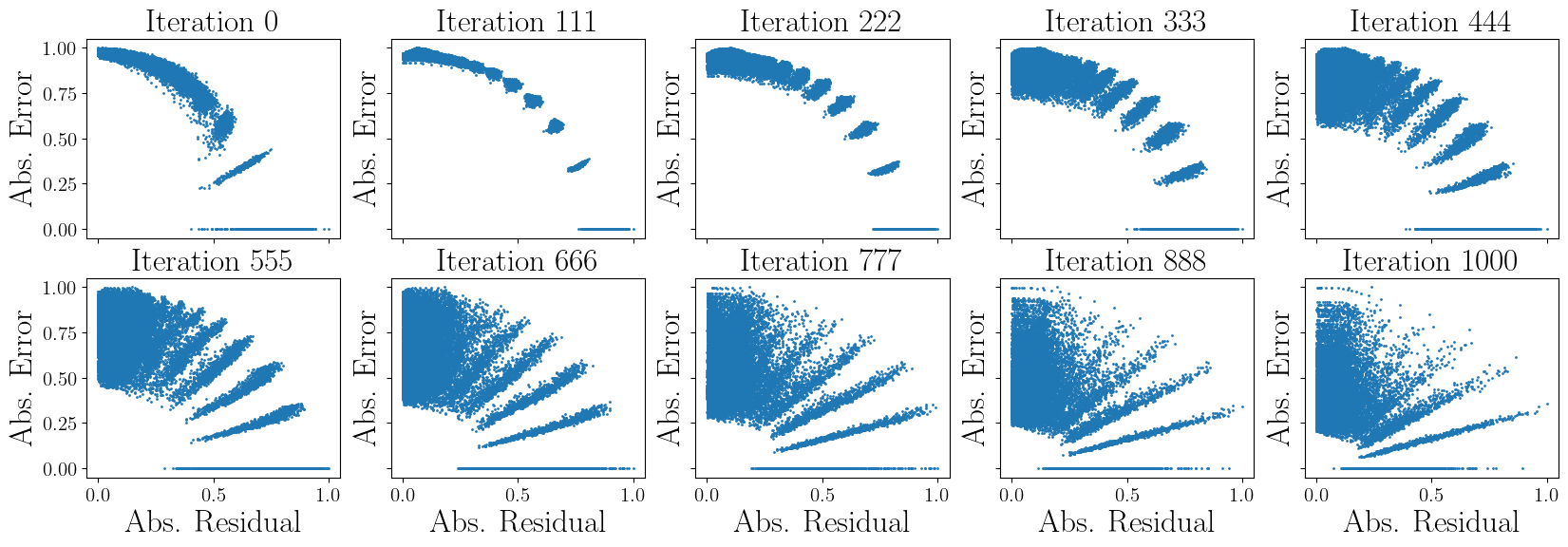}
 \caption{Relationship Between Solution Errors and Residuals from $\Lambda$CDM. Samples were collected as described in \ref{sm:sol_dist_exp}.}
 \label{fig:err_vs_res_lcdm}
\end{figure}

\begin{figure}[H]
    \centering
 \includegraphics[width=\linewidth]{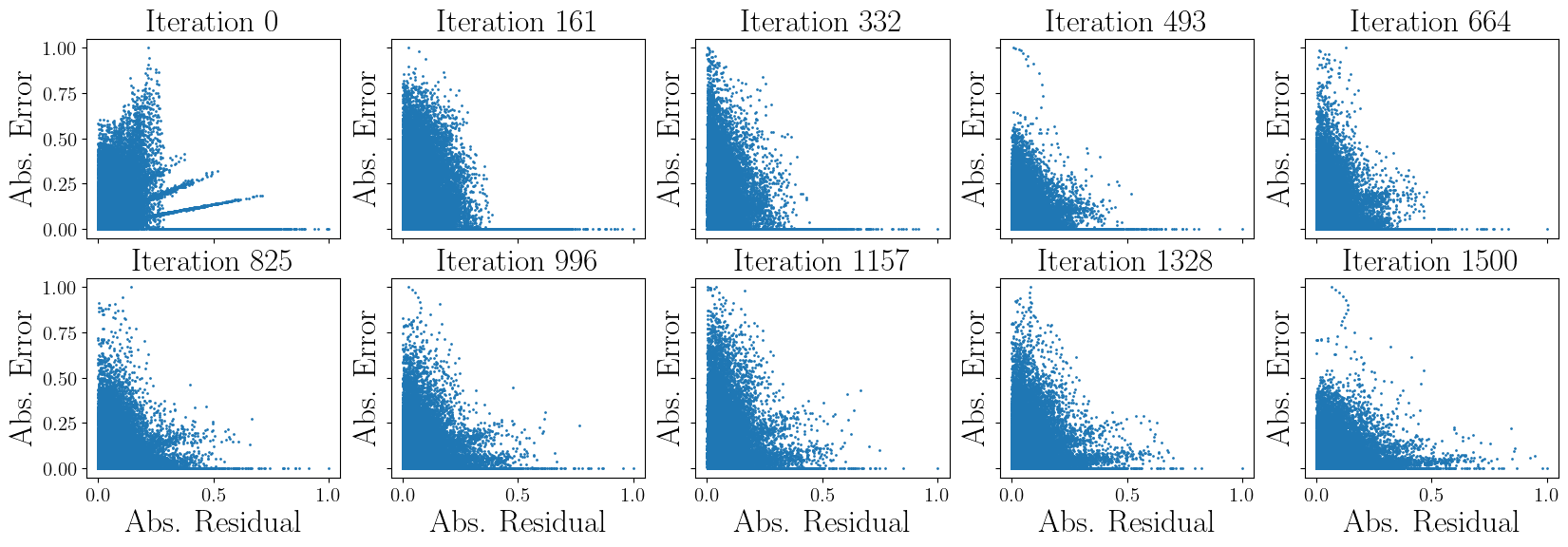}
 \caption{Relationship Between Solution Errors and Residuals from CPL. Samples were collected as described in \ref{sm:sol_dist_exp}.}
 \label{fig:err_vs_res_cpl}
\end{figure}

\begin{figure}[H]
    \centering
 \includegraphics[width=\linewidth]{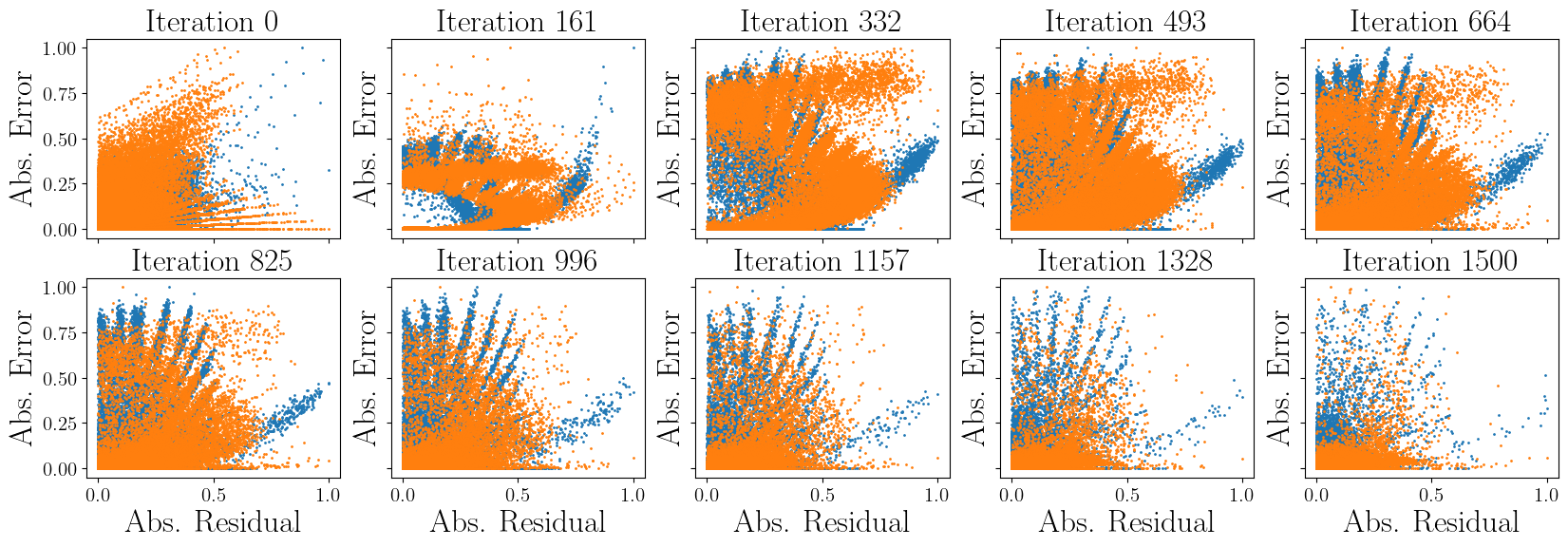}
 \caption{Relationship Between Solution Errors and Residuals from Quintessence. Samples were collected as described in \ref{sm:sol_dist_exp}. Colors differentiate variables in the system.}
 \label{fig:err_vs_res_quint}
\end{figure}

\begin{figure}[H]
    \centering
 \includegraphics[width=\linewidth]{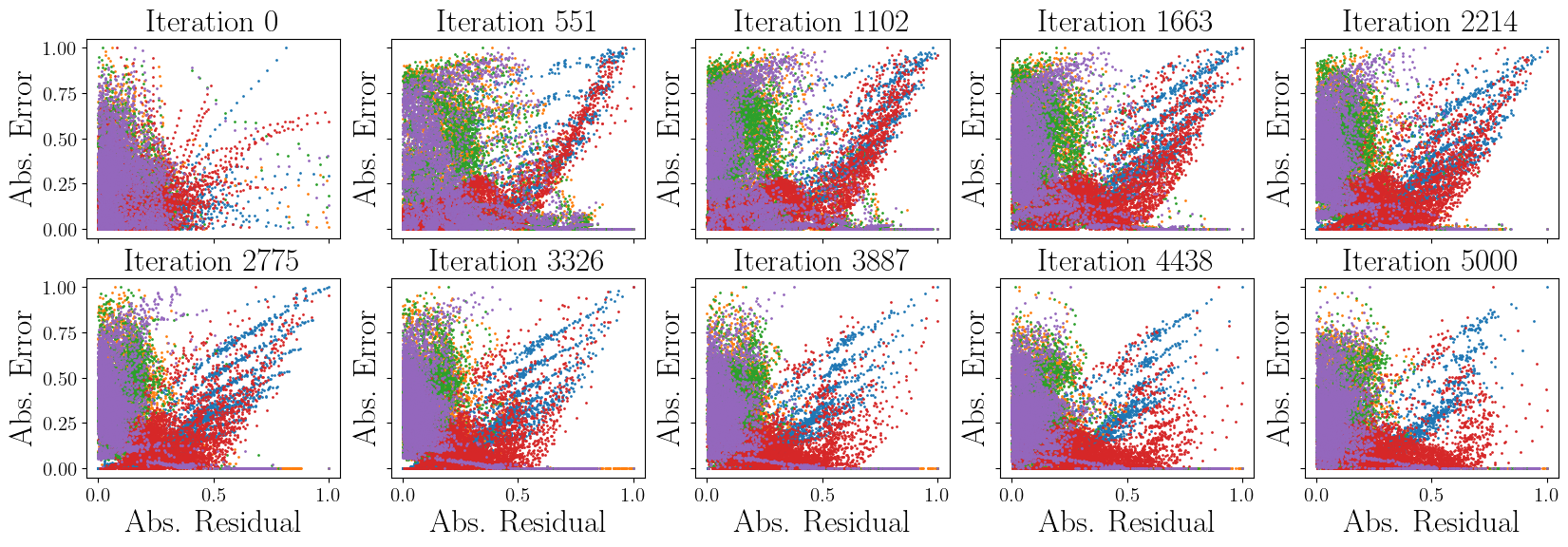}
 \caption{Relationship Between Solution Errors and Residuals from HS. Samples were collected as described in \ref{sm:sol_dist_exp}. Colors differentiate variables in the system.}
 \label{fig:err_vs_res_hs}
\end{figure}

\section{Aditional Results}\label{sec:add_results}

\begin{table}[h]
    \caption{Parameters Mean And Standard Deviation Estimation Of All Cosmology Models.}
    \label{tab:inverse}
    \centerline{
    \begin{adjustbox}{max width=\linewidth}
    \begin{tabular}{c c c c c c c c}
\hline
\hline
Equation & Method & $w_0$ & $w_1$ & $\lambda$ & $b$ & $\Omega_{m,0}$ & $H_0$ \\ 
\hline
 \multirow{9}{*}{\rotatebox[origin=c]{90}{$\Lambda$CDM}} & FCNN & - & - & - & - & 0.31 $\pm$ 0.05 & 68.5 $\pm$ 2.63 \\
 \cline{2-8}
 & BBB & - & - & - & - & 0.1 $\pm$ 0.01 & 79.95 $\pm$ 0.75 \\
 & HMC & - & - & - & - & 0.34 $\pm$ 0.01 & 68.23 $\pm$ 0.4 \\
 \cline{2-8}
 & NLM + 2S & - & - & - & - & 0.33 $\pm$ 0.01 & 67.5 $\pm$ 0.51 \\
 & BBB + 2S & - & - & - & - & 0.33 $\pm$ 0.01 & 67.42 $\pm$ 0.68 \\
 & HMC + 2S & - & - & - & - & 0.32 $\pm$ 0.01 & 68.05 $\pm$ 0.4 \\
 \cline{2-8}
 & NLM + 2S + EB & - & - & - & - & 0.32 $\pm$ 0.01 & 68.14 $\pm$ 0.36 \\
 & BBB + 2S + EB & - & - & - & - & 0.31 $\pm$ 0.01 & 68.42 $\pm$ 0.43 \\
 & HMC + 2S + EB & - & - & - & - & 0.32 $\pm$ 0.01 & 68.16 $\pm$ 0.37 \\
 \hline
 \multirow{9}{*}{\rotatebox[origin=c]{90}{CPL}} & FCNN & -1.03 $\pm$ 0.3 & -2.49 $\pm$ 2.55 & - & - & 0.34 $\pm$ 0.12 & 65.76 $\pm$ 7.61 \\
 \cline{2-8}
 & BBB & -1.0 $\pm$ 0.25 & -2.53 $\pm$ 3.06 & - & - & 0.32 $\pm$ 0.13 & 66.08 $\pm$ 6.47 \\
 & HMC & -1.05 $\pm$ 0.3 & -2.46 $\pm$ 2.88 & - & - & 0.32 $\pm$ 0.14 & 65.8 $\pm$ 7.06 \\
 \cline{2-8}
 & NLM + 2S & -1.06 $\pm$ 0.32 & -2.91 $\pm$ 2.76 & - & - & 0.3 $\pm$ 0.16 & 64.33 $\pm$ 7.58 \\
 & BBB + 2S & -1.01 $\pm$ 0.3 & -2.82 $\pm$ 2.48 & - & - & 0.31 $\pm$ 0.16 & 65.01 $\pm$ 6.63 \\
 & HMC + 2S & -1.07 $\pm$ 0.33 & -3.02 $\pm$ 2.45 & - & - & 0.34 $\pm$ 0.15 & 66.74 $\pm$ 7.27 \\
 \cline{2-8}
 & NLM + 2S + EB & -0.91 $\pm$ 0.29 & -2.61 $\pm$ 2.63 & - & - & 0.44 $\pm$ 0.14 & 63.53 $\pm$ 6.46 \\
 & BBB + 2S + EB & -1.06 $\pm$ 0.25 & -2.24 $\pm$ 2.85 & - & - & 0.32 $\pm$ 0.11 & 66.41 $\pm$ 6.7 \\
 & HMC + 2S + EB & -1.12 $\pm$ 0.28 & -2.57 $\pm$ 3.18 & - & - & 0.32 $\pm$ 0.14 & 67.65 $\pm$ 6.62 \\
 \hline
 \multirow{4}{*}{\rotatebox[origin=c]{90}{Quint.}} & FCNN & - & - & 1.24 $\pm$ 1.0 & - & 0.27 $\pm$ 0.07 & 67.36 $\pm$ 7.14 \\
 \cline{2-8}
 & BBB  & - & - & 0.88 $\pm$ 1.16 & - & 0.29 $\pm$ 0.07 & 66.33 $\pm$ 5.28 \\
 & HMC  & - & - & 0.92 $\pm$ 1.15 & - & 0.3 $\pm$ 0.06 & 67.96 $\pm$ 4.43 \\
 \cline{2-8}
 & NLM + 2S  & - & - & 1.36 $\pm$ 0.83 & - & 0.32 $\pm$ 0.1 & 64.5 $\pm$ 4.95 \\
 & BBB + 2S  & - & - & 1.13 $\pm$ 1.19 & - & 0.3 $\pm$ 0.05 & 68.05 $\pm$ 5.46 \\
 & HMC + 2S  & - & - & 0.8 $\pm$ 1.14 & - & 0.3 $\pm$ 0.06 & 65.7 $\pm$ 5.62 \\
\hline
 \multirow{4}{*}{\rotatebox[origin=c]{90}{HS}} & FCNN & - & - & - & 2.53 $\pm$ 1.87 & 0.28 $\pm$ 0.07 & 69.03 $\pm$ 7.94 \\
 \cline{2-8}
 & BBB  & - & - & - & 1.82 $\pm$ 1.95 & 0.3 $\pm$ 0.06 & 66.75 $\pm$ 6.75 \\
 & HMC  & - & - & - & 2.06 $\pm$ 1.96 & 0.27 $\pm$ 0.08 & 66.73 $\pm$ 6.5 \\
 \cline{2-8}
 & NLM + 2S  & - & - & - & 2.34 $\pm$ 1.84 & 0.28 $\pm$ 0.12 & 57.44 $\pm$ 8.81 \\
 & BBB + 2S  & - & - & - & 1.98 $\pm$ 1.87 & 0.28 $\pm$ 0.07 & 72.78 $\pm$ 10.01 \\
 & HMC + 2S  & - & - & - & 2.29 $\pm$ 1.67 & 0.28 $\pm$ 0.09 & 67.82 $\pm$ 10.38 \\
 \hline
 \hline
\end{tabular}
        \end{adjustbox}
    }
\end{table}

\begin{table}[]
    \centering
    \caption{Smallest Number Of Sigmas Within Which The Results From \citep{dagostino_measurements_2020,motta_taxonomy_2021,akrami_landscape_2019} Fall.}
    %\begin{adjustbox}{max width=\linewidth}
        \begin{tabular}{c c c c c c c c}
\hline
\hline
Equation & Method & $w_0$ & $w_1$ & $\lambda$ & $b$ & $\Omega_{m,0}$ & $H_0$ \\ 
\hline
 \multirow{9}{*}{\rotatebox[origin=c]{90}{$\Lambda$CDM}} & FCNN & - & - & - & - & 1$\sigma$ & 1$\sigma$ \\
 \cline{2-8}
 & BBB & - & - & - & - & 19$\sigma$ & 15$\sigma$ \\
 & HMC & - & - & - & - & 5$\sigma$ & 3$\sigma$ \\
 \cline{2-8}
 & NLM & - & - & - & - & 4$\sigma$ & 4$\sigma$ \\
 & BBB & - & - & - & - & 3$\sigma$ & 3$\sigma$ \\
 & HMC & - & - & - & - & 4$\sigma$ & 3$\sigma$ \\
 \cline{2-8}
 & NLM + EB & - & - & - & - & 4$\sigma$ & 3$\sigma$ \\
 & BBB + EB & - & - & - & - & 3$\sigma$ & 2$\sigma$ \\
 & HMC + EB & - & - & - & - & 4$\sigma$ & 3$\sigma$ \\
 \hline
 \multirow{9}{*}{\rotatebox[origin=c]{90}{CPL}} & FCNN & 2$\sigma$ & 1$\sigma$ & - & - & 1$\sigma$ & 1$\sigma$ \\
 \cline{2-8}
 & BBB & 3$\sigma$ & 1$\sigma$ & - & - & 1$\sigma$ & 2$\sigma$ \\
 & HMC & 2$\sigma$ & 1$\sigma$ & - & - & 1$\sigma$ & 2$\sigma$ \\
 \cline{2-8}
 & NLM & 2$\sigma$ & 1$\sigma$ & - & - & 1$\sigma$ & 2$\sigma$ \\
 & BBB & 2$\sigma$ & 2$\sigma$ & - & - & 1$\sigma$ & 2$\sigma$ \\
 & HMC & 2$\sigma$ & 2$\sigma$ & - & - & 1$\sigma$ & 1$\sigma$ \\
 \cline{2-8}
 & NLM + EB & 3$\sigma$ & 1$\sigma$ & - & - & 2$\sigma$ & 2$\sigma$ \\
 & BBB + EB & 2$\sigma$ & 1$\sigma$ & - & - & 1$\sigma$ & 1$\sigma$ \\
 & HMC + EB & 2$\sigma$ & 1$\sigma$ & - & - & 1$\sigma$ & 1$\sigma$ \\
 \hline
 \multirow{6}{*}{\rotatebox[origin=c]{90}{Quint.}} & FCNN & - & - & 2$\sigma$ & - & 1$\sigma$ & - \\
 \cline{2-8}
 & BBB  & - & - & 1$\sigma$ & - & 1$\sigma$ & - \\
 & HMC  & - & - & 1$\sigma$ & - & 1$\sigma$ & - \\
 \cline{2-8}
 & NLM  & - & - & 2$\sigma$ & - & 1$\sigma$ & - \\
 & BBB  & - & - & 1$\sigma$ & - & 1$\sigma$ & - \\
 & HMC  & - & - & 1$\sigma$ & - & 1$\sigma$ & - \\
\hline
 \multirow{6}{*}{\rotatebox[origin=c]{90}{HS}} & FCNN & - & - & - & 2$\sigma$ & 1$\sigma$ & 1$\sigma$ \\
 \cline{2-8}
 & BBB  & - & - & - & 1$\sigma$ & 1$\sigma$ & 1$\sigma$ \\
 & HMC  & - & - & - & 1$\sigma$ & 1$\sigma$ & 1$\sigma$ \\
 \cline{2-8}
 & NLM  & - & - & - & 2$\sigma$ & 1$\sigma$ & 2$\sigma$ \\
 & BBB  & - & - & - & 1$\sigma$ & 1$\sigma$ & 1$\sigma$ \\
 & HMC  & - & - & - & 2$\sigma$ & 1$\sigma$ & 1$\sigma$ \\
 \hline
 \hline
\end{tabular}
    %\end{adjustbox}
    \label{tab:sigmas}
\end{table}

\begin{figure}[H]
    \includegraphics[width=\linewidth]{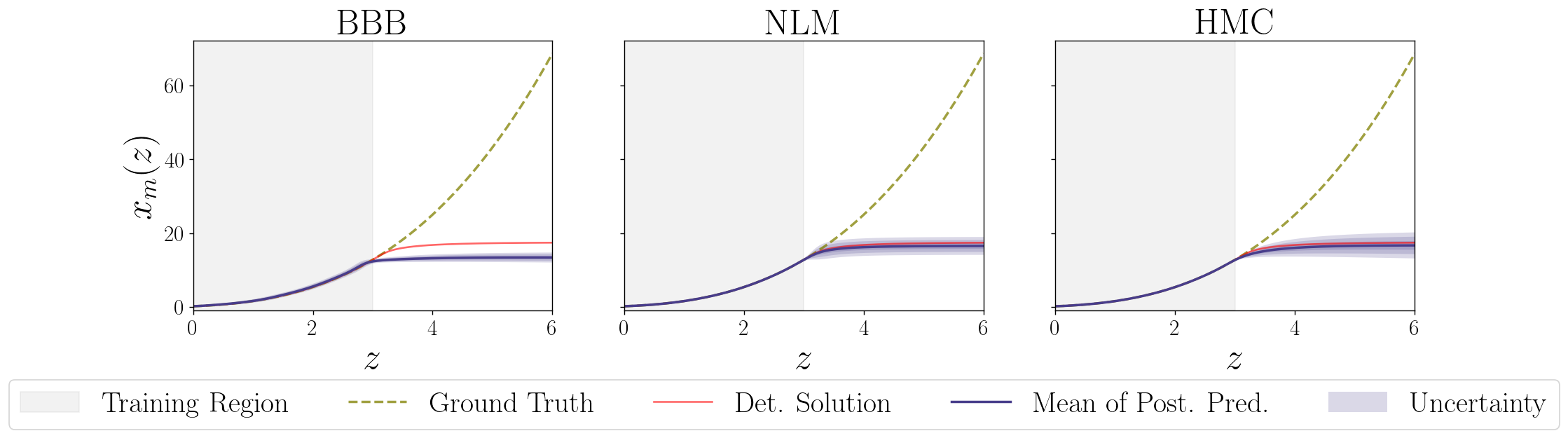}
    \caption{$\Lambda$CDM Bayesian Solutions. The Analytic Solution Is Presented In Dotted Lines.}
    \label{fig:lcdm_forward}
\end{figure}
\begin{figure}[H]
    \includegraphics[width=\linewidth]{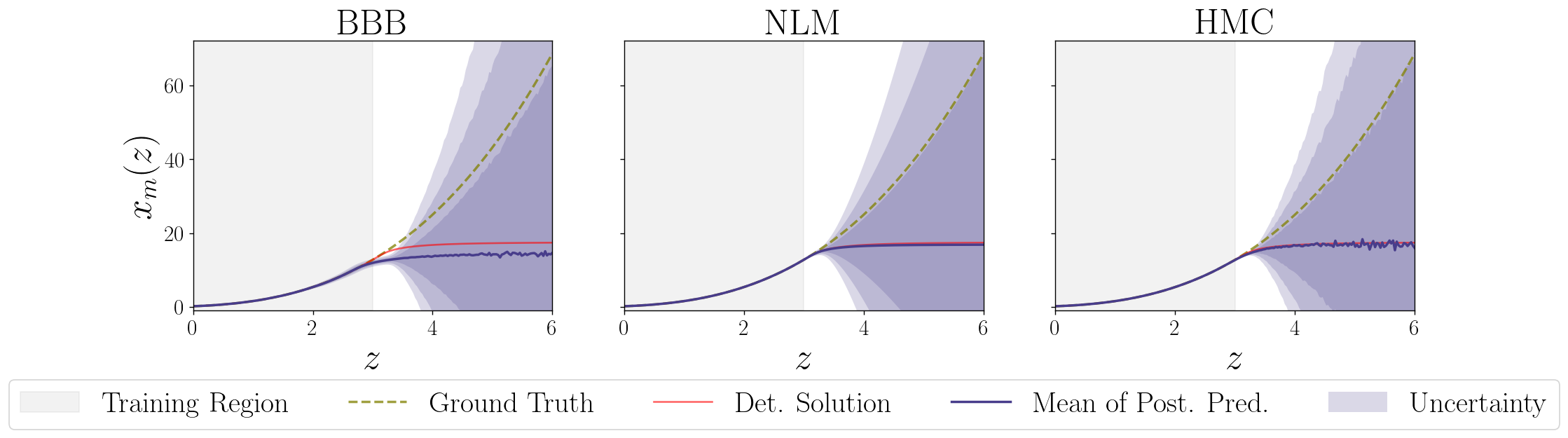}
    \caption{$\Lambda$CDM Bayesian Solutions With Error Bounds. The Analytic Solution Is Presented In Dotted Lines.}
    \label{fig:lcdm_forward_eb}
\end{figure}

\begin{figure}[H]
    \includegraphics[width=\linewidth]{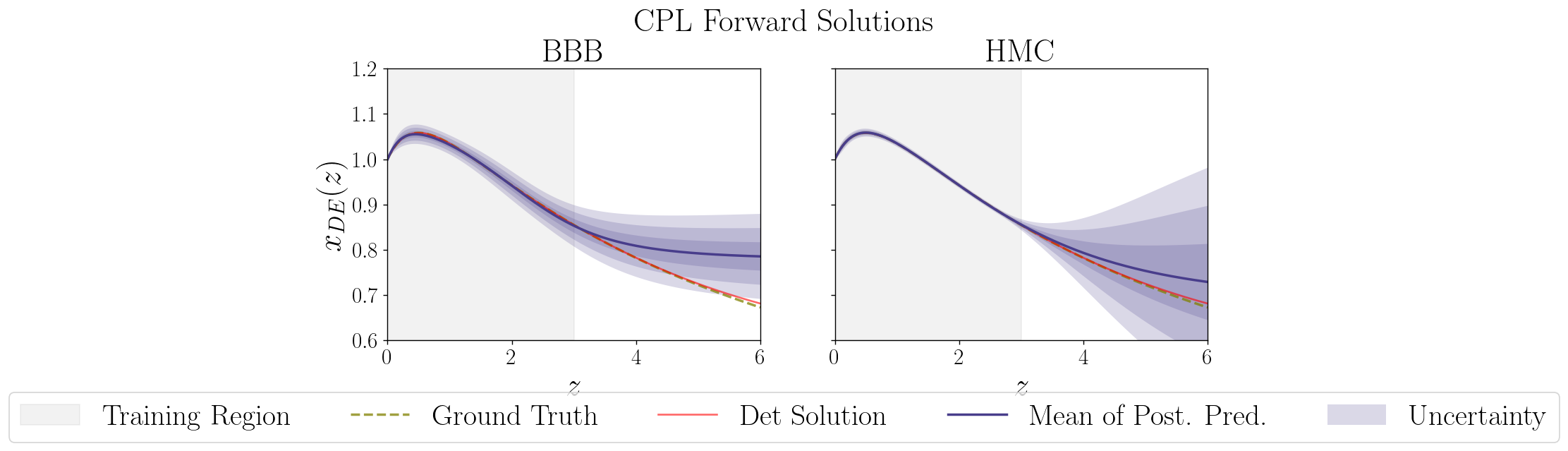}
    \caption{CPL Bayesian Solutions. The Analytic Solution Is Presented In Dotted Lines.}
    \label{fig:cpl_forward}
\end{figure}
\begin{figure}[H]
    \includegraphics[width=\linewidth]{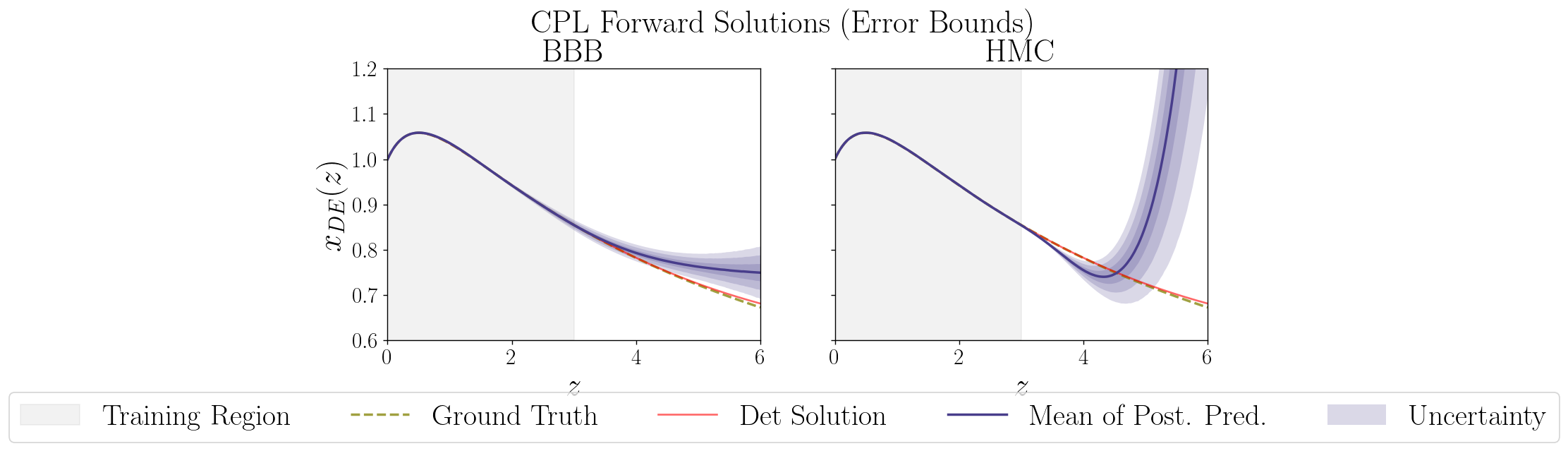}
    \caption{CPL Bayesian Solutions With Error Bounds. The Analytic Solution Is Presented In Dotted Lines.}
    \label{fig:cpl_forward_eb}
\end{figure}

\begin{figure}[H]
    \includegraphics[width=\linewidth]{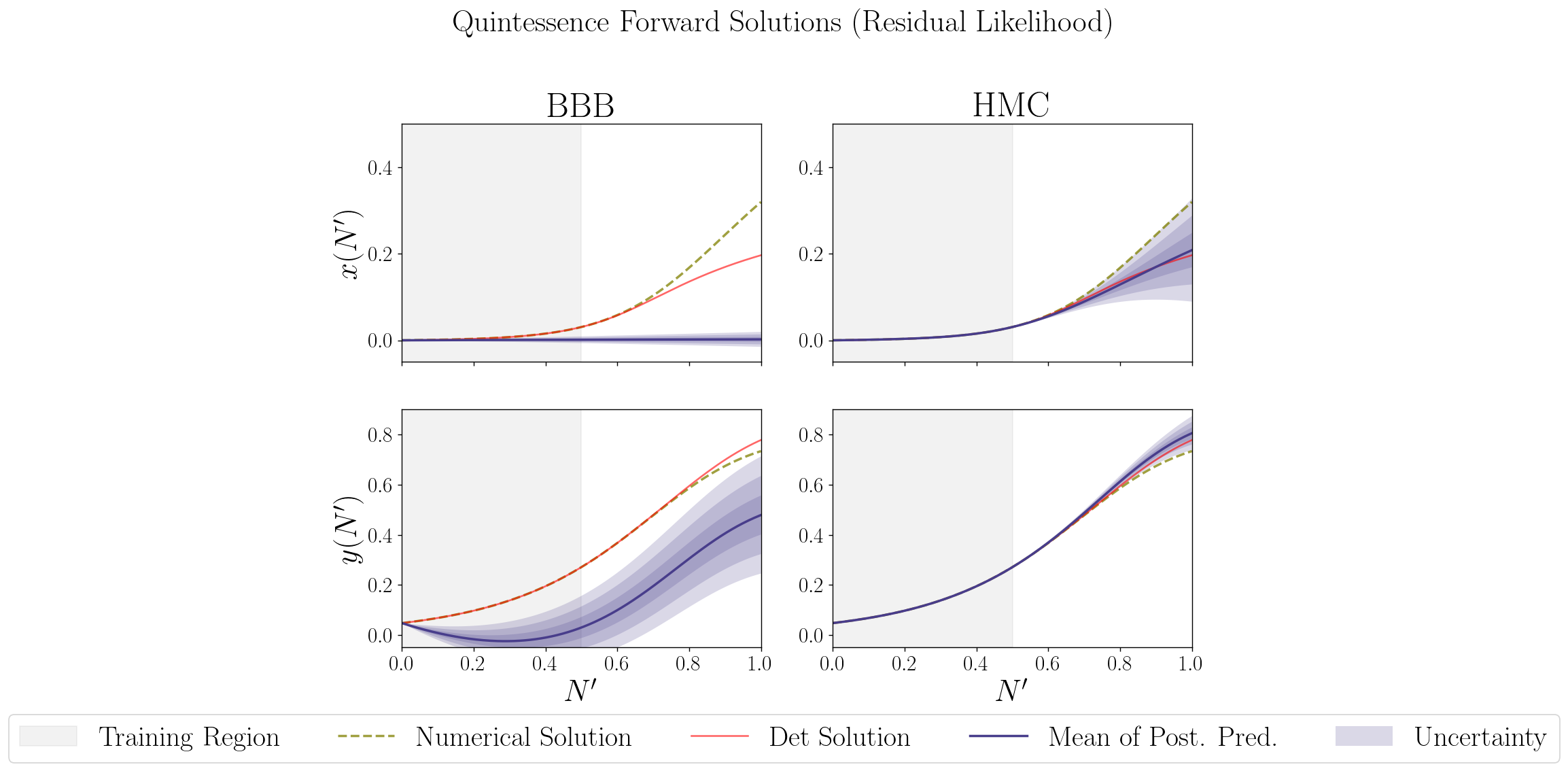}
    \caption{Quintessence Bayesian Solutions Residual Likelihood. The Numerical Solution Is Presented In Dotted Lines.}
    \label{fig:quint_forward_res}
\end{figure}

\begin{figure}[H]
    \includegraphics[width=\linewidth]{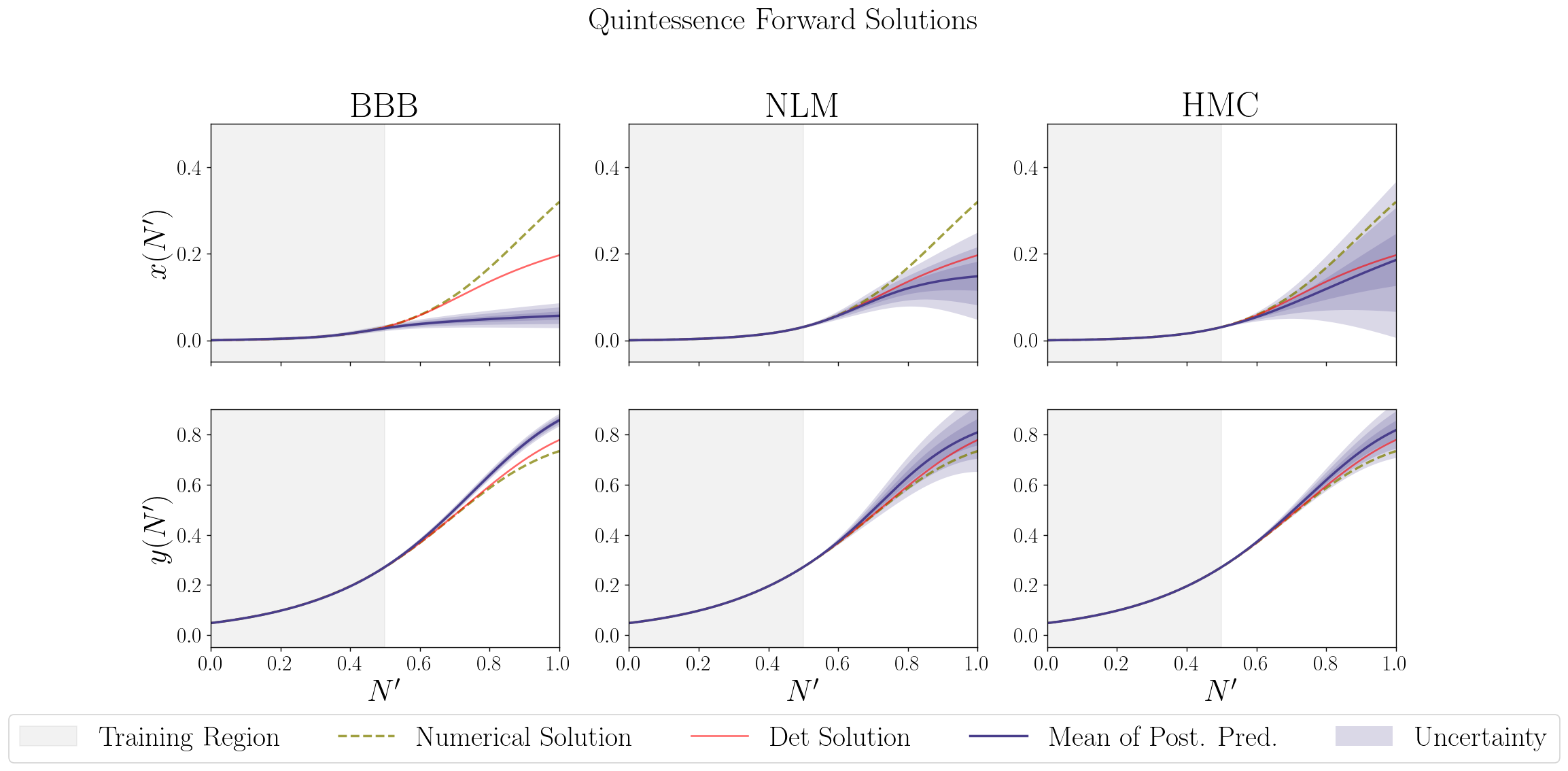}
    \caption{Quintessence Bayesian Solutions. The Numerical Solution Is Presented In Dotted Lines.}
    \label{fig:quint_forward}
\end{figure}

\begin{figure}[H]
    \includegraphics[width=\linewidth]{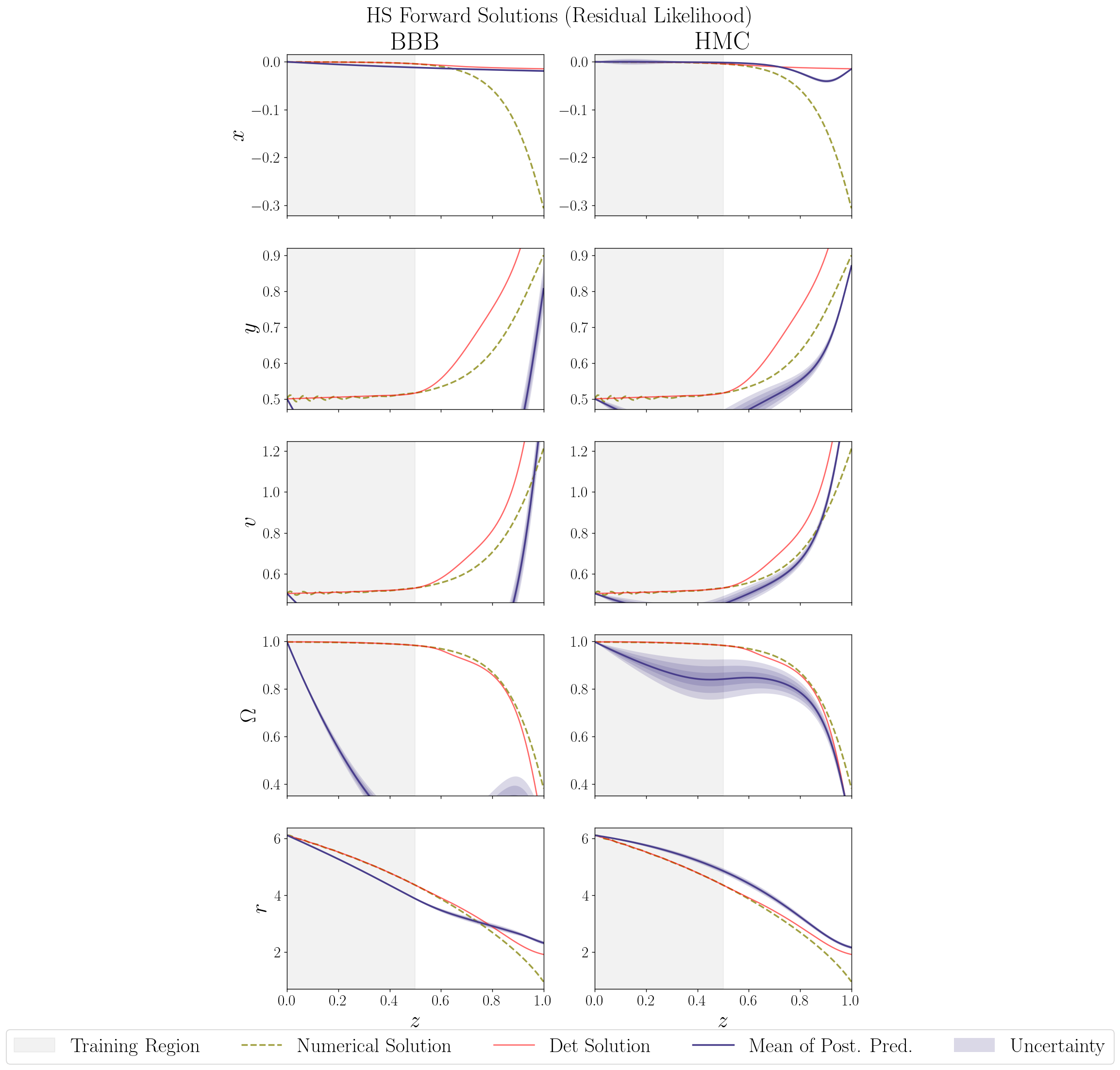}
    \caption{HS Bayesian solutions Residual Likelihood. The Numerical Solution Is Presented In Dotted Lines.}
    \label{fig:hs_forward}
\end{figure}

\begin{figure}[H]
    \includegraphics[width=\linewidth]{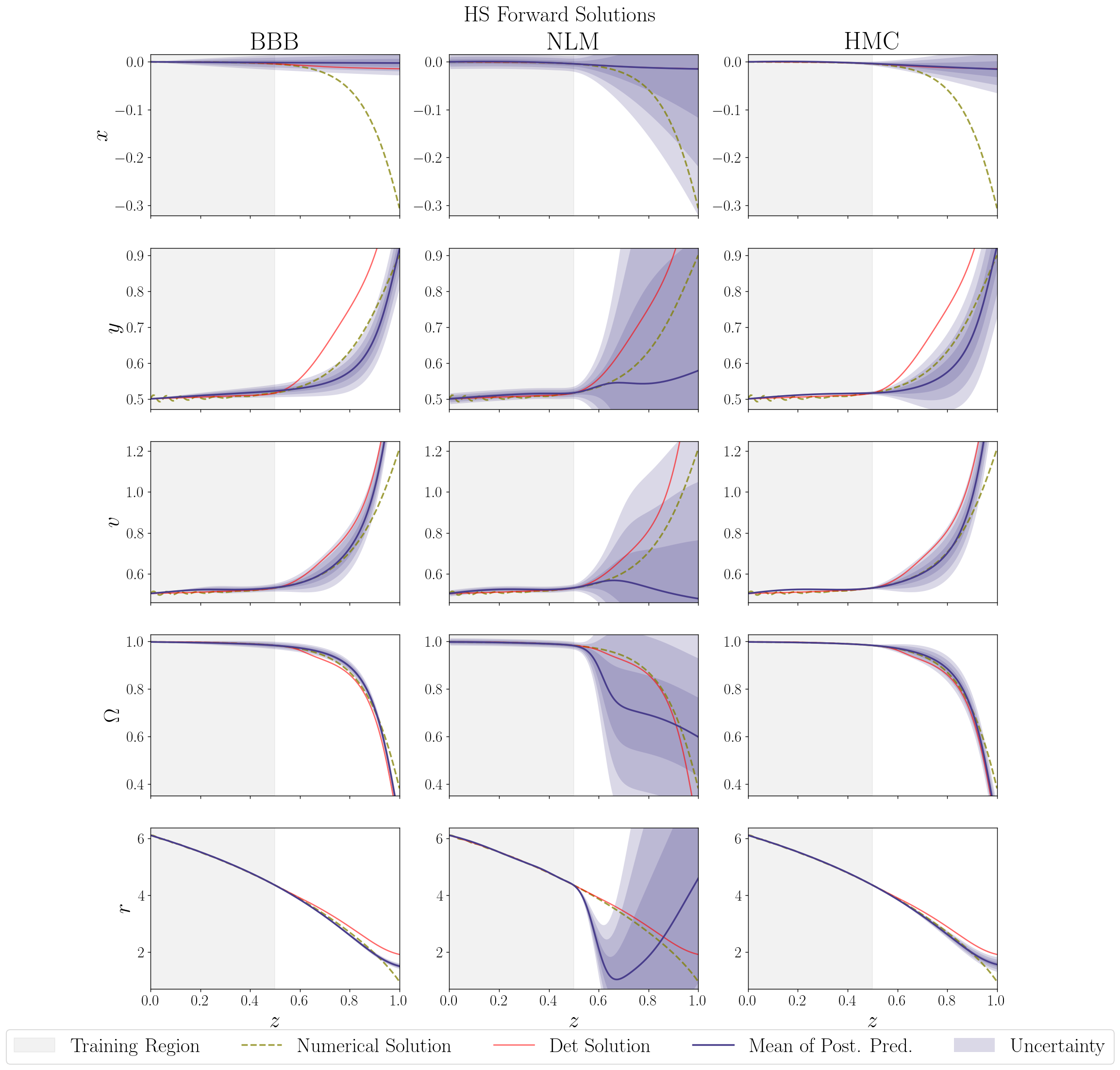}
    \caption{HS Bayesian solutions. The Numerical Solution Is Presented In Dotted Lines.}
    \label{fig:hs_forward}
\end{figure}

\begin{figure}[H]
    \includegraphics[width=\linewidth]{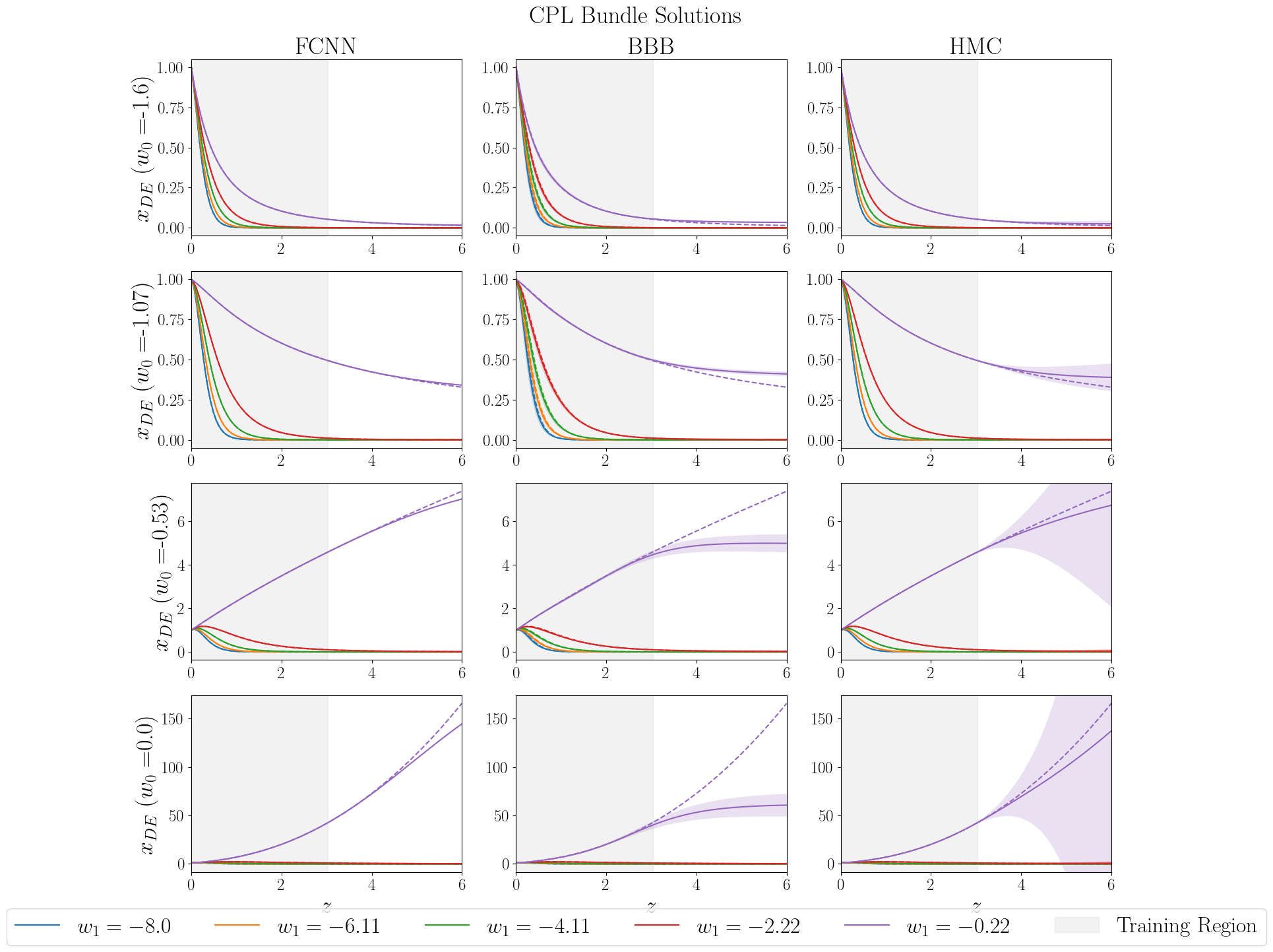}
    \caption{Examples Of CPL Bayesian Solutions Obtained Using The Bundle Solver. Analytic Solutions Are Presented In Dotted Lines.}
    \label{fig:cpl_examples}
\end{figure}

\begin{figure}[H]
    \includegraphics[width=\linewidth]{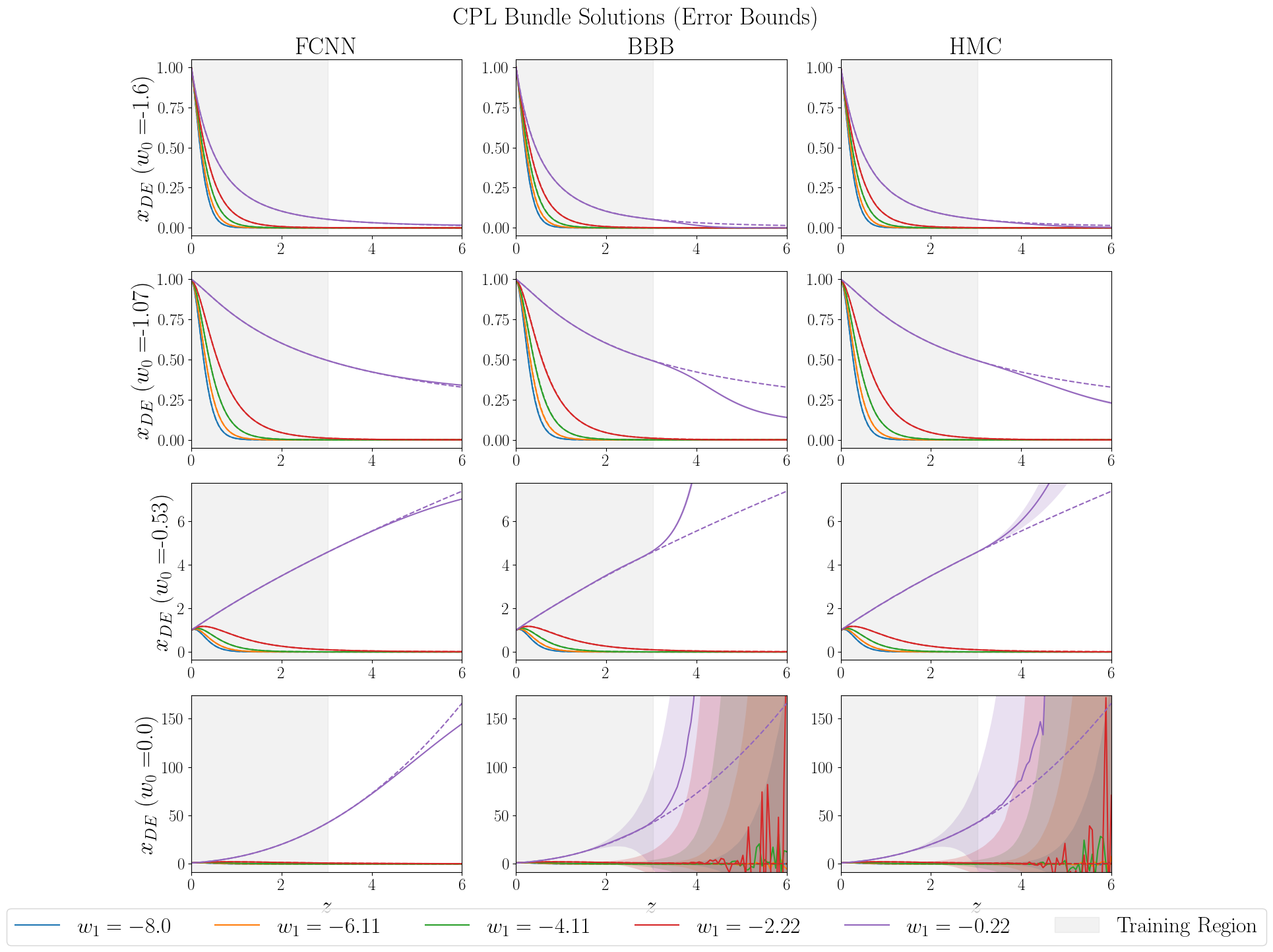}
    \caption{Examples Of CPL Bayesian Solutions Obtained Using The Bundle Solver With Error Bounds. Analytic Solutions Are Presented In Dotted Lines.}
    \label{fig:cpl_examples_eb}
\end{figure}

\begin{figure}[H]
    \includegraphics[width=\linewidth]{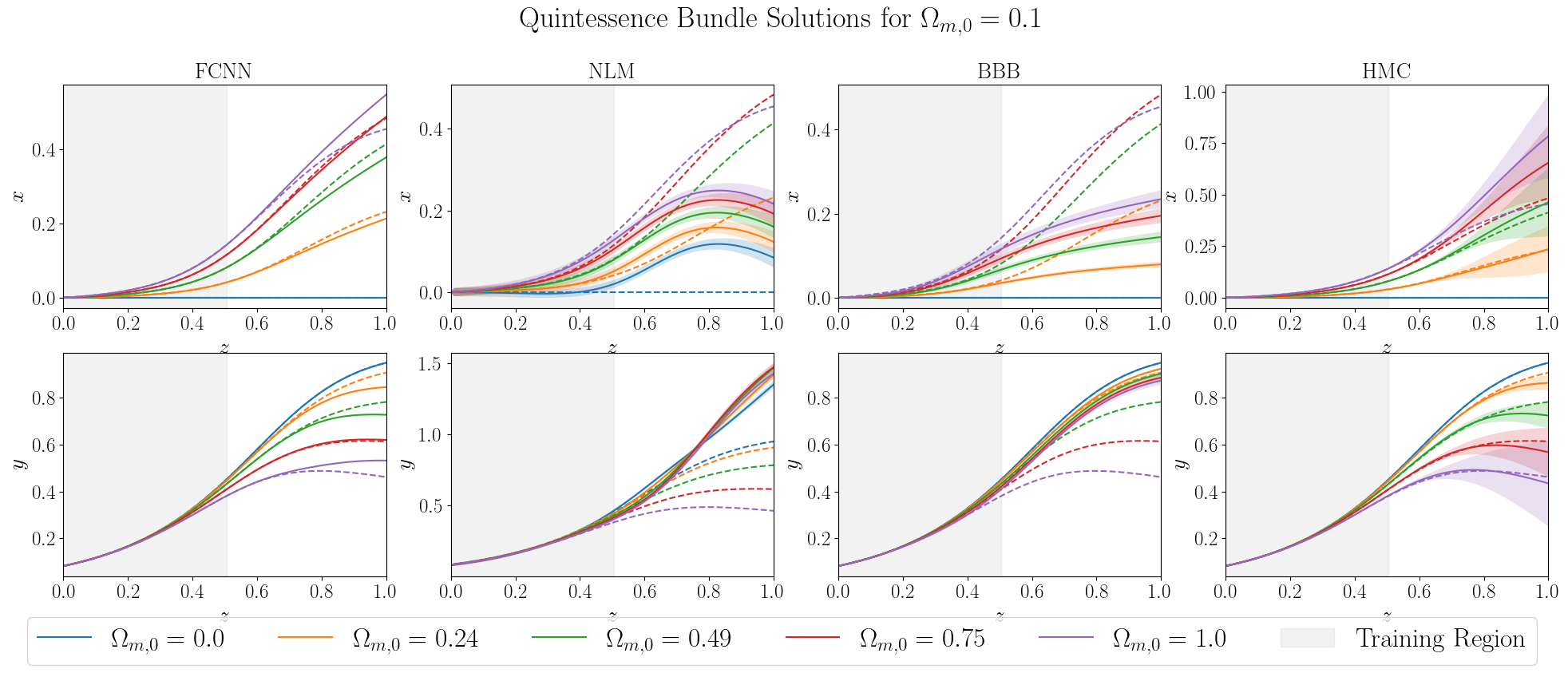}
    \caption{Examples of Quintessence Bayesian Solutions Obtained Using The Bundle Solver For The Parameter Value $\Omega_{m,0}=0.1$. Numerical Solutions Are Presented In Dotted Lines.}
    \label{fig:quint_examples_0}
\end{figure}
\begin{figure}[H]
    \includegraphics[width=\linewidth]{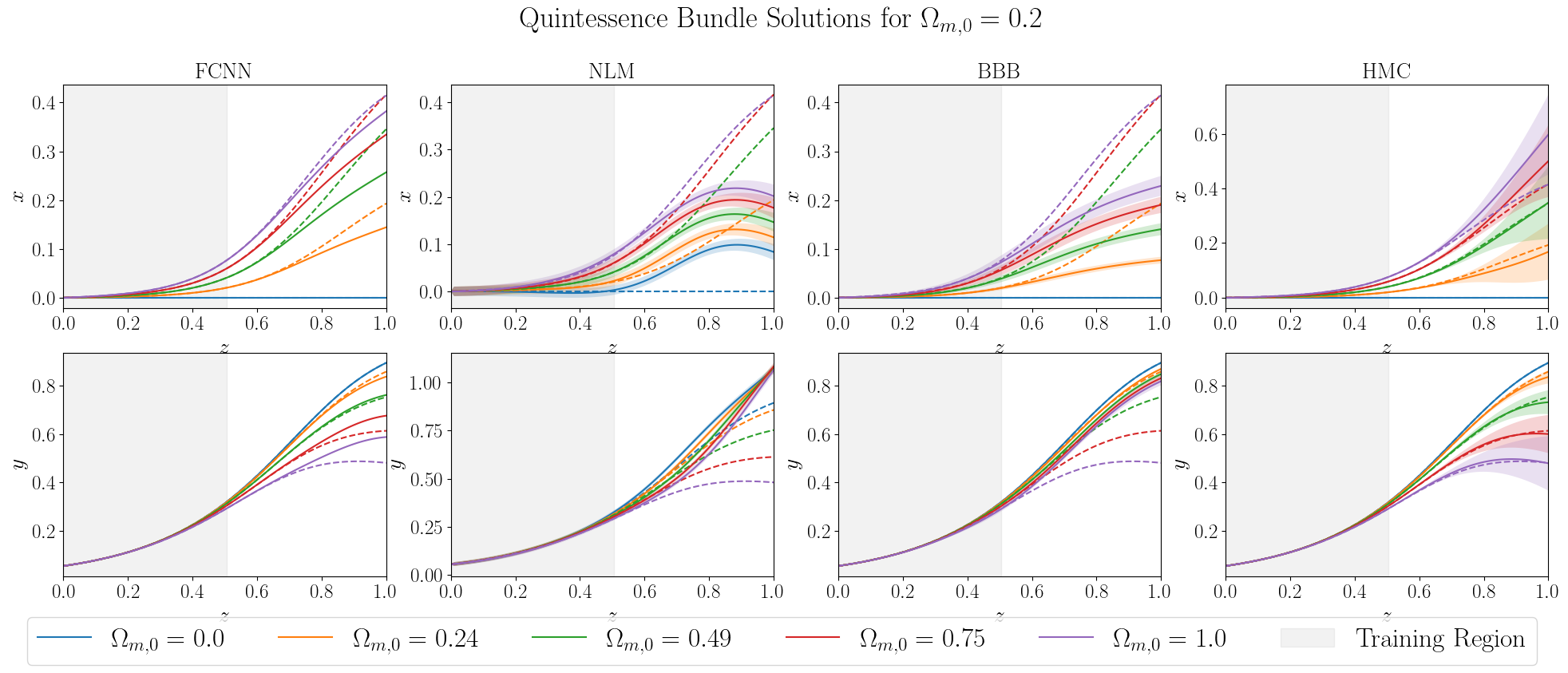}
    \caption{Examples of Quintessence Bayesian Solutions Obtained Using The Bundle Solver For The Parameter Value $\Omega_{m,0}=0.2$. Numerical Solutions Are Presented In Dotted Lines.}
    \label{fig:quint_examples_1}
\end{figure}
\begin{figure}[H]
    \includegraphics[width=\linewidth]{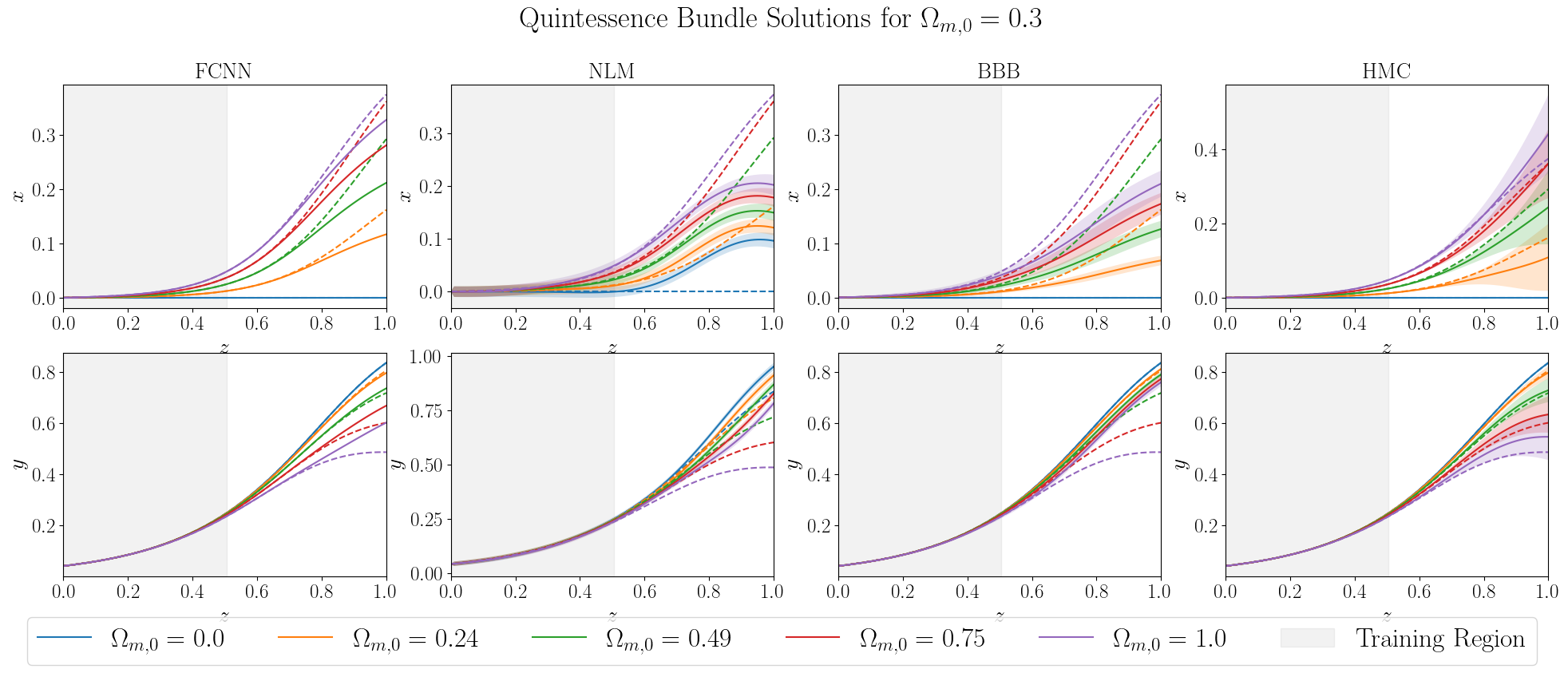}
    \caption{Examples of Quintessence Bayesian Solutions Obtained Using The Bundle Solver For The Parameter Value $\Omega_{m,0}=0.3$. Numerical Solutions Are Presented In Dotted Lines.}
    \label{fig:quint_examples_2}
\end{figure}
\begin{figure}[H]
    \includegraphics[width=\linewidth]{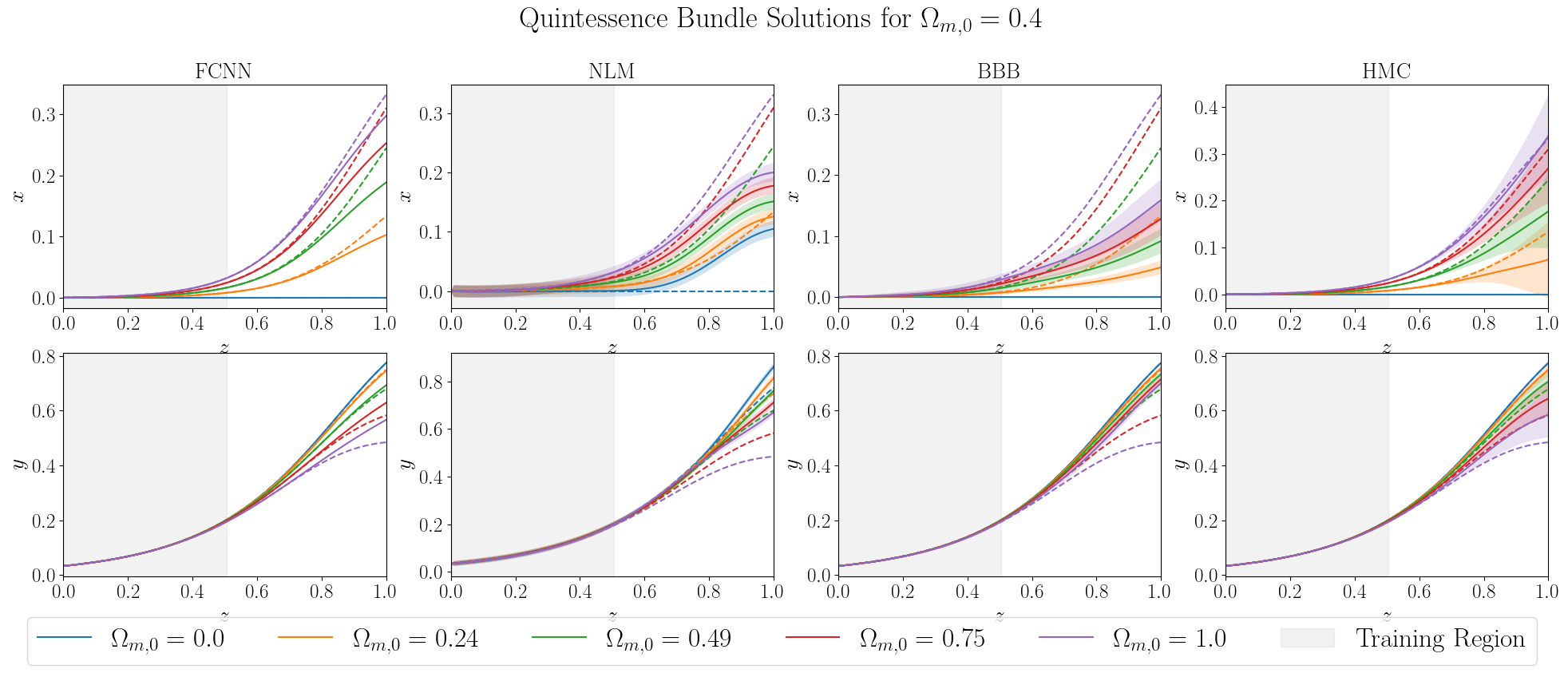}
    \caption{Examples of Quintessence Bayesian Solutions Obtained Using The Bundle Solver For The Parameter Value $\Omega_{m,0}=0.4$. Numerical Solutions Are Presented In Dotted Lines.}
    \label{fig:quint_examples_3}
\end{figure}

\begin{figure}[H]
    \includegraphics[width=\linewidth]{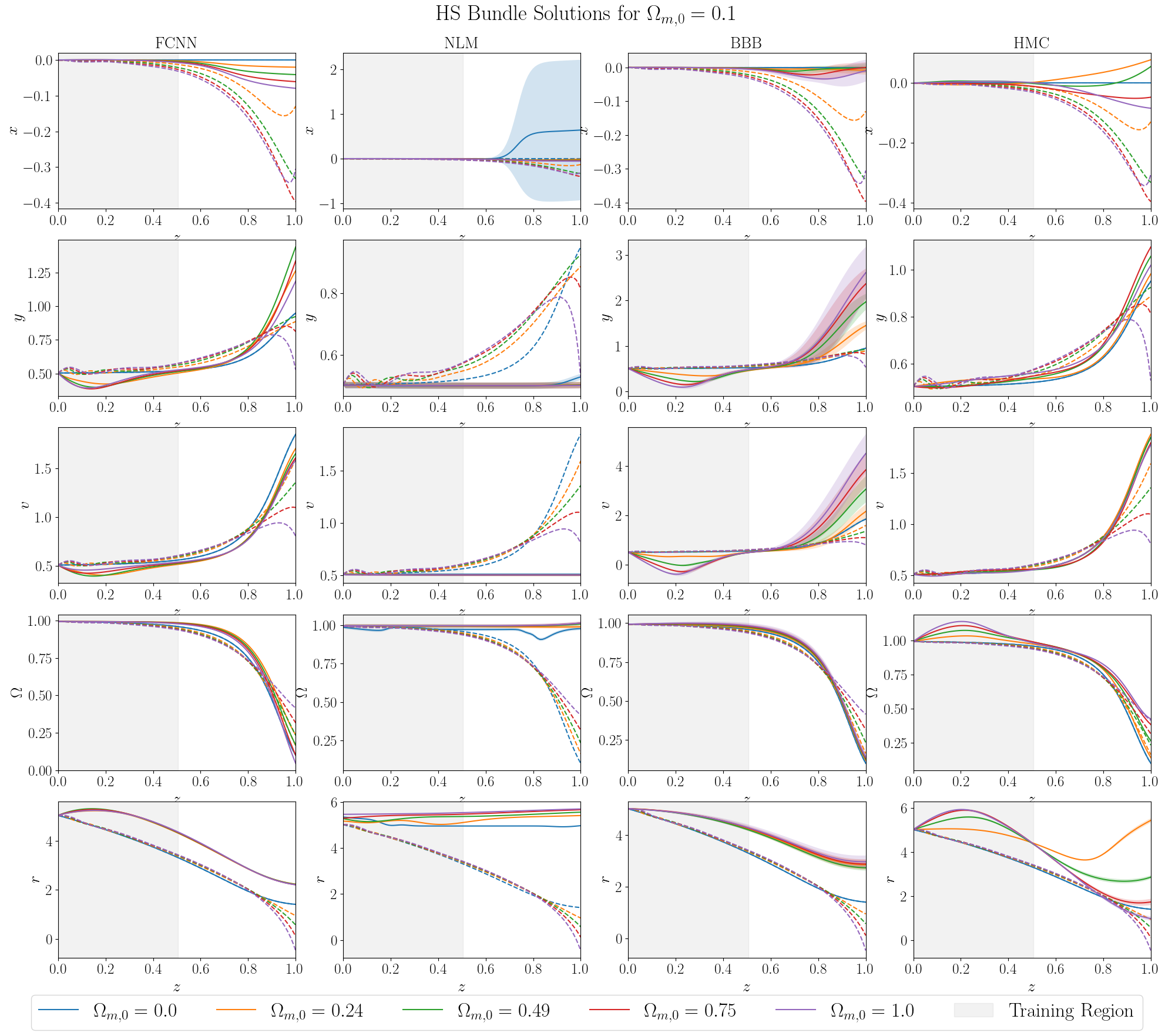}
    \caption{Examples Of HS Bayesian Solutions Obtained Using The Bundle Solver For The Parameter Value $\Omega_{m,0}=0.1$. Numerical Solutions Are Presented In Dotted Lines.}
    \label{fig:hs_examples_0}
\end{figure}
\begin{figure}[H]
    \includegraphics[width=\linewidth]{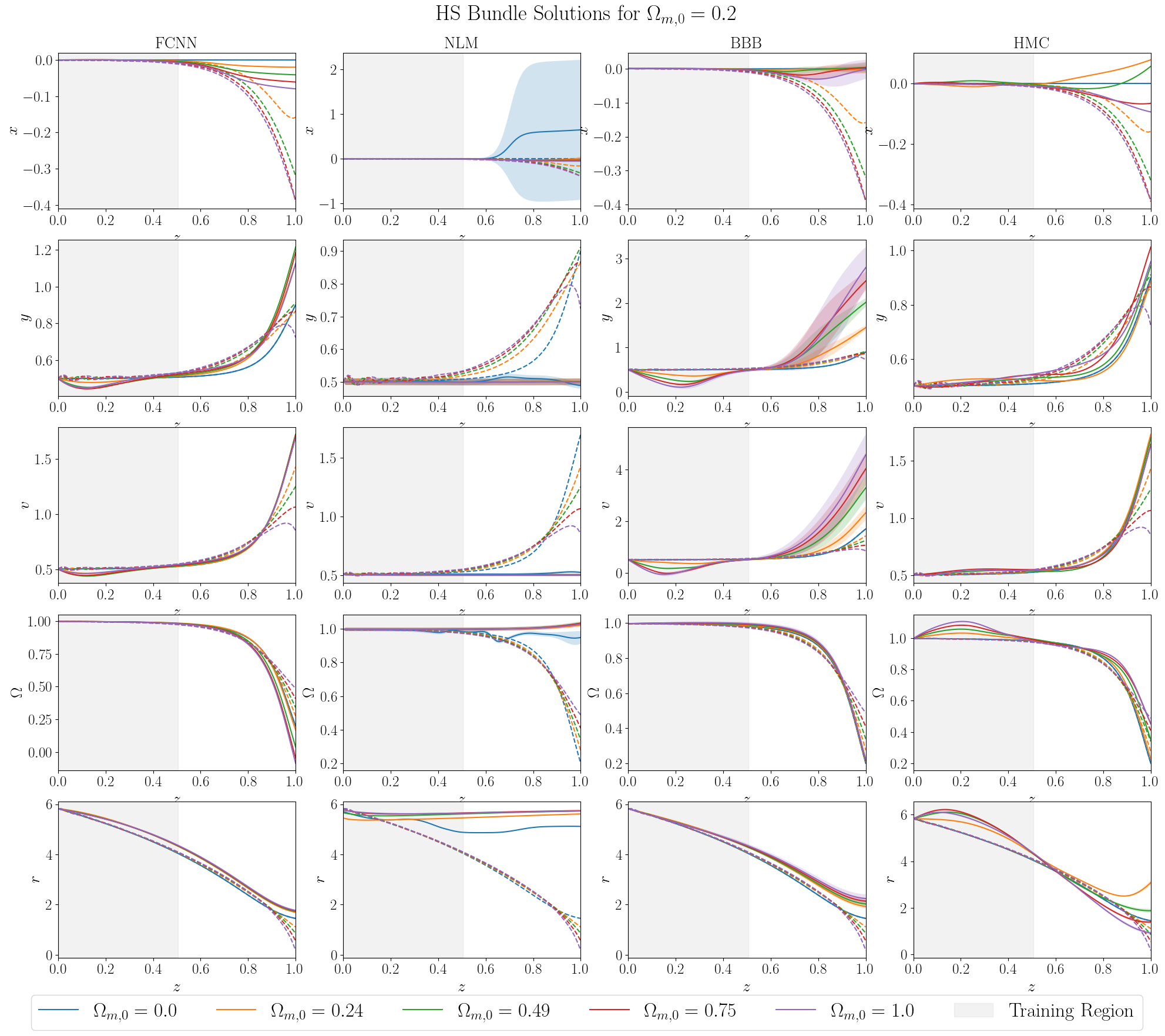}
    \caption{Examples Of HS Bayesian Solutions Obtained Using The Bundle Solver For The Parameter Value $\Omega_{m,0}=0.2$. Numerical Solutions Are Presented In Dotted Lines.}
    \label{fig:hs_examples_1}
\end{figure}
\begin{figure}[H]
    \includegraphics[width=\linewidth]{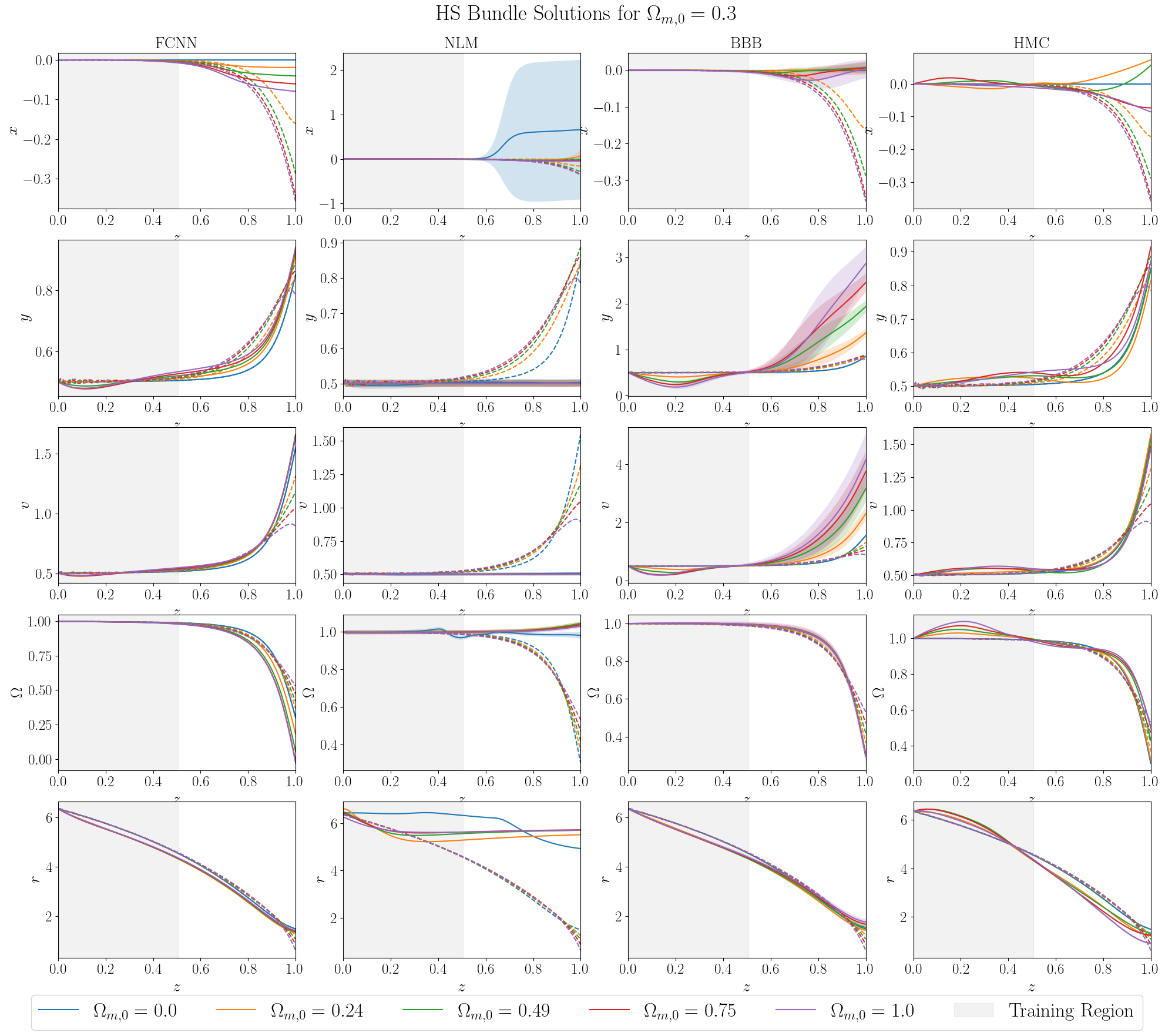}
    \caption{Examples Of HS Bayesian Solutions Obtained Using The Bundle Solver For The Parameter Value $\Omega_{m,0}=0.3$. Numerical Solutions Are Presented In Dotted Lines.}
    \label{fig:hs_examples_2}
\end{figure}
\begin{figure}[H]
    \includegraphics[width=\linewidth]{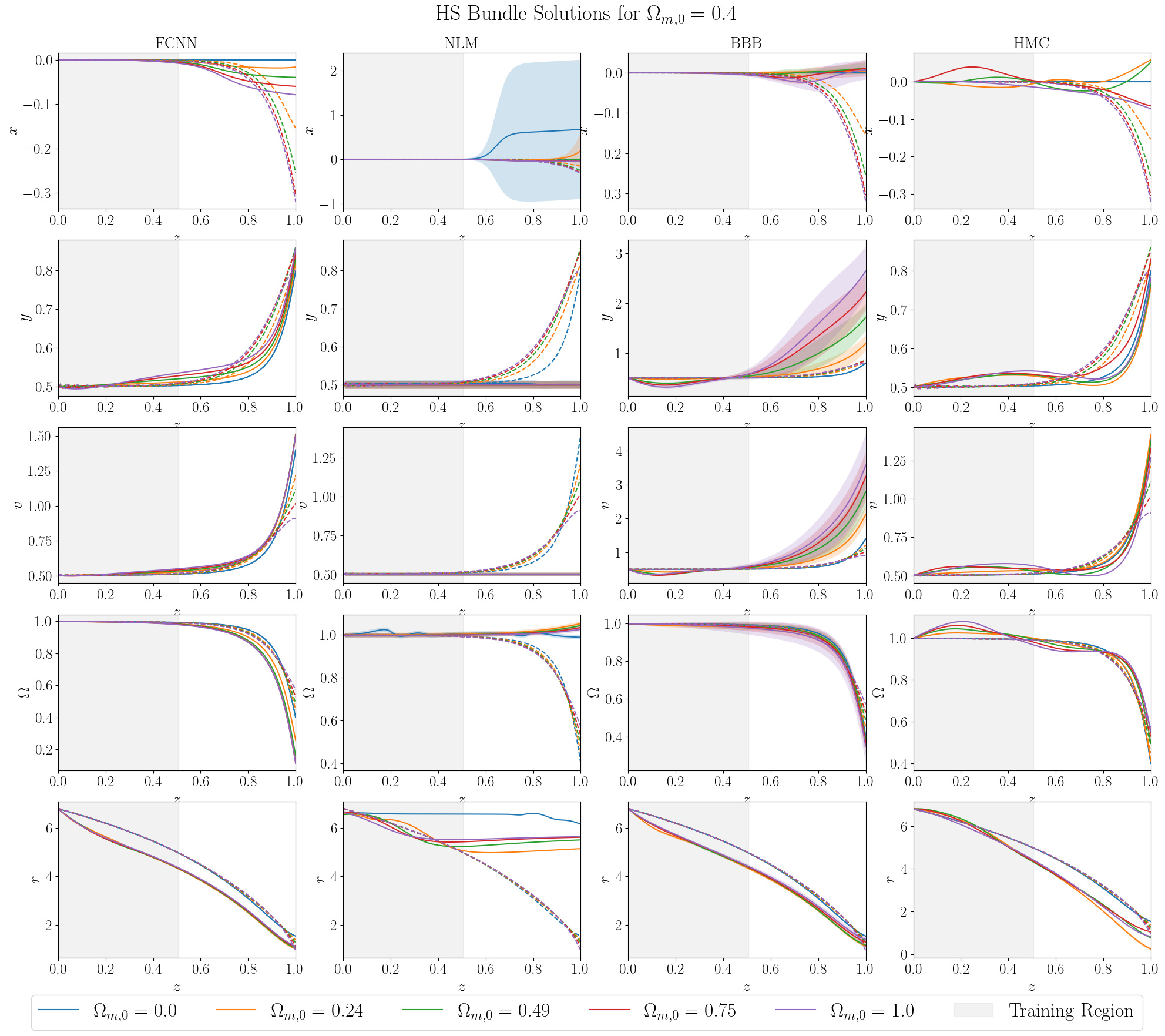}
    \caption{Examples Of HS Bayesian Solutions Obtained Using The Bundle Solver For The Parameter Value $\Omega_{m,0}=0.4$. Numerical Solutions Are Presented In Dotted Lines.}
    \label{fig:hs_examples_3}
\end{figure}

\renewcommand{\arraystretch}{1.2}

\begin{table}[]
    \centering
    \caption{Error Quantiles Of The NN Solutions Relative To The Analytical Solution For $\Lambda$CDM And CPL, And The Numerical Solution For Quintessence And HS.}
    \begin{adjustbox}{max width=\columnwidth}
    \begin{tabular}{c c | c c c c c c c c c c | c c c c c c c c c c}
    \hline
    \hline
    Equation & Method &  \multicolumn{10}{c}{Forward} & \multicolumn{10}{c}{Bundle} \\ \hline
     && Q10 & Q20 & Q30 & Q40 & Q50 & Q60 & Q70 & Q80 & Q90 & Q100 & Q10 & Q20 & Q30 & Q40 & Q50 & Q60 & Q70 & Q80 & Q90 & Q100\\
     \hline
     \multirow{7}{*}{\rotatebox[origin=c]{90}{$\Lambda$CDM}} & FCNN & 0.0 & 0.0 & 0.0 & 0.0 & 0.0 & 0.162 & 0.394 & 0.558 & 0.669 & 0.746 & 0.0 & 0.0 & 0.0 & 0.001 & 0.001 & 0.017 & 0.248 & 0.48 & 0.647 & 0.819 \\
     \cline{2-22}
     & NLM & 0.0 & 0.0 & 0.001 & 0.001 & 0.007 & 0.184 & 0.418 & 0.577 & 0.685 & 0.759 & 0.001 & 0.002 & 0.003 & 0.006 & 0.018 & 0.058 & 0.32 & 0.554 & 0.701 & 0.843 \\
     & BBB & 0.0 & 0.0 & 0.0 & 0.001 & 0.004 & 0.162 & 0.397 & 0.56 & 0.671 & 0.748 & 0.0 & 0.0 & 0.001 & 0.001 & 0.002 & 0.017 & 0.236 & 0.47 & 0.639 & 0.814 \\
     & HMC & 0.001 & 0.001 & 0.002 & 0.004 & 0.019 & 0.228 & 0.443 & 0.593 & 0.696 & 0.767 & 0.001 & 0.001 & 0.002 & 0.003 & 0.007 & 0.051 & 0.212 & 0.415 & 0.597 & 0.799 \\
     \cline{2-22}
     & NLM + EB & 0.0 & 0.0 & 0.0 & 0.0 & 0.0 & 0.17 & 0.406 & 0.569 & 0.679 & 0.754 & 0.0 & 0.0 & 0.001 & 0.001 & 0.002 & 0.017 & 0.184 & 0.381 & 0.537 & 0.717 \\
     & BBB + EB & 0.0 & 0.0 & 0.0 & 0.0 & 0.0 & 0.156 & 0.391 & 0.559 & 0.666 & 0.75 & 0.001 & 0.002 & 0.004 & 0.005 & 0.007 & 0.025 & 0.254 & 0.49 & 0.657 & 0.835 \\
     & HMC + EB & 0.0 & 0.0 & 0.0 & 0.0 & 0.004 & 0.202 & 0.432 & 0.58 & 0.69 & 0.765 & 0.0 & 0.0 & 0.001 & 0.001 & 0.002 & 0.033 & 0.152 & 0.353 & 0.541 & 0.78 \\
     \hline
     \multirow{5}{*}{\rotatebox[origin=c]{90}{CPL}} & FCNN & 0.0 & 0.0 & 0.0 & 0.0 & 0.0 & 0.0 & 0.001 & 0.003 & 0.006 & 0.013 & 0.0 & 0.0 & 0.0 & 0.001 & 0.001 & 0.017 & 0.248 & 0.48 & 0.647 & 0.819\\
     \cline{2-22}
     & BBB & 0.0 & 0.0 & 0.0 & 0.0 & 0.0 & 0.0 & 0.001 & 0.003 & 0.006 & 0.011 & 0.0 & 0.001 & 0.002 & 0.006 & 0.073 & 0.302 & 0.684 & 2.283 & 98.738 & 9.53e+08 \\
     & HMC & 0.0 & 0.0 & 0.0 & 0.0 & 0.001 & 0.008 & 0.022 & 0.041 & 0.066 & 0.095 & 0.001 & 0.002 & 0.004 & 0.007 & 0.033 & 0.13 & 0.363 & 1.711 & 100.003 & 8.31e+08 \\
     \cline{2-22}
     & BBB + EB & 0.0 & 0.0 & 0.0 & 0.0 & 0.0 & 0.0 & 0.0 & 0.001 & 0.004 & 0.009 & 0.002 & 0.004 & 0.007 & 0.014 & 0.034 & 0.239 & 0.743 & 2.333 & 21.77 & 9.01e+08 \\
     & HMC + EB & 0.0 & 0.0 & 0.0 & 0.0 & 0.0 & 0.01 & 0.038 & 0.078 & 0.124 & 0.173 & 0.002 & 0.004 & 0.006 & 0.01 & 0.021 & 0.305 & 0.999 & 28.567 & 6.67e+05 & 3.88e+26 \\
     \hline
     \multirow{4}{*}{\rotatebox[origin=c]{90}{Quint.}} & FCNN & 0.0 & 0.0 & 0.0 & 0.0 & 0.001 & 0.006 & 0.039 & 0.103 & 0.171 & 0.223 & 0.001 & 0.001 & 0.002 & 0.004 & 0.007 & 0.012 & 0.021 & 0.039 & 0.077 & 0.26 \\
     \cline{2-22}
     & NLM & 0.003 & 0.006 & 0.009 & 0.016 & 0.034 & 0.058 & 0.094 & 0.16 & 0.237 & 0.288 & 0.014 & 0.027 & 0.042 & 0.065 & 0.096 & 0.145 & 0.211 & 0.325 & 0.708 & 164.163 \\
     & BBB & 0.001 & 0.002 & 0.003 & 0.005 & 0.012 & 0.027 & 0.067 & 0.129 & 0.197 & 0.918 & 0.001 & 0.003 & 0.005 & 0.008 & 0.013 & 0.02 & 0.034 & 0.057 & 0.104 & 24.699 \\
     & HMC & 0.012 & 0.02 & 0.026 & 0.043 & 0.065 & 0.135 & 0.218 & 0.29 & 0.348 & 1.31 & 0.001 & 0.003 & 0.005 & 0.008 & 0.015 & 0.026 & 0.044 & 0.071 & 0.119 & 18.676 \\
     \hline
     \multirow{4}{*}{\rotatebox[origin=c]{90}{HS}} & FCNN & 0.0 & 0.0 & 0.0 & 0.0 & 0.0 & 0.162 & 0.394 & 0.558 & 0.669 & 0.746 & 0.0 & 0.0 & 0.0 & 0.001 & 0.001 & 0.017 & 0.248 & 0.48 & 0.647 & 0.819 \\
     \cline{2-22}
     & NLM & 0.003 & 0.007 & 0.01 & 0.015 & 0.026 & 0.905 & 8.203 & 23.657 & 40.486 & 147.148 & 0.078 & 0.135 & 0.182 & 0.213 & 0.247 & 0.635 & 3.403 & 16.619 & 88.025 & 1.33e+09 \\
     & BBB & 0.002 & 0.003 & 0.003 & 0.005 & 0.008 & 0.022 & 0.031 & 0.042 & 0.119 & 0.831 & 0.039 & 0.089 & 0.143 & 0.23 & 0.429 & 0.715 & 1.434 & 3.463 & 9.211 & 7.52e+05 \\
     & HMC & 0.121 & 0.305 & 0.521 & 0.674 & 0.742 & 1.308 & 2.71 & 7.066 & 16.963 & 59.125 & 0.067 & 0.123 & 0.169 & 0.217 & 0.294 & 0.368 & 0.629 & 2.291 & 8.942 & 9.06e+05 \\
    \end{tabular}
    \end{adjustbox}
    \label{tab:test_quantiles}
\end{table}

\begin{table}[]
    \centering
    \caption{Metrics of the NN Forward Solutions.}
    \begin{adjustbox}{max width=\columnwidth}
    \begin{tabular}{c c | c c c c c c c c c c c c c c}
\hline
\hline
 Equation & Method &  MRE & Mean Residual & Miscal. Area & RMS Cal. & MA Cal. & Sharpness & NLL & CRPS & Check & Interval & Acc. MAE & Acc. RMSE & Acc. MDAE & Acc. MARPD \\
 \hline
 \multirow{9}{*}{\rotatebox[origin=c]{90}{$\Lambda$CDM}} & FCNN & $\mathbf{0.216}$ & $\mathbf{0.0}$ & - & - & - & - & - & - & - & - & - & - & - & - \\
 \cline{2-16}
 & BBB & 0.955 & 0.154 & 0.491 & 0.566 & 0.486 & $\mathbf{0.101}$ & 27843.2 & 20.019 & 10.01 & 208.426 & 20.073 & 28.172 & 12.83 & 187.161 \\
 & HMC & 0.361 & 3.127 & 0.491 & 0.566 & 0.486 & 0.516 & 319.717 & 11.214 & 5.612 & 113.07 & 11.453 & 19.039 & 2.687 & 49.016 \\
 \cline{2-16}
 & NLM + 2S& 0.226 & 6.544 & 0.224 & 0.258 & 0.222 & 0.546 & 247.862 & 9.743 & 4.876 & 98.293 & 9.939 & 17.917 & 0.023 & 32.164 \\
 & BBB + 2S & 0.29 & 3.807 & 0.185 & 0.222 & 0.183 & 0.3 & 1100.191 & 11.292 & 5.648 & 116.159 & 11.41 & 19.662 & 0.459 & 41.833 \\
 & HMC + 2S & 0.23 & 3.842 & 0.203 & 0.235 & 0.201 & 0.634 & 135.058 & 9.739 & 4.875 & 97.661 & 9.977 & 17.878 & 0.075 & 32.568 \\
 \cline{2-16}
 & NLM + 2S + EB & 0.221 & 6.623 & 0.148 & 0.186 & 0.147 & 17.23 & $\mathbf{-0.472}$ & $\mathbf{5.948}$ & $\mathbf{3.004}$ & $\mathbf{25.502}$ & 9.792 & 17.737 & $\mathbf{0.005}$ & $\mathbf{31.357}$ \\
 & BBB + 2S + EB & 0.266 & 3.69 & 0.114 & $\mathbf{0.137}$ & 0.113 & 17.265 & 2.216 & 6.958 & 3.513 & 30.458 & 11.066 & 19.178 & 0.855 & 38.671 \\
 & HMC + 2S + EB & 0.222 & 3.841 & $\mathbf{0.111}$ & 0.138 & $\mathbf{0.11}$ & 17.251 & -0.208 & 5.953 & 3.006 & 25.627 & $\mathbf{9.788}$ & $\mathbf{17.707}$ & 0.055 & 31.387 \\
 \hline

 \multirow{7}{*}{\rotatebox[origin=c]{90}{CPL}} & FCNN & 0.002 & $\mathbf{0.0}$ & - & - & - & - & - & - & - & - & - & - & - & - \\
 \cline{2-16}
 & BBB & 0.008 & 0.014 & 0.359 & 0.394 & 0.355 & 0.023 & -3.196 & 0.006 & 0.003 & 0.036 & 0.006 & 0.011 & 0.001 & 0.766 \\
 & HMC & 0.009 & 0.014 & 0.268 & 0.288 & 0.266 & 0.016 & -4.458 & 0.004 & 0.002 & 0.02 & 0.007 & 0.012 & $\mathbf{0.0}$ & 0.905 \\
 \cline{2-16}
 & BBB + 2S & 0.036 & 0.026 & $\mathbf{0.103}$ & $\mathbf{0.114}$ & $\mathbf{0.102}$ & 0.019 & -2.284 & 0.021 & 0.01 & 0.123 & 0.026 & 0.043 & 0.004 & 3.444 \\
 & HMC + 2S & 0.016 & 0.019 & 0.267 & 0.3 & 0.265 & 0.034 & -3.87 & 0.008 & 0.004 & 0.041 & 0.012 & 0.02 & 0.001 & 1.592 \\
 \cline{2-16}
 & BBB + 2S + EB & 0.021 & 0.015 & 0.115 & 0.137 & 0.114 & $\mathbf{0.007}$ & -2.376 & 0.013 & 0.006 & 0.093 & 0.015 & 0.027 & 0.001 & 1.993 \\
 & HMC + 2S + EB & 0.158 & 0.257 & 0.389 & 0.443 & 0.385 & 0.07 & 5.18 & 0.093 & 0.047 & 0.7 & 0.11 & 0.292 & $\mathbf{0.0}$ & 10.464 \\
 \hline

 \multirow{6}{*}{\rotatebox[origin=c]{90}{Quintessence}} & FCNN & $\mathbf{0.044}$ & $\mathbf{0.054}$ & - & - & - & - & - & - & - & - & - & - & - & - \\
 \cline{2-16}
 & BBB & 0.838 & 0.109 & 0.482 & 0.553 & 0.477 & 0.025 & 163.756 & 0.127 & 0.064 & 1.113 & 0.14 & 0.171 & 0.135 & nan \\
 & HMC & 0.06 & 0.07 & 0.176 & 0.204 & 0.174 & 0.01 & -4.034 & $\mathbf{0.013}$ & $\mathbf{0.006}$ & 0.08 & $\mathbf{0.015}$ & $\mathbf{0.03}$ & $\mathbf{0.0}$ & nan \\
 \cline{2-16}
 & NLM + 2S & 0.073 & 0.257 & 0.19 & 0.23 & 0.188 & 0.015 & -3.317 & 0.017 & 0.008 & 0.109 & 0.021 & 0.041 & $\mathbf{0.0}$ & $\mathbf{9.054}$ \\
 & BBB + 2S & 0.201 & 0.147 & 0.128 & 0.181 & 0.126 & $\mathbf{0.004}$ & 40.068 & 0.036 & 0.018 & 0.352 & 0.038 & 0.068 & 0.002 & nan \\
 & HMC + 2S & 0.072 & 0.084 & $\mathbf{0.124}$ & $\mathbf{0.142}$ & $\mathbf{0.123}$ & 0.016 & $\mathbf{-4.948}$ & 0.014 & 0.007 & $\mathbf{0.068}$ & 0.019 & 0.036 & $\mathbf{0.0}$ & nan \\
 \hline

 \multirow{4}{*}{\rotatebox[origin=c]{90}{HS}} & FCNN & $\mathbf{0.216}$ & $\mathbf{0.0}$ & - & - & - & - & - & - & - & - & - & - & - & - \\
 \cline{2-16}
 & BBB & 1.224 & 0.215 & 0.487 & 0.56 & 0.482 & 0.013 & 2094.883 & 0.324 & 0.162 & 3.284 & 0.33 & 0.378 & 0.347 & nan \\
 & HMC & 0.268 & 0.128 & 0.44 & 0.512 & 0.436 & 0.013 & 723.293 & 0.138 & 0.069 & 1.326 & 0.144 & 0.169 & 0.148 & nan \\
 \cline{2-16}
 & NLM + 2S & 0.353 & 0.324 & $\mathbf{0.152}$ & $\mathbf{0.184}$ & $\mathbf{0.151}$ & 0.409 & $\mathbf{-1.279}$ & 0.149 & 0.075 & 0.862 & 0.197 & 0.341 & 0.012 & $\mathbf{35.335}$ \\
 & BBB + 2S & 0.158 & 0.326 & 0.219 & 0.253 & 0.216 & $\mathbf{0.012}$ & 59.506 & 0.028 & 0.014 & 0.239 & 0.031 & 0.068 & 0.007 & nan \\
 & HMC + 2S & 0.262 & 0.212 & 0.265 & 0.304 & 0.263 & 0.021 & 90.284 & $\mathbf{0.024}$ & $\mathbf{0.012}$ & $\mathbf{0.182}$ & $\mathbf{0.028}$ & $\mathbf{0.066}$ & $\mathbf{0.006}$ & nan \\
\end{tabular}

    \end{adjustbox}
    \label{tab:metrics_forward}
\end{table}

\begin{table}[]
    \centering
    \caption{Metrics of the NN Bundle Solutions.}
    \begin{adjustbox}{max width=\columnwidth}
    \begin{tabular}{c c | c c c c c c c c c c c c c c}
\hline
\hline
 Equation & Method &  MRE & Mean Residual & Miscal. Area & RMS Cal. & MA Cal. & Sharpness & NLL & CRPS & Check & Interval & Acc. MAE & Acc. RMSE & Acc. MDAE & Acc. MARPD \\
 \hline
 \multirow{9}{*}{\rotatebox[origin=c]{90}{$\Lambda$CDM}} & FCNN & 0.178 & $\mathbf{0.0}$ & - & - & - & - & - & - & - & - & - & - & - & - \\
 \cline{2-16}
 & BBB & 0.94 & 0.199 & 0.491 & 0.566 & 0.486 & $\mathbf{0.076}$ & 67151.127 & 24.692 & 12.347 & 257.601 & 24.73 & 36.873 & 14.224 & 181.346 \\
 & HMC & 0.196 & 5.169 & 0.4 & 0.455 & 0.396 & 0.822 & 139.806 & 10.309 & 5.161 & 102.634 & 10.61 & 22.142 & 0.752 & 25.336 \\
 \cline{2-16}
 & NLM + 2S& 0.206 & 8.278 & 0.183 & 0.211 & 0.182 & 0.45 & 604.068 & 12.215 & 6.111 & 124.943 & 12.369 & 25.205 & 0.032 & 29.855 \\
 & BBB + 2S & 0.204 & 5.386 & 0.143 & 0.173 & 0.141 & 0.487 & 935.006 & 11.829 & 5.918 & 121.119 & 11.965 & 24.362 & 0.092 & 28.573 \\
 & HMC + 2S & 0.171 & 5.446 & 0.123 & 0.148 & 0.122 & 1.4 & 39.06 & 10.346 & 5.182 & 100.22 & 10.809 & 22.873 & $\mathbf{0.011}$ & 23.62 \\
 \cline{2-16}
 & NLM + 2S + EB & $\mathbf{0.145}$ & 9.714 & 0.063 & 0.074 & 0.063 & 25.004 & $\mathbf{-0.532}$ & $\mathbf{5.74}$ & $\mathbf{2.898}$ & $\mathbf{26.943}$ & $\mathbf{9.357}$ & $\mathbf{20.003}$ & 0.015 & $\mathbf{19.212}$ \\
 & BBB + 2S + EB & 0.198 & 5.17 & $\mathbf{0.05}$ & $\mathbf{0.058}$ & $\mathbf{0.05}$ & 25.008 & 1.647 & 7.004 & 3.536 & 31.722 & 11.421 & 23.284 & 0.294 & 26.712 \\
 & HMC + 2S + EB & 0.168 & 5.104 & 0.098 & 0.122 & 0.097 & 25.033 & -0.243 & 6.36 & 3.212 & 28.814 & 10.597 & 22.323 & 0.029 & 23.026 \\
 \hline

 \multirow{7}{*}{\rotatebox[origin=c]{90}{CPL}} & FCNN & $\mathbf{0.062}$ & $\mathbf{0.001}$ & - & - & - & - & - & - & - & - & - & - & - & - \\
 \cline{2-16}
 & BBB & 0.683 & 1410.627 & 0.2 & 0.222 & 0.198 & 5777.129 & -2.261 & 3545.896 & 1777.486 & 33236.555 & 3770.007 & 76032.127 & 0.005 & 28.948 \\
 & HMC & 0.362 & 1722.466 & 0.317 & 0.341 & 0.314 & 1.05e+05 & $\mathbf{-6.292}$ & $\mathbf{1324.918}$ & $\mathbf{668.959}$ & $\mathbf{7053.056}$ & $\mathbf{1969.836}$ & $\mathbf{41625.811}$ & $\mathbf{0.0}$ & $\mathbf{13.055}$ \\
 \cline{2-16}
 & BBB + 2S & 2.751 & 818.487 & 0.15 & $\mathbf{0.164}$ & 0.149 & 2121.18 & 8.885 & 5054.062 & 2529.02 & 51204.587 & 5148.735 & 96419.681 & 0.004 & 37.724 \\
 & HMC + 2S & 15.533 & 2213.885 & 0.255 & 0.272 & 0.252 & 2.92e+05 & -5.211 & 2891.45 & 1459.777 & 18215.13 & 2540.452 & 45826.617 & $\mathbf{0.0}$ & 31.464 \\
 \cline{2-16}
 & BBB + 2S + EB & 2.09e+05 & 9.58e+08 & 0.177 & 0.212 & 0.175 & 2.85e+09 & 26693.418 & 2.68e+08 & 1.35e+08 & 2.10e+09 & 3.09e+08 & 1.23e+10 & 0.003 & 61.241 \\
 & HMC + 2S + EB & 2.35e+05 & 7.49e+05 & $\mathbf{0.145}$ & 0.165 & $\mathbf{0.143}$ & 2.89e+06 & 90.949 & 2.74e+05 & 1.38e+05 & 2.06e+06 & 3.20e+05 & 1.03e+07 & 0.001 & 47.367 \\
 \hline

 \multirow{6}{*}{\rotatebox[origin=c]{90}{Quintessence}} & FCNN & 0.024 & 0.027 & - & - & - & - & - & - & - & - & - & - & - & - \\
 \cline{2-16}
 & BBB & 0.128 & 0.079 & 0.148 & 0.193 & 0.147 & $\mathbf{0.004}$ & 7.61e+09 & 0.017 & 0.009 & 0.159 & 0.019 & 0.041 & 0.002 & nan \\
 & HMC & $\mathbf{0.009}$ & $\mathbf{0.019}$ & 0.147 & 0.165 & 0.146 & 0.007 & 7.61e+09 & $\mathbf{0.002}$ & $\mathbf{0.001}$ & $\mathbf{0.009}$ & $\mathbf{0.002}$ & $\mathbf{0.007}$ & $\mathbf{0.0}$ & nan \\
 \cline{2-16}
 & NLM + 2S & 0.39 & 0.33 & 0.136 & 0.158 & 0.135 & 0.006 & $\mathbf{51.905}$ & 0.039 & 0.019 & 0.375 & 0.04 & 0.099 & 0.002 & $\mathbf{24.652}$ \\
 & BBB + 2S & 0.159 & 0.12 & 0.119 & 0.165 & 0.118 & $\mathbf{0.004}$ & 7.61e+09 & 0.027 & 0.014 & 0.265 & 0.029 & 0.064 & 0.002 & nan \\
 & HMC + 2S & 0.041 & 0.037 & $\mathbf{0.048}$ & $\mathbf{0.054}$ & $\mathbf{0.047}$ & 0.014 & 8.50e+09 & 0.005 & 0.003 & 0.026 & 0.007 & 0.02 & $\mathbf{0.0}$ & nan \\
 \hline

 \multirow{4}{*}{\rotatebox[origin=c]{90}{HS}} & FCNN & 0.178 & $\mathbf{0.0}$ & - & - & - & - & - & - & - & - & - & - & - & - \\
 \cline{2-16}
 & BBB & 0.663 & 0.136 & 0.449 & 0.518 & 0.445 & 0.015 & 5.90e+12 & 0.264 & 0.132 & 2.649 & 0.27 & 0.343 & 0.227 & nan \\
 & HMC & 0.408 & 0.152 & 0.455 & 0.52 & 0.45 & 0.041 & 5.90e+12 & $\mathbf{0.126}$ & $\mathbf{0.064}$ & $\mathbf{1.003}$ & 0.145 & $\mathbf{0.193}$ & 0.125 & nan \\
 \cline{2-16}
 & NLM + 2S & 3740.057 & 2.27e+06 & $\mathbf{0.315}$ & $\mathbf{0.363}$ & $\mathbf{0.312}$ & 0.027 & $\mathbf{20707.105}$ & 0.38 & 0.19 & 3.926 & 0.384 & 0.551 & 0.264 & $\mathbf{41.024}$ \\
 & BBB + 2S & 0.417 & 0.361 & 0.396 & 0.454 & 0.392 & 0.049 & 5.90e+12 & 0.2 & 0.101 & 1.824 & 0.216 & 0.349 & 0.113 & nan \\
 & HMC + 2S & 2.739 & 3.456 & 0.486 & 0.561 & 0.482 & $\mathbf{0.003}$ & 5.98e+12 & 0.136 & 0.068 & 1.401 & $\mathbf{0.137}$ & 0.201 & $\mathbf{0.096}$ & nan \\
\end{tabular}

    \end{adjustbox}
    \label{tab:metrics_bundle}
\end{table}

\end{document}